\def \E {\mathbb{E}}
\def \x {\mathbf{x}}
\def \L {\mathcal{L}}
\def \D {\mathcal{D}}
\def \H {\mathcal{H}}
\def \w {\mathbf{w}}
\def \O {\mathcal{O}}
\def \R {\mathbb{R}}
\def \Y {\mathcal{Y}}
\def \P {\mathcal{P}}
\def \W {\mathcal{W}}
\def \N {\mathcal{N}}
\def \A {\mathcal{A}}
\def \a {\mathbf{a}}
\def \b {\mathbf{b}}
\def \vb {\bar{\v}}
\def \Rh {\hat{R}}
\def \C {\mathcal{C}}
\def \wh {\widehat{\w}}
\def \X {\mathcal{X}}
\def \thetah {\hat{\theta}}
\def \Ecal {\mathcal{E}}
\def \thetab {\bar{\theta}}
\def \vb {\bar{v}}
\def \qb {\bar{p}}
\def \rio {r}
\def \rioh {\hat{r}}
\def \BR {\mathcal{B}}
\def \DR {\mathrm{EB}}
\def \Acal {\mathcal{A}}
\def \LR {\mathsf{LR}}
\DeclareMathOperator*{\argmin}{arg\,min}
\renewcommand{\tilde}{\widetilde}
\renewcommand{\hat}{\widehat}
\newcommand \term[1]{\mathtt{term~(\mathtt{#1})}}
\newcommand\xqed[1]{%
  \leavevmode\unskip\penalty9999 \hbox{}\nobreak\hfill
  \quad\hbox{#1}}
\newcommand\remarkend{\xqed{$\triangleleft$}}
\newcommand\given[1][]{\:#1\vert\:}
\let\norm\undefined
\DeclarePairedDelimiter\norm{\lVert}{\rVert}
\DeclarePairedDelimiter\abs{\lvert}{\rvert}
\newcommand\inner[2]{\langle #1, #2 \rangle}
\newtheorem{myThm}{Theorem}
\newtheorem{myLemma}{Lemma}
\newtheorem{myProp}{Proposition}
\theoremstyle{definition}
\newtheorem{myAssum}{Assumption}
\newtheorem{myRemark}{Remark}
\newtheorem{myExample}{Example}
\renewcommand{\tilde}{\widetilde}
\renewcommand{\hat}{\widehat}
\def \A {\mathcal{A}}
\def \E {\mathbb{E}}
\def \D {\mathcal{D}}
\def \H {\mathcal{H}}
\def \R {\mathbb{R}}
\def \W {\mathcal{W}}
\def \X {\mathcal{X}}
\def \Y {\mathcal{Y}}
\def \x {\mathbf{x}}
\def \epsilon {\varepsilon}
\def \Ltld {\tilde{L}}
\def \Lh {\hat{L}}
\definecolor{DSgray}{cmyk}{0,1,0,0}
\newcommand{\savehyperref}[2]{\texorpdfstring{\hyperref[#1]{#2}}{#2}}
\renewcommand\@makefntext[1]{\parindent 1em\noindent #1}
\title{Adapting to Continuous Covariate Shift via \\Online Density Ratio Estimation}
\author{%
  \textbf{Yu-Jie Zhang}$ ^{1}$\thanks{Correspondence: Yu-Jie Zhang <yujie.zhang@ms.k.u-tokyo.ac.jp> and Peng Zhao <zhaop@lamda.nju.edu.cn>},~ \textbf{Zhen-Yu Zhang}$ ^{2}$,~ \textbf{Peng Zhao}$ ^3$,~ \textbf{Masashi Sugiyama}$ ^{2,1}$\\
  $^1$ The University of Tokyo, Chiba, Japan\\  
  $^2$ RIKEN AIP, Tokyo, Japan \\ 
  $^3$ National Key Laboratory for Novel Software Technology, Nanjing University, Nanjing, China
}
\begin{document}

\maketitle

\begin{abstract}
Dealing with distribution shifts is one of the central challenges for modern machine learning. One fundamental situation is the \mbox{\emph{covariate shift}}, where the input distributions of data change from the training to testing stages while the input-conditional output distribution remains unchanged. In this paper, we initiate the study of a more challenging scenario --- \emph{continuous} \mbox{covariate shift} --- in which the test data appear sequentially, and their distributions can shift continuously. Our goal is to adaptively train the predictor such that its prediction risk accumulated over time can be minimized. Starting with the importance-weighted learning, we theoretically show the method works effectively if the time-varying density ratios of test and train inputs can be accurately estimated. However, existing density ratio estimation methods would fail due to data scarcity at each time step. To this end, we propose an online density ratio estimation method that can appropriately reuse historical information. Our method is proven to perform well by enjoying a dynamic regret bound, which finally leads to an excess risk guarantee for the predictor. Empirical results also validate the effectiveness. 
\end{abstract}

\section{Introduction}
\label{sec:intro}
How to deal with distribution shifts is one of the central challenges for modern machine learning~\citep{Distsift'2008,CACM'21:deeplearning}, which is also the key requirement of robust artificial intelligence in open and dynamic environments~\citep{Dietterich17,NSR'22:Zhou-OpenML}. \emph{Covariate shift} is a representative problem setup and has attracted much attention~\citep{JSPI'2000:covariate-shift,CovirtSift'2012}, where the input density $\D(\x)$ varies from training data to test data but their input-conditional output $\D(y\given \x)$ remains unchanged. On the one hand, covariate shift serves as a foundational setup for theoretically understanding general distribution shifts~\citep{Annals'21:CovShift-similarity}, and on the other hand, it can encompass many real-world tasks such as brain-computer interface~\citep{journals/tbe/LiKKS10}, speaker identification~\citep{journals/sigpro/YamadaSM10} and audio processing~\citep{journals/taslp/HassanDN13}.

Existing works of coping with covariate shift mainly focused on the ``\mbox{one-step}'' adaptation, where the learner aims to train a model well-performed on a \emph{fixed} testing distribution. To mitigate the distribution discrepancy, a common and classic solution is the importance-weighting framework~\citep{CovirtSift'2012}, where one assigns an appropriate importance weight to each labeled training sample and then conducts weighted empirical risk minimization. The importance weight, also known as the density ratio of test and training inputs, usually needs to be estimated via a reasonable amount of unlabeled data sampled from the testing distribution~\citep{NIPS'06:KMM,books'DRE}. However, the one-step adaptation can be insufficient in many real-world covariate shift scenarios, especially when data are accumulated in an online fashion such that testing environments \emph{continuously shift} and only \emph{a few unlabeled samples} are observed at each time. For instance, in the speaker identification task, the speech features vary over time due to session dependent variation, the recording environment change, and physical conditions and emotions of the speakers~\citep{journals/sigpro/YamadaSM10}. Therefore, it is necessary to perform a prompt adaptation to the changes. 

Motivated by such demands, we initiate the study of \emph{continuous covariate shift}, where the learner's goal is to adaptively update the model to minimize the risk accumulated over time (see the formal definition in Section~\ref{sec:method_reduction}). To achieve this goal, we propose the \mbox{\textsc{Accous}} approach (\underline{A}dapt to \underline{C}ontinuous \underline{Co}variate shift with \underline{U}nlabeled \underline{S}tream) equipped with sound theoretical guarantees. Our approach is based on the classic importance-weighting framework yet requires innovations to make it applicable to the continuous shift scenario. Indeed, we theoretically identify that the importance-weighting works effectively if the cumulative estimation error of the time-varying density ratios of train and test inputs can be reduced. However, at each time step, a direct application of one-step density ratio estimation would lead to a high variance due to data scarcity; whereas reusing all previous data can be highly biased when testing distributions change dramatically. Thus, it is crucial to design an accurate estimation of \emph{time-varying} test-train density ratios by appropriately reusing historical information.

To combat the difficulty, we propose a generic reduction of the time-varying density ratio estimation problem to the online convex optimization~\citep{book'16:Hazan-OCO}: one can immediately obtain high-quality time-varying density ratios by a suitable online process to optimize its \emph{dynamic regret} over a certain sequence of loss functions. Our reduction is based on the Bregman divergence density ratio matching framework~\citep{AISM'12:DRE-BD} and applicable to various existing density ratio estimation models with specific configurations of the divergence functions. To minimize the dynamic regret, our key algorithmic ingredient of this online optimization is the \emph{online ensemble} structure~\citep{NIPS'18:Zhang-Ader,NIPS'20:sword}, where a group of base-learners is maintained to perform density ratio estimation with different lengths of historical data, and a meta-algorithm is employed to combine the predictions. As such, we can properly reuse historical information without knowing the cumulative intensity of the covariate shift. We further instantiate the reduction framework with the logistic regression model~\citep{JMLR'09:DRE-LR}. Letting $V_T = \sum_{t=2}^T\norm{\D_t(\x)-\D_{t-1}(\x)}_1$, we prove an $\tilde{\O}(T^{1/3}V_T^{2/3})$ dynamic regret bound for the density ratio estimator, which finally leads to an $\tilde{\O}(T^{-1/3}V_T^{1/3})$ averaged excess risk of the predictor trained by importance-weighted learning. The rate can hardly be improved even if one receives labels of the testing stream after prediction (see  more elaborations below Theorem~\ref{thm:DRE-dynamic-final} and Appendix~\ref{appendix:tight}). Finally, we conduct experiments to evaluate our approach, and the empirical results validate the theoretical findings. 

\textbf{Technical Contributions.} We note that online ensemble was also employed by~\citep{NIPS'22:Atlas} to handle continuous \emph{label shift}, another typical distribution shift assuming the change happens on the class prior $\D_t(y)$. However, their method crucially relies on the \emph{unbiasedness} of weights in the risk estimator, whereas density ratio estimators for covariate shift adaptation cannot satisfy the condition in general. Instead of pursuing unbiasedness, we delicately design an online optimization process to learn the density ratio estimator, which is versatile to be implemented with various models. Our methodology is very general and might be of broader use, such as relaxing the unbiasedness requirement in the continuous label shift. Besides, the $\tilde{\O}(T^{1/3}V_T^{2/3})$ dynamic regret bound for the logistic regression density ratio estimator is obtained non-trivially. Our bound essentially holds for online learning with exp-concave functions under noisy feedback. The only one achieving this~\citep{COLT'21:baby-strong-convex} holds in expectation and requires \emph{complicated} analysis on the Karush-Kuhn-Tucker (KKT) condition~\citep{boyd2004convex} of comparators. By contrast, our bound holds in high probability, and the analysis is greatly \emph{simplified} by virtue of exploiting the structure of comparators in our problem that they are essentially the minimizers of expected functions (hence without analyzing the KKT condition).
\vspace{-0.3mm}
\section{Preliminaries}
\label{sec:IWERM}
This section introduces the preliminaries and related work on the continuous covariate shift adaptation. We discuss the related work on non-stationary online learning in Appendix~\ref{appendix:related-work-OCO}

\textbf{One-Step Covariate Shift Adaptation.} Let $\D_0(\x,y)$ and $\D_1(\x,y)$ be the training and test distributions. The ``one-step'' adaptation problem studies how to minimize the testing risk $R_1(\w) = \E_{(\x,y)\sim\D_1(\x,y)}[\ell(\w^\top\x,y)]$ by the model trained with a labeled dataset $S_0 = \{\x_i,y_i\}_{i=1}^{N_0}$ sampled from $\D_0(\x,y)$ and unlabeled dataset $S_1 = \{\x_i\}_{i=1}^{N_1}$ sampled from $\D_1(\x)$. We call this one-step adaptation since the test distribution is \emph{fixed} and a reasonable number of unlabeled data $S_1$ is available.

\textbf{Importance-Weighted ERM.} A classic solution for the one-step covariate shift adaptation is the importance-weighted empirical risk minimization (IWERM), which mitigates the distribution shift by minimizing the weighted empirical risk $\tilde{R}_1(\w) = \E_{(\x,y)\sim S_0}[r^*_1(\x)\ell(\w^\top\x,y)]$, where $\rio_1^*(\x) = \D_1(\x)/\D_0(\x)$ is the importance weight. The risk $\tilde{R}_1(\w)$ is unbiased to $R_1(\w)$ and thus the learned model is consistent over the test distribution~\citep{JSPI'2000:covariate-shift}. The importance weighted learning was studied from the lens of variance-bias trade off~\citep{JSPI'2000:covariate-shift,NC'13:ruLISF}, cross validation~\citep{JMLR'07:IWCV}, model misspecification~\citep{conf/icml/WenYG14}, and deep neural network implementation~\citep{NIPS'20:rethinking}. All those are conducted under the one-step adaptation scenario.

\textbf{Density Ratio Estimation.} The importance weighted estimation, also known as the density ratio estimation (DRE)~\citep{books'DRE}, aims to estimate the density ratio $r_1^*(\x)$ using datasets input of $S_0$ and $S_1$. Various methods were proposed with different statistical models~\citep{AISM'12:DRE-BD,JMLR'09:DRE-LR,AISM'08:KLIEP,DataSift'09:KMM,IEICE'10:LSPC,JMLR'09:LSIF,ICML'21:NNBD}. All those methods focused on the one-step adaptation, and it is challenging to extend them to the continuous shift due to the limited unlabeled data at each time step. The work~\citep{NC:duPlessis+etal:2015} studied how to update the density ratio estimator with streaming data, but the ground-truth density ratio is assumed to be fixed. Recently, the time-varying density ratio was investigated in~\citep{arXiv'21:online-DRE}. Their problem setup fundamentally differs from ours as they assume sufficient online data at each iteration, while our challenge is dealing with limited data per iteration. The work~\citep{fakoor2023timevarying} proposed to learn the time-varying density ratios using all past data at each time. Although learning theory insights are provided in the paper, it is still unclear how the learned density ratios balance sample complexity with environmental shift intensity. In contrast, our density ratio estimator updates in an online fashion with dynamic regret guarantees.

\textbf{Continuous Distribution Shift.} For broader continuous distribution shift problems, the study~\citep{conf/nips/WuGSW21} focused on the label shift case and provided the first feasible solution. Then, the work~\citep{NIPS'22:Atlas} introduced modern non-stationary online learning techniques to the problem, developing the first method with dynamic regret guarantees. As shown in Remark~\ref{remark:online-label-shift}, our solution refines the previous methods~\citep{conf/nips/WuGSW21,NIPS'22:Atlas} by decoupling density ratio estimation from predictor training, paving the way for a theoretically-grounded method for continuous covariate shift. Similarly, the subsequent work~\citep{journals/corr/abs-2305-19570} developed a method for continuous label shift, which allows for training the predictor with various models by separately estimating the label probability through an online regression oracle. Another research~\citep{conf/icml/JangPLB22} studied the \emph{change detection} for continuous covariate shift. However, they only focused on identifying differences between the current and initial offline distributions, and it is still unclear how to update the model adaptively with the online data. Moreover, the method~\citep{conf/icml/JangPLB22} can only detect at a given time granularity, while ours can perform the model update at each round.
 
\section{Adapting to Continuous Covariate Shift}
\label{sec:method_reduction}
In this section, we formulate the problem setup of continuous covariate shift and then introduce our approach based on the IWERM framework and online density ratio estimation.

\subsection{Problem Setup}
There are two stages in continuous covariate shift. The first one is the \emph{offline initialization stage}, where the learner can collect a reasonable number of label data $S_0 = \{\x_n,y_n\}_{n=1}^{N_0}$ from the initial distribution $\D_0$. Then, we come to the \emph{online adaptation stage}, where the unlabeled testing data arrive sequentially, and the underlying distributions can continuously shift. Consider a $T$-round online adaptation. At each round $t\in[T] \triangleq \{1,\ldots,T\}$, the learner will receive an unlabeled dataset $S_t = \{\x_n\}_{n=1}^{N_t}$ sampled from the underlying distribution $\D_t$. Without loss of generality, we consider $N_t = 1$. We have the following continuous covariate shift condition.
\begin{myAssum}[Continuous Covariate Shift]
\label{assump:covariate-shift}
For all $\x\in\X$ in the feature space, $y\in\Y$ in the label space, and any $t\in[T]$, we have $$\D_t(y\given \x) = \D_0(y\given x)~\mbox{ and }~\rio^*_t(\x) = {\D_t(\x)}/{\D_0(\x)}\leq B <\infty.$$
\end{myAssum}

We note that there are emerging discussions on the necessity of covariate shift adaptation~\citep{Annals'21:CovShift-similarity,conf/icml/WenYG14}. When there are infinite number of training samples, a well-specified large and over-parametrized model can be trained to approximate $\D_0(y\given \x)$ and there is no need to perform covariate shift adaptation. However, in the finite-sample cases, one would prefer to train a model with constraint complexity to ensure its generalization ability, which leads to model misspecification and covariate shift adaptation is indeed necessary. In this paper, we study the continuous covariate shift under a misspecified hypothesis space $\W$, which generalizes the standard one-step covariate shift problem (see~\citep{CovirtSift'2012} and reference therein) to the continuous shift case. Our goal is to train a sequence of model $\{\w_t\}_{t=1}^T$ that are comparable with the best model $\w_t^* \in \argmin_{\w\in\W} R_t(\w)$ in the hypothesis space at each time. Consequently, we take the following average excess risk as the performance measure:
\begin{equation}
\label{eq:cumulative-excess-risk}
    \mathfrak{R}_T(\{\wh_t\}_{t=1}^T) \triangleq \frac{1}{T}\left(\sum_{t=1}^T R_t(\wh_t) - \sum_{t=1}^T R_t(\w_t^*)\right).
\end{equation}

We end this part by listing several common notations used throughout the paper. We denote by $R = \max_{\x\in\X} \norm{\x}_2$ the maximum norm of the input and by $D = \max_{\w_1,\w_2\in\W}\norm{\w_1-\w_2}_2$ the diameter of the hypothesis space $\W \subseteq \R^d$. The constant $G = \max_{\x\in\X,y\in\Y,\w\in\W}\norm{\nabla\ell(\w^\top\x,y)}_2$ is the maximum gradient norm and $L = \max_{\x\in\X,y\in\Y,\w\in\W}\abs{\ell(\w^\top\x,y)}$ is the upper bound of the loss values. We use the $\tilde{\O}(\cdot)$-notation to hide dependence on logarithmic factors of $T$.

\subsection{Importance Weighted ERM for Continuous Shift}
\label{sec:IW-continous}
Our algorithm is based on the importance weighted learning. Consider the scenario where the learner has a predefined $\hat{r}_t:\X\rightarrow[0,B]$ as an estimation for the true density ratio $\rio_t^*(\x) = \D_t(\x)/\D_0(\x)$ for each time $t\in[T]$. Then, the predictor can be trained by the importance weighted ERM method:
\begin{align}
\label{eq:IWERM-continuous}
\wh_t = \argmin\nolimits_{\w\in\W} \E_{\x\sim S_0}[\rioh_t(\x)\ell(\w^\top\x,y)].
\end{align}
The averaged excess risk of the predictor is closely related to the quality of the density ratio estimator.

\begin{myProp}
\label{lem:IWERM-continuous}
For any $\delta\in(0,1]$, with probability at least $1-\delta$, IWERM~\eqref{eq:IWERM-continuous} with the estimator $\rioh_t(\x)$ ensures $\mathfrak{R}_T(\{\wh_t\}_{t=1}^T)\leq 2\sum_{t=1}^T\E_{\x\sim S_0}\big[\vert \rioh_t(\x) - \rio^*_t(\x)\vert\big] / T+ \O(\log({T}/{\delta})/\sqrt{N_0})$ .
\end{myProp}

In above, the $\O({\log T}/{\sqrt{N_0}})$ term measures the \emph{generalization gap} of the predictor since it is trained over the empirical data $S_0$ instead of $\D_0$. Such a rate is tight up to a logarithmic factor in $T$. Indeed, considering a stationary environment (i.e., $\D_t(\x,y) = \D_1(\x,y)$) and an exact density ratio estimation (i.e., $\rioh_t(\x) = \rio^*_t(\x)$), Proposition~\ref{lem:IWERM-continuous} indicates that the averaged model $\overline{\w}_T = {\sum_{t=1}^T\wh_t}/{T}$ enjoys an excess risk bound of $R_1(\overline{\w}_T) - R_1(\w_1^*)\leq \O(\log T/\sqrt{N_0})$, matching the lower bound for importance weighted learning~\citep[Proposition 2]{NIPS:IW-bound} up to an $\O(\log T)$ factor.

Our main focus lies on the ${\sum_{t=1}^T\E_{\x\sim S_0}[\vert \rioh_t(\x) - \rio^*_t(\x)\vert]}/{T}$ term, which is the averaged estimation error of the density ratio estimator $\rioh_t$ to the time-varying ground truth $r_t^*$. To minimize the estimation error, a fundamental challenge comes from the \emph{unknown non-stationarity} exhibited in the online environments --- how to select a right amount of historical data to reuse for each iteration? Intuitively, if the environments change slowly, it would be preferable to reuse all historical data to construct the density ratio estimator $\rioh_t$. However, when distribution shifts occur frequently, earlier data could be useless even potentially harmful. In such scenarios, training the density ratio estimator with the most recent data would be a more rational strategy. Furthermore, even if we could capture a roughly stationary period, it remains unclear how to update the density ratio estimator in an online manner with guarantees. To our knowledge, online density ratio estimation is underexplored in the literature, even for the stationary testing streaming, not to mention the more challenging non-stationary setup. 


\subsection{Online Density Ratio Estimation}
\label{sec:online-DRE}
Here, we present a generic reduction of the online density ratio estimation problem to a \emph{dynamic regret minimization} problem. This novel perspective paves our way for designing an algorithm in tackling the non-stationarity of environments, as mentioned in Section~\ref{sec:IW-continous}.

\textbf{Bregman Divergence Density Ratio Matching.} Our reduction is based on the Bregman divergence density ratio matching~\citep{AISM'12:DRE-BD}, a general framework that unifies various existing DRE methods. Specifically, in the framework, the discrepancy between the ground-truth density ratio function $r_t^*$ and any density ratio function $r$ is measured by the expected Bregman divergence over $\D_0(\x)$ defined by
\begin{equation}
    \label{eq:expected-Bregman-divergence}
    \DR_\psi(\rio^*_t\Vert\rio) = \E_{\x\sim \D_0(\x)}\left[\BR_\psi\left(\rio^*_t(\x)\Vert\rio(\x)\right)\right],
\end{equation}
where $\BR_\psi(a\Vert b) \triangleq \psi(a) - \psi(b) - \partial\psi(b)(a-b)$ is the Bregman divergence and $\psi:\mathrm{dom}\ \psi\rightarrow\R$ is the associated divergence function. The expected Bregman divergence can be equally rewritten as $\DR_{\psi}(\rio^*_t\Vert\rio) = L^\psi_t(\rio) - L^\psi_t(\rio_t^*)$, where $L_t^\psi$ is the loss for the density ratio function defined by
\begin{equation}
\label{eq:expected-loss}
\begin{split}
L^\psi_t(\rio) ={}& \E_{\x\sim\D_0(\x)}\big[\partial \psi(\rio(\x))\rio(\x) - \psi(\rio(\x))\big] - \E_{\x\sim \D_t(\x)}\big[\partial\psi(\rio(\x))\big].
\end{split}
\end{equation}
As a consequence, one can train the density ratio estimator by minimizing the loss $L_t^\psi(r)$. We have $r_t^* \in \argmin_{r\in\H} L_t^\psi(r)$ when $\H$ is the set of all measurable functions.

The Bregman divergence-based density ratio matching framework takes various DRE methods as special cases. For instance, choosing the divergence function as $\psi_{\mathsf{LS}}(t) = (t-1)^2/2$ yields LSIF~\citep{JMLR'09:LSIF} and KMM~\citep{DataSift'09:KMM}; choosing $\psi_{\mathsf{LR}}(t) = t\log t -(t+1) \log (t+1)$ leads to the logistic regression method~\citep{JMLR'09:DRE-LR}; and one can also recover the UKL~\citep{NIPS'07:UKL} and KLLEP~\citep{AISM'08:KLIEP} with $\psi_{\mathsf{KL}}(t) = t\log t-t$. 

\textbf{Online DRE via Dynamic Regret Minimization.} For a single-round density ratio estimation, the Bregman divergence-based framework suggests to train the density ratio estimator $\rioh_t$ by minimizing the gap $L_t^\psi(\rioh_t) - L_t^\psi(\rio_t^*)$. Therefore, to derive an estimator sequence that performs well over time, we utilize the cumulative loss gap $\sum_{t=1}^T L_t^{\psi}(\rioh_t) - \sum_{t=1}^T L_t^{\psi}(\rio_t^*)$ as a performance measure. Indeed, the cumulative loss gap serves as an upper bound of the estimation error of density ratio estimators.
\begin{myProp}
\label{lemma:BD-conversion}
Let $\psi$ be a $\mu$-strongly convex function. For any density ratio estimator sequence $\{\rioh_t\}_{t=1}^T$, we have $\frac{1}{T}\sum_{t=1}^T\E_{\x\sim \D_0}[\vert \rioh_t(\x) - \rio_t^*(\x)\vert]\leq\sqrt{{{2}\big(\sum_{t=1}^TL_t^\psi(\rioh_t) - \sum_{t=1}^T L_t^\psi(\rio_t^*)\big)/(\mu T)}}.$
\end{myProp}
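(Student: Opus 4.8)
The plan is to exploit two facts: (i) strong convexity of $\psi$ gives a lower bound on the Bregman divergence in terms of a squared distance, and (ii) the expected Bregman divergence equals the per-round loss gap $L_t^\psi(\rioh_t) - L_t^\psi(\rio_t^*)$, which was established by direct calculation just before the statement. Combining these converts the cumulative loss gap into a bound on a cumulative \emph{squared} $L^2(\D_0)$ distance between $\rioh_t$ and $\rio_t^*$; a final Cauchy--Schwarz step upgrades this to the cumulative $L^1$-type estimation error we want, at the cost of the $\sqrt{T}$ factor appearing in the statement.

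Concretely, first I would recall that $\mu$-strong convexity of $\psi$ implies, for any scalars $a,b$ in $\mathrm{dom}\,\psi$, the pointwise inequality $\BR_\psi(a\Vert b) \geq \tfrac{\mu}{2}(a-b)^2$. Applying this with $a = \rio_t^*(\x)$, $b = \rioh_t(\x)$ and taking the expectation over $\x\sim\D_0(\x)$ yields
\begin{align*}
\DR_\psi(\rio^*_t\Vert\rioh_t) = \E_{\x\sim\D_0}\big[\BR_\psi(\rio_t^*(\x)\Vert\rioh_t(\x))\big] \geq \frac{\mu}{2}\,\E_{\x\sim\D_0}\big[(\rioh_t(\x) - \rio_t^*(\x))^2\big].
\end{align*}
Next I would invoke the identity $\DR_\psi(\rio^*_t\Vert\rioh_t) = L_t^\psi(\rioh_t) - L_t^\psi(\rio_t^*)$ stated in the excerpt, so that $\E_{\x\sim\D_0}[(\rioh_t(\x) - \rio_t^*(\x))^2] \leq \tfrac{2}{\mu}\big(L_t^\psi(\rioh_t) - L_t^\psi(\rio_t^*)\big)$. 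Summing over $t\in[T]$ gives a bound on $\sum_{t=1}^T \E_{\x\sim\D_0}[(\rioh_t(\x)-\rio_t^*(\x))^2]$ by $\tfrac{2}{\mu}$ times the cumulative loss gap.

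Finally, to pass from the squared error to $\sum_{t=1}^T \E_{\x\sim\D_0}[|\rioh_t(\x)-\rio_t^*(\x)|]$, I would apply Jensen's inequality to move the absolute value inside as $\E_{\x\sim\D_0}[|\rioh_t(\x)-\rio_t^*(\x)|] \leq \sqrt{\E_{\x\sim\D_0}[(\rioh_t(\x)-\rio_t^*(\x))^2]}$, and then Cauchy--Schwarz over the $T$ rounds: $\sum_{t=1}^T a_t \leq \sqrt{T}\sqrt{\sum_{t=1}^T a_t^2}$ with $a_t = \sqrt{\E_{\x\sim\D_0}[(\rioh_t(\x)-\rio_t^*(\x))^2]}$. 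Chaining these with the summed bound from the previous step produces exactly $\sqrt{\tfrac{2T}{\mu}\big(\sum_{t=1}^T L_t^\psi(\rioh_t) - \sum_{t=1}^T L_t^\psi(\rio_t^*)\big)}$.

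None of the steps is a genuine obstacle; this is a short chain of standard inequalities. The one point requiring a little care is the pointwise strong-convexity bound $\BR_\psi(a\Vert b)\geq \tfrac{\mu}{2}(a-b)^2$: one must make sure the definition of $\mu$-strong convexity in force (for the one-dimensional generating function $\psi$, on the relevant domain of density-ratio values, which is bounded by $B$ under Assumption~\ref{assump:covariate-shift}) indeed delivers this quadratic lower bound on the Bregman divergence, and that $\rioh_t(\x)$ stays in $\mathrm{dom}\,\psi$ so the divergence is well-defined. Everything else — Jensen, Cauchy--Schwarz, and the already-derived identity for $\DR_\psi$ — is routine.
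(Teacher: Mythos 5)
Your proposal is correct and matches the paper's own proof essentially line for line: the paper also uses $\mu$-strong convexity of $\psi$ to lower-bound the Bregman divergence by $\tfrac{\mu}{2}(\rioh_t-\rio_t^*)^2$, invokes the identity $\DR_\psi(\rio_t^*\Vert\rioh_t)=L_t^\psi(\rioh_t)-L_t^\psi(\rio_t^*)$, and then applies Jensen plus Cauchy--Schwarz to upgrade the cumulative squared $L^2(\D_0)$ error to the cumulative $L^1$ error at the cost of a $\sqrt{T}$ factor. The only difference is ordering (the paper performs the Jensen/Cauchy--Schwarz conversion first and the strong-convexity step second), which is immaterial.
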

Proposition~\ref{lemma:BD-conversion} indicates that it suffices to optimize the cumulative loss gap to perform online density ratio estimation. Such a measure quantifies the performance difference between the online algorithm (that yields the estimated ratio sequence $\{\rioh_t\}_{t=1}^T$) and a sequence of \emph{time-varying} comparators (the true ratio sequence $\{r_t^*\}_{t=1}^T$). This is exactly the \emph{dynamic regret} in online learning literature~\citep{ICML'03:zinkvich,OR'15:dynamic-function-VT}. As such, we have reduced the online density ratio estimation to a problem of dynamic regret minimization over loss functions $\{L_t^{\psi}\}_{t=1}^T$ against comparators $\{r_t^*\}_{t=1}^T$.

Since the loss function involves the expectation over the underlying distribution $\D_0(\x)$ (see the definition in~\eqref{eq:expected-loss}), we need a counterpart result with respect to empirical data $S_0$. 
\begin{myThm}
\label{thm:BD-conversion}
Let $\psi$ be a $\mu$-strongly convex function satisfying $t\partial^3\psi(t)\leq 0$ and $\partial^3\psi(t)\leq 0$ for all $t\in\mathrm{dom}\ \psi$. Let $\H_\theta = \{\x\mapsto h(\x,\theta)\given \theta\in\Theta\}$ be a hypothesis space of density ratio functions parameterized by a finite-dimensional bounded set $\Theta \triangleq \{\theta\in\R^d\mid \norm{\theta}_2\leq S\}$ with a certain link function $h: \X \times \Theta \mapsto \R$. Denote by $[z]_+ \triangleq \max\{z,0\}$. Then, for any density ratio estimator $\rioh_t\in\H_\theta$, the empirical estimation error is bounded by
\begin{align*}
\frac{1}{T}\sum_{t=1}^T \E_{\x\sim S_0(\x)}\big[\vert\rio^*_t(\x) - \rioh_t(\x) \vert\big] \leq \sqrt{\frac{4}{\mu T}\bigg[\sum_{t=1}^T\Ltld_t^\psi(\rioh_t) - \sum_{t=1}^T\Ltld_t^\psi(\rio_t^*)\bigg]_+} + \O\left(\frac{\sqrt{d}\log({T}/{\delta})}{\mu\sqrt{N_0}}\right),
\end{align*}
provided that $h(\x,\theta)$ is bounded for any $\x\in\X$ and $\theta\in\Theta$ and Lipschitz continuous. In the above,
\begin{equation}
\label{eq:empirical-loss}
\begin{split}
    \tilde{L}_t^\psi(r) = {}& \E_{\x\sim S_0}\left[\partial \psi(\rio(\x))\rio(\x) - \psi(\rio(\x))\right]  - \E_{\x\sim \D_t(\x)}\left[\partial\psi(\rio(\x))\right]
\end{split}
\end{equation}
is the empirical approximation of the expected loss $L_t^\psi$ using the data $S_0$.
\end{myThm}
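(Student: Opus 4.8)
The plan is to combine Proposition~\ref{lemma:BD-conversion} (the population-level conversion from cumulative loss gap to cumulative estimation error) with a uniform concentration argument that controls the gap between the empirical loss $\Ltld_t^\psi$ and its population counterpart $L_t^\psi$. Concretely, I would first bound the \emph{empirical} estimation error $\sum_t \E_{\x\sim S_0}[|\rio_t^*(\x) - \rioh_t(\x)|]$ in terms of the \emph{population} estimation error $\sum_t \E_{\x\sim \D_0}[|\rio_t^*(\x) - \rioh_t(\x)|]$ plus an empirical-process deviation term. Since each $\rioh_t \in \H_\theta$ and $\rio_t^* \leq B$, and the link function $h(\x,\theta)$ is bounded and Lipschitz in $\theta$ over the bounded set $\Theta \subseteq \R^d$, the class $\{\x \mapsto |\rio_t^*(\x) - h(\x,\theta)| : \theta \in \Theta\}$ has covering numbers controlled by a $d$-dimensional parametric entropy bound; a standard uniform-deviation inequality (e.g.\ via Rademacher complexity or a covering-number union bound with Hoeffding/Bernstein) then gives that, with probability at least $1-\delta$ over $S_0$, for \emph{every} $\theta\in\Theta$ and every $t$,
\begin{equation*}
\Big| \E_{\x\sim S_0}[|\rio_t^*(\x) - h(\x,\theta)|] - \E_{\x\sim\D_0}[|\rio_t^*(\x)-h(\x,\theta)|] \Big| \leq \O\!\left(\frac{\sqrt{d}\log(T/\delta)}{\sqrt{N_0}}\right),
\end{equation*}
and summing over $t\in[T]$ contributes the $\O(\sqrt{d}\,T\log(T/\delta)/(\mu\sqrt{N_0}))$ term (the $1/\mu$ appearing once we route everything through the strong-convexity constant, or can be absorbed since $\mu$ is a fixed constant of $\psi$).

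Next I would relate the population estimation error to the \emph{empirical} loss gap rather than the population loss gap. Applying Proposition~\ref{lemma:BD-conversion} gives $\sum_t \E_{\x\sim\D_0}[|\rioh_t - \rio_t^*|] \leq \sqrt{(2T/\mu)\big(\sum_t L_t^\psi(\rioh_t) - \sum_t L_t^\psi(\rio_t^*)\big)}$, so it remains to show $\sum_t L_t^\psi(\rioh_t) - \sum_t L_t^\psi(\rio_t^*) \leq \big[\sum_t \Ltld_t^\psi(\rioh_t) - \sum_t \Ltld_t^\psi(\rio_t^*)\big]_+ + \O(\sqrt{d}\,T\log(T/\delta)/\sqrt{N_0})$ up to the $\mu$-scaling. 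Comparing~\eqref{eq:expected-loss} and~\eqref{eq:empirical-loss}, the difference $L_t^\psi(r) - \Ltld_t^\psi(r)$ depends only on the first expectation (the second, over $\D_t(\x)$, is identical in both), namely $\E_{\x\sim\D_0}[\partial\psi(r)\,r - \psi(r)] - \E_{\x\sim S_0}[\partial\psi(r)\,r - \psi(r)]$. The function $z\mapsto \partial\psi(z)z - \psi(z)$ is $C^1$ with derivative $z\partial^2\psi(z)$, hence Lipschitz on the bounded range of $h$; composing with the Lipschitz-in-$\theta$ link $h(\x,\theta)$ again yields a $d$-dimensional parametric class, and the same uniform-concentration bound controls $\sup_{\theta}|L_t^\psi(h(\cdot,\theta)) - \Ltld_t^\psi(h(\cdot,\theta))|$ by $\O(\sqrt{d}\log(T/\delta)/\sqrt{N_0})$ — applied both at $\rioh_t$ and at $\rio_t^*$ (note $\rio_t^*$ need not lie in $\H_\theta$, but it is bounded by $B$, so a separate two-point Hoeffding bound handles $L_t^\psi(\rio_t^*) - \Ltld_t^\psi(\rio_t^*)$ directly without a covering argument). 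Chaining these, $\sum_t L_t^\psi(\rioh_t) - \sum_t L_t^\psi(\rio_t^*) \leq \sum_t \Ltld_t^\psi(\rioh_t) - \sum_t \Ltld_t^\psi(\rio_t^*) + \O(\sqrt{d}\,T\log(T/\delta)/\sqrt{N_0})$; taking the positive part on the right (valid since the left side, being a sum of Bregman divergences $\DR_\psi(\rio_t^*\Vert\rioh_t)\geq 0$, is nonnegative) and then using $\sqrt{a+b}\leq\sqrt{a}+\sqrt{b}$ splits the square root into the stated empirical-gap term $\sqrt{(4T/\mu)[\cdots]_+}$ (the constant growing from $2$ to $4$ to absorb cross terms) plus a $\sqrt{T \cdot T\log(T/\delta)/\sqrt{N_0}}$-type term, which simplifies to $\O(\sqrt{d}\,T\log(T/\delta)/\sqrt{N_0})$ after noting $\sqrt{T\cdot\sqrt{d}T\log(T/\delta)/\sqrt{N_0}}$ is of the same order once we are generous with the logarithmic and dimension factors — or, more cleanly, by keeping the deviation term outside the square root from the start via the triangle inequality on estimation errors.

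The role of the curvature conditions $t\partial^3\psi(t)\leq 0$ and $\partial^3\psi(t)\leq 0$ deserves a word: they are not needed for the concentration step but ensure that the Bregman-matching loss $L_t^\psi$ (and $\Ltld_t^\psi$) has the structural properties — in particular exp-concavity / a well-behaved second-order expansion in the parameter $\theta$ — that make the downstream dynamic-regret analysis go through; for the present theorem they guarantee that $z\mapsto \partial\psi(z)z-\psi(z)$ is monotone and that $\partial\psi$ is concave, which keeps the relevant functions Lipschitz on the bounded domain and is what licenses the empirical-process bounds above. I expect the main obstacle to be the \emph{uniform}-over-$\Theta$-and-over-$t$ concentration: one must be careful that the $\sqrt{d}$ (rather than, say, a VC- or pseudo-dimension-dependent quantity) is the right entropy parametrization, which requires the Lipschitz continuity of $h$ in $\theta$ to be genuinely quantitative, and that the union over $t\in[T]$ only costs a $\log T$ factor (handled by a union bound since the $\D_t$ enter only through the second, non-random-in-$S_0$ expectation, so the randomness in $S_0$ is shared across all $t$ and the deviation class is the single class $\{\partial\psi(h(\cdot,\theta))h(\cdot,\theta)-\psi(h(\cdot,\theta)) - |\rio_t^*(\cdot) - h(\cdot,\theta)| \text{-type functions}\}$ indexed by $\theta$, with $t$ affecting only the fixed comparator $\rio_t^*$). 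Getting the bookkeeping right so that the final bound reads exactly $\sqrt{(4T/\mu)[\cdot]_+} + \O(\sqrt{d}\,T\log(T/\delta)/(\mu\sqrt{N_0}))$ — in particular that the $1/\mu$ and the factor $4$ emerge from splitting the square root and from the strong-convexity inversion — is the remaining routine-but-delicate part.
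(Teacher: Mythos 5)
Your plan takes a genuinely different route from the paper, and the route has a quantitative gap that prevents it from matching the stated bound. The paper \emph{never} invokes Proposition~\ref{lemma:BD-conversion} inside this proof; instead it Taylor-expands the two pieces $f_1(z)=\partial\psi(z)z-\psi(z)$ and $f_2(z)=\partial\psi(z)$ of $\Ltld_t^\psi$ around $\rio_t^*(\x)$, using the sign conditions $t\partial^3\psi(t)\leq 0$ and $\partial^3\psi(t)\leq 0$ to get $\partial^2 f_1\geq\mu$ and $\partial^2 f_2\leq 0$. This yields a \emph{pointwise} per-round inequality $\frac{\mu}{2}\E_{S_0}[(\rioh_t-\rio_t^*)^2]\leq \Ltld_t^\psi(\rioh_t)-\Ltld_t^\psi(\rio_t^*)+U_t$, where the residual $U_t$ is a population-minus-empirical deviation of a specific linear functional of $\rioh_t-\rio_t^*$. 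A Bernstein plus self-bounding argument (Lemma~\ref{lem:concentration-U1}) then controls $U_t$ by $\O(\log(N(\H_\theta,\cdot)/\delta)/N_0)+\frac{\mu}{4}\E_{S_0}[(\rioh_t-\rio_t^*)^2]$, the latter term being absorbed into the left-hand side. Crucially this gives a $1/N_0$ rate on the \emph{squared} quantity; only at the very end is Cauchy--Schwarz applied, so the final additive term is $\sqrt{T\cdot T\cdot d\log/N_0}=\O(T\sqrt{d\log}/\sqrt{N_0})$. Your description of the sign conditions as only relevant to the downstream dynamic-regret analysis is therefore incorrect — they are the engine of this very proof.

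Your route places the generalization error in the wrong place relative to the square root. You propose $\sum_t L_t^\psi(\rioh_t)-L_t^\psi(\rio_t^*)\leq \sum_t\Ltld_t^\psi(\rioh_t)-\Ltld_t^\psi(\rio_t^*)+\O(T\sqrt{d\log(T/\delta)/N_0})$ and then feed this into the $\sqrt{2T/\mu\,(\cdot)}$ of Proposition~\ref{lemma:BD-conversion}. Splitting the square root gives a term $\sqrt{(T/\mu)\cdot T\sqrt{d\log/N_0}}=\Theta\!\big(T(d\log)^{1/4}/(\sqrt{\mu}\,N_0^{1/4})\big)$, i.e.\ an $N_0^{-1/4}$ dependence, which is strictly worse than the claimed $N_0^{-1/2}$ whenever $N_0\gtrsim d\log^3/\mu^2$ — exactly the regime of interest. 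Your parenthetical that $\sqrt{T\cdot\sqrt{d}T\log/\sqrt{N_0}}$ is ``of the same order'' is where the bookkeeping breaks: it is not, and no amount of generosity with $\log$ or $d$ factors repairs the exponent on $N_0$. To recover the claimed rate you would either need to (i) argue about \emph{squared} deviations with a self-bounding Bernstein argument before ever taking a square root, as the paper does, or (ii) show a fast $1/N_0$ rate for the uniform deviation of $L_t^\psi-\Ltld_t^\psi$, which a plain Hoeffding/Rademacher covering bound will not give. The surrounding scaffolding — parametric covering in $d$ dimensions, union bound over $t$ costing only $\log T$, handling $\rio_t^*$ by boundedness — is correct, but the core mechanism is missing.
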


\begin{myRemark}[{assumptions on $\psi$}]
We imposed certain assumptions on the divergence function $\psi$ in Theorem~\ref{thm:BD-conversion}. One can check these conditions hold for commonly used $\psi$ in density ratio estimation, including \emph{all} the divergence functions mentioned earlier ($\psi_{\LR}$ and $\psi_{\mathsf{KL}}$ are strongly convex when the inputs are upper bounded, which can be satisfied with suitable choices of $\H_\theta$). Section~\ref{sec:method_LR} will present an example with $\psi_{\LR}$ to show how the conditions are satisfied. \remarkend
\end{myRemark}

Theorem~\ref{thm:BD-conversion} shows that we can immediately obtain a sequence of high-quality density ratio estimators $\{\rio^*_t\}_{t=1}^T$ by minimizing the dynamic regret with respect to $\{\Ltld_t^\psi\}_{t=1}^T$,
\begin{equation}
    \label{eq:dynamic-regret-L}
    \mathbf{Reg}_T^{\mathbf{d}}(\{\Ltld_t^\psi, \rio_t^*\}_{t=1}^T) = \sum\nolimits_{t=1}^T\Ltld_t^\psi(\rioh_t) - \sum\nolimits_{t=1}^T\Ltld_t^\psi(\rio_t^*).
\end{equation}
One caveat is that the second term of $\Ltld^{\psi}_t$ in the definition~\eqref{eq:empirical-loss} requires the knowledge of underlying distribution $\D_t$, which is unavailable. Empirically, we can only observe $\hat{L}_t^\psi$ defined below, building upon the empirical observations $S_t \sim \D_t$,
\begin{equation}
\label{eq:empirical-surrogate}
\begin{split}
\hat{L}_t^\psi(r) ={}& \E_{\x\sim S_0}\left[\partial \psi(\rio(\x))\rio(\x) - \psi(\rio(\x))\right]   -  \E_{\x\sim S_t}\left[\partial\psi(\rio(\x))\right].
\end{split}
\end{equation}
That said, we need to design an online optimization process to minimize the dynamic regret~\eqref{eq:dynamic-regret-L} defined over $\Ltld^\psi_t$ based on the observed loss $\{\hat{L}_t^\psi\}_{t=1}^T$. Since the time-varying comparator $\rio_t^*$ in~\eqref{eq:dynamic-regret-L} is \emph{not} the minimizer of the observed loss $\hat{L}_t^\psi$ (but rather the minimizer of the expected loss $L_t^\psi$ defined in~\eqref{eq:expected-loss}), directly minimizing the empirical loss $\Lh_t^\psi$ will lead to a high estimation error. In the next section, we introduce how to optimize the dynamic regret with the \emph{online ensemble} framework developed in recent studies of non-stationary online convex optimization~\citep{NIPS'18:Zhang-Ader,NIPS'20:sword,COLT'21:baby-strong-convex}. 

\begin{myRemark}[comparison with previous work]
\label{remark:online-label-shift}
For the continuous label shift~\citep{NIPS'22:Atlas}, the online ensemble framework was employed to train the predictor $\w_t$. However, the previous method crucially relies on the construction of an unbiased importance ratio estimator satisfying $\E_{\D_t}[\hat{r}_t(\x)] = r_t^*(\x)$. Such a favorable property is hard to be satisfied in the covariate shift case. For example, in the Bregman divergence density ratio matching framework, one can only observe an unbiased loss $\Lh^\psi_t(r)$, whose minimizer $\hat{r}_t = \argmin_r \hat{L}^\psi_t(r)$ is not unbiased to the true value $r_t^*$ in general. To this end, we disentangle the model training and importance weight estimation process in this paper. Our reduction holds for the general Bergman divergence matching framework and thus can be initiated with different models (not necessarily unbiased). It is possible to extend our framework to continuous label shift problem with other importance weight estimators besides the unbiased one~\citep{ICML'18:BBSE} used in~\citep{NIPS'22:Atlas}.\remarkend
\end{myRemark}
\section{Instantiation: Logistic Regression Model}
\label{sec:method_LR} 
When the loss function $\Lh_t^\psi$ is non-convex, it is generally intractable to conduct the online optimization, regardless of minimizing the standard regret or the strengthened dynamic regret. Fortunately, the attained loss functions are convex or enjoy even stronger curvature with the properly chosen hypothesis space and divergence function. In this section, we instantiate our framework with the logistic regression model. A corresponding online DRE method is presented with dynamic regret guarantees.

\begin{myExample}[Logistic regression model]    
\label{example:logistic-regression}
Consider the function $\psi = \psi_{\LR} \triangleq t\log t - (t+1)\log (t+1)$ and the hypothesis space $H^{\LR}_\theta = \{\x\mapsto \exp\left(-\theta^\top\phi(\x)\right)\mid \norm{\theta}_2\leq S\}$. Here, $\phi:\X\mapsto\R^d$ represents a specific basis function with bounded norm $\norm{\phi(\x)}_2\leq R$, which could be, for instance, the feature representation extractor from a deep neural network. Then, the loss function $\Lh_t^{\psi}(\theta)$ as per~\eqref{eq:empirical-surrogate} becomes
\begin{align*}
\Lh_t^{\psi}(\theta) = \frac{1}{2}\Big(\E_{S_0} [\log(1+e^{-\phi(\x)^\top\theta})] + \E_{S_t}[\log(1+e^{\phi(\x)^\top\theta})]\Big).
\end{align*}  
Let $\beta = \exp(SR)$. Then, the output of any density ratio function $r\in\H_\theta$ is bounded by $[1/\beta,\beta]$. It can be validated that $\psi_{\LR}$ is $1/(\beta +\beta^2)$-strongly convex and satisfies the condition $\partial^3\psi_{\LR}(z)\leq 0$ required by Theorem~\ref{thm:BD-conversion}. Besides, the logistic loss is a \emph{$1/(1(1+\beta))$-expconcave function} and \emph{$R^2/2$-smooth function}, presenting a favorable function properties for online convex optimization~\citep{book'16:Hazan-OCO}.
\end{myExample}

We note that the other two divergence functions $\psi_{\mathsf{LS}}$ and $\psi_{\mathsf{KL}}$ also exhibit desirable function properties as $\psi_{\LR}$. When choosing the hypothesis space $H_{\theta} = \{\x\mapsto\theta^\top\phi(\x)\mid \theta\in\Theta\}$, the methods recover the the uLISF~\citep{JMLR'09:LSIF} and UKL~\citep{NIPS'07:UKL} density ratio estimators equipped with the generalized linear model. Our analysis is also applicable in the two cases. More details are provided in Appendix~\ref{appendix:other-choice}.

\subsection{Density Ratio Estimation via Online Ensemble}
\label{sec:online-ensemble-method}
In this part, we introduce our online ensemble method for online DRE with the logistic regression model. Since the logistic regression loss is exp-concave,~\citep{COLT'21:baby-strong-convex} shows that one can employ the following-the-leading-history (FLH) method~\citep{ICML'09:Hazan-adaptive} to minimize the dynamic regret. However, a caveat is that the observed loss $\Lh_t$ is only an empirical estimation of $\Ltld_t$ established on few online data $S_t$. The result of~\citep{COLT'21:baby-strong-convex} only implies an \emph{expected} bound for our problem. 

We twisted the FLH algorithm to achieve a high probability bound. As shown in Figure~\ref{fig:online-ensemble}, our algorithm maintains multiple base-learners, each learning over different intervals of the time-horizon, and then employs a meta-learner to aggregate their predictions. We modify the meta-learner in FLH from Hedge~\citep{JCSS'97:boosting} to Adapt-ML-Prod~\citep{COLT'14:AdaMLProd}, which ensures that the meta-learner can track each base-learner on the associated time interval with high probability. This strategy allows us to selectively reuse the historical information to handle the non-stationary environments. We introduce ingredients of our algorithms as follows. A more detailed algorithm descriptions and comparison with the dynamic regret minimization literature in OCO can be found in Appendices~\ref{appendix:ommited-algorithm-details} and~\ref{appendix:related-work-OCO} respectively.

\definecolor{mydarkblue}{RGB}{0,0,139}
\begin{figure}
\centering
\resizebox*{0.90\textwidth}{!}{
\begin{tikzpicture}
\draw[->, >=latex, line width=1.5pt] (0,0) -- (9.5,0) node[right] {};

\foreach \x in {1,...,9}
    \draw (\x-0.5,0) -- (\x-0.5,0.05) node[above] {\small$t=$\x};

\foreach \x in {0,1,...,8}
{
    \draw[thick] (\x+0.05, -0.2) rectangle (\x+0.95, -0.6);
    \fill[white] (\x-0.1, -0.15) rectangle (\x+1.1, -0.45);
}

\foreach \x in {5}
{
    \fill[gray!50] (\x+0.08, -0.45) rectangle (\x+0.92, -0.57);
}

\foreach \x in {0,2,...,8}
{
    \draw[thick] (\x+0.05, -0.7) rectangle (\x+1.95, -1.1);
    \fill[white] (\x-0.05, -0.65) rectangle (\x+2.05, -0.95);
}

\foreach \x in {4}
{
    \fill[gray!50] (\x+0.08, -0.95) rectangle (\x+1.92, -1.07);
    \fill[gray!50] (\x+0.08, -1.45) rectangle (\x+1.95, -1.57);
}

\foreach \x in {0}
{
    \fill[gray!50] (\x+0.08, -1.95) rectangle (\x+5.95, -2.07);
}

\foreach \x in {0,4}
{
    \draw[thick] (\x+0.05, -1.2) rectangle (\x+3.95, -1.6);
    \fill[white] (\x-0.05, -1.15) rectangle (\x+4, -1.45);
}

\foreach \x in {8}
{
    \draw[thick] (\x+0.05, -1.2) rectangle (\x+3.95, -1.6);
    \fill[white] (\x-0.05, -1.15) rectangle (\x+4, -1.45);
    \fill[white] (\x+2, -1.2) rectangle (\x+4, -1.65);
}

\foreach \x in {0}
{
    \draw[thick] (\x+0.05, -1.7) rectangle (\x+7.95, -2.1);
    \fill[white] (\x-0.05, -1.65) rectangle (\x+8, -1.95);
}

\foreach \x in {8}
{
    \draw[thick] (\x+0.05, -1.7) rectangle (\x+2.05, -2.1);
    \fill[white] (\x-0.05, -1.65) rectangle (\x+2.05, -1.95);
    \fill[white] (\x+2., -1.7) rectangle (\x+4, -2.15);
}

\foreach  \x in {9}
{
    \fill[white] (\x, -0.5) rectangle (\x+3, -2.15);
    \node at (\x+0.75, -1.95) {\dots};
    \node at (\x+0.75, -1.45) {\dots};
    \node at (\x+0.75, -0.95) {\dots};
    \node at (\x+0.75, -0.45) {\dots};
}

\draw[dashed, dash pattern=on 2pt off 1pt] (6,0.5) -- (6,-2.3);
\draw[->, >=latex, line width=0.5pt, black] (5.95,-0.5) to[out=180, in=180] (6,-2.6);
\draw[->, >=latex, line width=0.5pt, black] (5.95,-1) to[out=180, in=180] (6,-2.6);
\draw[->, >=latex, line width=0.5pt, black] (5.95,-1.5) to[out=180, in=180] (6,-2.6);
\draw[->, >=latex, line width=0.5pt, black] (5.95,-2) to[out=180, in=180] (6,-2.6) node[right] {\includegraphics[height=0.5cm]{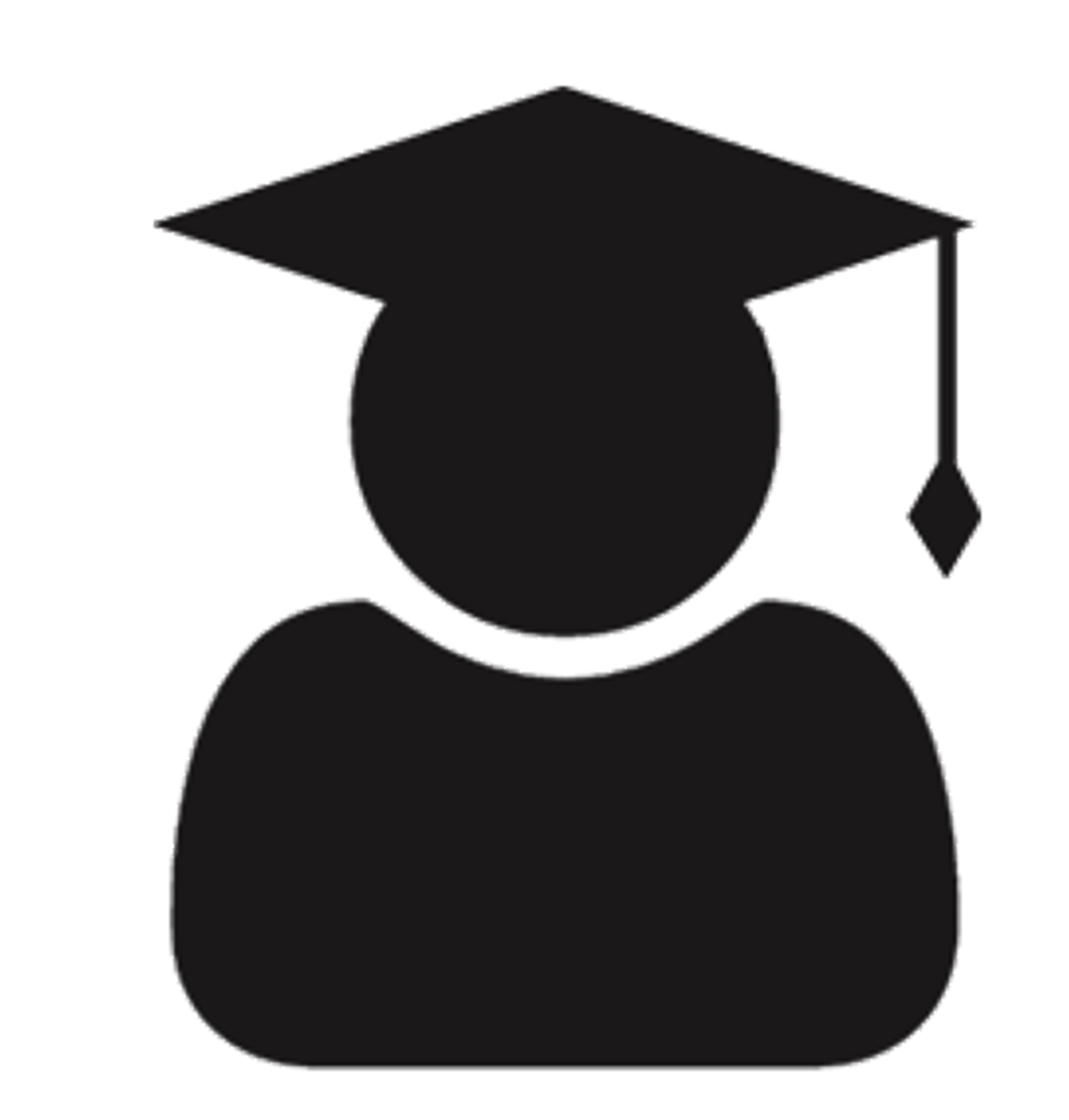}};
\node at (5.5, -0.38) {\includegraphics[height=0.3cm]{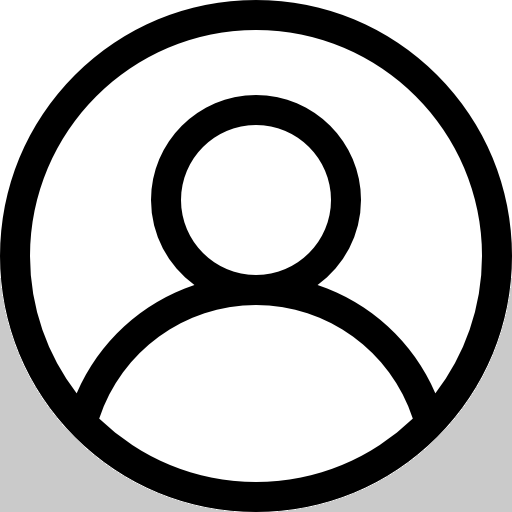}};
\node at (5, -0.88) {\includegraphics[height=0.3cm]{figure/icon/user_1.png}};
\node at (5, -1.38) {\includegraphics[height=0.3cm]{figure/icon/user_1.png}};
\node at (3.5, -1.88) {\includegraphics[height=0.3cm]{figure/icon/user_1.png}};
\node at (8.1,-2.7) {:\ \small$\color{black}\thetah_{t} = \sum_{i\in\Acal_{t}} p_{t,i}\thetah_{t,i}$};

\matrix [draw, rounded corners=5pt, fill=white, below right, row sep=-0.1cm] at (11.3,-1.2) {
    \node[anchor=base, text width=3.3cm, align=center] {\includegraphics[height=0.5cm]{figure/icon/expert.png} \raisebox{0.15cm}{\small update $p_{t,i}$ by~\eqref{eq:AdaMLprod-reweight-meta}}}; \\
    \node[anchor=south, text width=3.3cm, align=center] {\small \textsf{meta-learner}}; \\
};

\matrix [draw, rounded corners=5pt, fill=white, below right, row sep=-0.1cm] at (11.3,0.3) {
    \node[anchor=base, text width=3.3cm, align=center] {\includegraphics[height=0.5cm]{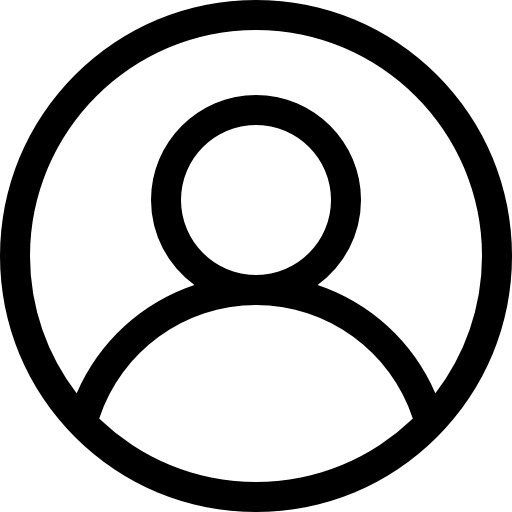} \raisebox{0.15cm}{\small update $\thetah_{t,i}$ by~\eqref{eq:ONS-update}}}; \\
    \node[anchor=south, text width=3.3cm, align=center] {\small \textsf{active base-learner}}; \\
};

\end{tikzpicture}}
\caption{An illustration of our online ensemble method, where we employ a meta-learner to aggregate the predictions from base-learners running over different intervals of the time horizon.}
\label{fig:online-ensemble}
\vspace{-1mm}
\end{figure}

\textbf{Base-learner.} Our algorithm runs multiple base-learners that are active on different intervals of the time horizon. Let $\C = \{I_i = [s_i,e_i]\subseteq [T]\}$ be a interval set. Each base-learner $\Ecal_i$ will only update and submit the model $\thetah_{t,i}$ to the meta-learner if $t\in I_i$. We use the online Newton step (ONS) for the model update. For an exp-concave loss function, the method ensures the base-learner's model is comparable with the best fixed model over the interval up to a logarithmic factor in $T$~\citep{journals/ml/HazanAK07}. Specifically, for each base-learner $\Ecal_i$ running on the interval $[s_i,e_i]$, ONS updates the model by 
\begin{align}
\thetah_{t+1,i} = \Pi^{A_{t,i}}_{\Theta}\big[\thetah_{t,i} - \gamma A_{t,i}^{-1} \nabla \Lh_t(\thetah_{t,i})\big],\ \forall t\in[s_i,e_i]~\label{eq:ONS-update},
\end{align}
where the matrix $A_{t,i} = \lambda I + \sum_{\tau = s_i}^t\nabla \Lh_\tau(\thetah_{\tau,i}) \nabla \Lh_s(\thetah_{\tau,i})^\top$. In the above, the projection function is defined as $\Pi_{\Theta}^{A_{t,i}}[\theta_1] = \argmin_{\theta\in\Theta} \norm{\theta - \theta_1}_{A_{t,i}}$ and $\Theta = \{\theta\in\R^d\mid \norm{\theta}_2 \leq S\}$ is the parameter space. The constant $\lambda>0$ and $\eta>0$ are the regularizer parameter and step size to be specified latter. 

\textbf{Meta-learner.} As shown by Figure~\ref{fig:online-ensemble}, the meta-learner's role is to aggregate the models produced by the base-learners using a weighting scheme. At each iteration $t$, the meta-learner will maintain a weight $p_{t,i}$ for each ``active'' base-learner $\Ecal_i$, defined as the one whose associated interval contains $t$. We update the weights for active base-learners based on the Adapt-ML-Prod method~\citep{COLT'14:AdaMLProd}. Specifically, for every active base-learner $\Ecal_i$, our algorithm maintains a ``potential'' $v_{t,i}\in\R_+$ at iteration $t$, which reflects the historical performance of the base-leaner until time $t$. Then, denoting by $\A_t$ the index set of the active base-learners at time $t$, their weights and the output model are obtained by 
\begin{align}
p_{t,i} \propto {\epsilon_{t-1,i}v_{t-1,i}} \mbox{ for all } i\in\Acal_t~~~\mbox{and}~~~ \thetah_{t+1} = \sum\nolimits_{i\in\Acal_{t+1}} p_{t+1,i}\thetah_{t+1,i}~\label{eq:AdaMLprod-reweight-meta}.
\end{align}
In the above $\epsilon_{t,i}>0$ is a step size that can be automatically tuned along the learning process. A noteworthy ingredient of our algorithm is the construction of the potential $v_{t,i}$, using the linearlized loss $\inner{\nabla \Lh_t(\thetah_t)}{\thetah_{t}-\thetah_{t,i}}/(SR)$. Then, the generalization gap between $\Lh_t$ and $\Ltld_t$ can be controlled with the negative term introduced by the exp-concave loss function. As a result, we can establish a high probability bound to ensure that $\thetah_t$ is competitive with any base-learner's model $\thetah_{t,i}$ on the corresponding interval $I_i$. The detailed configurations of $v_{t,i}$ and $\epsilon_{t,i}$ are deferred to Appendix~\ref{appendix:ommited-algorithm-details}.

\textbf{Schedule of Intervals.} As shown in Figure~\ref{fig:online-ensemble}, we specify the intervals with the geometric covering scheme~\citep{ICML'09:Hazan-adaptive}, where the interval set is defined as $\mathcal{C} = \bigcup\nolimits_{k\in\mathbb{N}\cup\{0\}} \C_k$ with $\C_k = \left\{[{i\cdot2^k},{(i+1)\cdot 2^k}-1]\mid i\in\mathbb{N} \mbox{ and } i\cdot 2^k \leq T\right\}$. One can check that $\vert \C\vert$ is at most $T$, and the number of active base-learner is bounded as $\abs{\Acal_t}\leq \lceil \log t\rceil$. Thus, we only need to maintain at most $\O(\log t)$ base-learners at time $t$. The intervals specified by the geometric covering are informative to capture the non-stationarity of the environments, leading to the following dynamic regret guarantees. 

\subsection{Theoretical Guarantees}
This part presents the theoretical guarantees of our method. The estimator $\hat{r}_t(\x) = \exp(-\phi(\x)^\top\thetah_t)$ established on $\thetah_t$ returned by the online ensemble method~\eqref{eq:AdaMLprod-reweight-meta} achieves the dynamic regret guarantee.

\begin{myThm}
\label{lem:overall-theta}
Assume the true density ratio $r_t^*(\x) = \D_t(\x)/\D_0(\x)$ is contained in the hypothesis space as $ r_t^*\in H_\theta^{\LR} \triangleq \{\x\mapsto\exp(-\phi(\x)^\top\theta)\mid \theta\in\Theta\}$ for any $t\in[T]$. Then, with probability at least $1-\delta$, the dynamic regret of the density ratio estimator $\hat{r}_t(\x) = \exp(-\phi(\x)^\top\thetah_t)$ is bounded by
\[
\mathbf{Reg}_T^{\mathbf{d}}(\{\Ltld_t, r_t^*\}_{t=1}^T)  \leq\tilde{\O}\left(\max\{{T^{\frac{1}{3}}V_T^{\frac{2}{3}}},1\} + {T}/{N_0}\right),\]
when the parameters are set as $\gamma = 3(1+\beta)$ and $\lambda = 1$. In the above, $V_T = \sum_{t=2}^T\norm{\D_t(\x)-\D_{t-1}(\x)}_1$ measures the variation of input densities. 
\end{myThm}

Theorem~\ref{lem:overall-theta} imposes a realizable assumption for the true density ratio function such that $r_t^*\in H_{\theta}^\LR$. Such an assumption is required since we train the density ratio estimator on a given hypothesis space while the true density ratio function could be arbitrary. We note that the realizability assumption is frequently used in the analysis for the density ratio estimation problem~\citep{ICML'21:NNBD,MLJ'12:LSIF-analysis} and other related topics, including active learning~\citep{ICML'17:active-learning} and contextual bandit~\citep{ICML'20:efficient-Contextual}, where the density ratio (or density) estimation is required. A possible direction to relax the assumption is to consider a richer function class, e.g., a neural network or a non-parametric model. We leave this as a future work.

The $\tilde{\O}(T^{\frac{1}{3}}V_T^{\frac{2}{3}})$ rate in our bound exhibits the same rate as the minimax optimal dynamic regret bound for the squared loss function with the noisy feedback~\citep{NIPS'19:Wangyuxiang}. Since the squared loss function enjoys even stronger curvature than the logistic loss, our result is hard to be improved. To achieve this fast-rate result,  the key was to show a ``squared'' formulation $\O\big( \max\{1 , \abs{I}V_{I}^2\} \big)$ of the dynamic regret for the base-leaner on each interval $I$ (Lemma~\ref{lem:ons-dynamic-theta} in Appendix~\ref{appendix:proof-of-ons}).~\citep{COLT'21:baby-strong-convex} achieved this by a complicated analysis with the KKT condition to capture the structure of the comparators, while we greatly simplified the analysis by exploiting the structure that the comparator $\theta_t^*$ is essentially the minimizers of \emph{expected} functions $L_t$ (hence avoiding analyzing the KKT condition). Our analysis is applicable to the case where the minimiers lie in the interior of the decision set and only requires the smoothness of the loss function, which can be of independent interest in online convex optimization.

\textbf{Averaged Excess Risk Bound for Continuous Covariate Shift.} After obtaining the density ratio estimator, we can train the predictive model by IWERM~\eqref{eq:IWERM-continuous}, leading to the following guarantee.

\begin{myThm}
\label{thm:DRE-dynamic-final}
Under the same condition as Theorem~\ref{lem:overall-theta} and letting $V_T = \sum_{t=2}^T\norm{\D_t(\x)-\D_{t-1}(\x)}_1$, running IWERM~\eqref{eq:IWERM-continuous} with the estimated density ratio function $\hat{r}_t(\x) = \exp(-\phi(\x)^\top\thetah_t)$ yields 
\begin{align*}
 \mathfrak{R}_T(\{\wh_t\}_{t=1}^T) \leq \tilde{\O}\Big( N_0^{-\frac{1}{2}} + \max\big\{T^{-\frac{1}{3}}V_T^{\frac{1}{3}},T^{-\frac{1}{2}}\big\} \Big) .
\end{align*}
\end{myThm}
\vspace{-2mm}
Theorem~\ref{thm:DRE-dynamic-final} shows that our learned predictor $\wh_t$ adapts to the environment with a converged average excess risk when compared with the per-round best predictor $\w^*_t$. By the discussion below Proposition~\ref{lem:IWERM-continuous}, the $\tilde{\O}(N_0^{-1/2})$ generalization error over the offline data $S_0$ is hard to improve. We focus on the error in the online learning part. When the environment is nearly-stationary, i.e., $V_T\leq\O(T^{-{1}/{2}})$, our bound implies an $\tilde{\O}(N^{-{1}/{2}}_{0} + T^{-{1}/{2}})$ average excess risk, matching the same rate for the one-step adaptation~\citep{ICML'12:KMM}, even if the unlabeled data appear sequentially and the comparator $\w_t^*$ could change over time. When the environments shift quickly, the $\O(T^{-1/3}V_T^{1/3})$ rate still exhibits a diminishing error for covariate shift adaptation once $V_T = o(T)$. Indeed, our result has the same rate as that for continuous label shift~\citep{NIPS'22:Atlas} (with a slightly different definition of $V_T$ though). In Appendix~\ref{appendix:tight}, we further provide evidence to show that the $\O(\max\{T^{-{1}/{3}}V_T^{{1}/{3}},{T}^{-{1}/{2}})$ rate can hardly be improved, even if one can receive labels of the testing stream after prediction.

\textbf{Discussion on Assumptions.} We end this section by a discussion on possible future directions to relax the linear model assumption for DRE and covariate shift assumption. Since our analysis is based on the online convex optimization framework, we employed the (generalized) linear model for DRE to ensure the convexity. To go beyond the linear model while still having theoretical guarantees, one might extend the online ensemble framework to learn within the Reproducing Kernel Hilbert Space, leveraging advances in online kernel learning~\citep{conf/nips/CalandrielloLV17}. As for the step towards handling the general distribution shift, it is possible to consider the sparse joint shift model~\citep{conf/nips/ChenZZ22} where both covariate and label distribution could shift. Besides, studying how to handle the joint distribution shift with few online labeled data also presents an interesting future direction. Our research on the covariate shift might serve as a basic step towards addressing more complex real-world distribution shifts.

\section{Experiments}
\label{sec:experiment}

\textbf{Setups.} We generate continuous covariate shift by combining two fixed distributions with a time-varying mixture proportion $\alpha_t\in[0,1]$. Specifically, given two fixed distributions $\D'(\x)$ and $\D''(\x)$, we generate samples from $\D_t(\x) = (1-\alpha_t)\D'(\x) + \alpha_t\D''(\x)$ at each round. The mixture proportion $\alpha_t$ shifts in four patterns: in \texttt{Sin} Shift and \texttt{Squ} Shift, $\alpha_t$ changes periodically, following sine and square waves; in \texttt{Lin} Shift, the environment changes slowly from $\alpha_1 = 1$ to $\alpha_T = 0$ linearly over $T$ rounds, while in \texttt{Ber} Shift, the proportion $\alpha_t$ flips quickly between $0$ and $1$ with a certain probability. For parameterization in the Accous implementation, we set $R$ by directly calculating the data norm and set $S=d/2$ for all experiments. We repeat all experiments five times and evaluate the algorithms by the average classification error over the unlabeled test stream for 10,000 rounds.

\textbf{Contenders.} We compare our method with six algorithms, which can be divided into three groups. The first is a baseline approach that predicts directly using the model trained on initial offline data ($\emph{FIX}$). The second group consists of the one-step covariate shift adaptation methods that do not reuse historical information:~\emph{DANN}~\citep{JMLR'16:DANN} handles the shifts by learning an invariant feature space,~\emph{IW-KMM}~\citep{DataSift'09:KMM},~\emph{IW-KEIEP}~\citep{AISM'08:KLIEP} and~\emph{IW-uLSIF}~\citep{JMLR'09:LSIF} equip the one-step IWERM method~\eqref{eq:IWERM-continuous} with different density ratio estimators. The third is \emph{OLRE}, which serves as an ablation study for our Accous algorithm. The \emph{OLRE} algorithm estimates the density ratio by running an ONS~\eqref{eq:ONS-update} with all historical data and then performs the IWERM at each round. All algorithm parameters are set by their default.

In the following, we focus on the results of empirical studies. Detailed configurations for the datasets, simulated shifts, and setups for the Accous algorithm and contenders can be found in Appendix~\ref{sec:appendix-experiments}.

\subsection{Illustrations on Synthetic Data}
\label{sec:experiments-synthetic}
\textbf{Average Classification Error Comparison.} We summarize the comparison results on synthetic data with different types of covariate shifts with all contenders in Figure~\ref{fig:synthetic}. The Accous algorithm outperforms almost all other methods in the four shift patterns. We observe that the offline model FIX performs poorly compared to the online methods. The best result of DANN, IW-KMM, IW-KLIEP and IW-uLSIF is called One-Step for comparison. By reusing historical data, the Accous algorithm achieves a lower average classification error compared to these One-Step competitors that use only one round of online data. The Accous also outperforms OLRE, especially when the environments change relatively quickly (\texttt{Squ}, \texttt{Sin}, and \texttt{Ber}). These results justify that performing density ratio estimation with either one round of online data or all historical data is inappropriate for non-stationary environments. Moreover, when the number of unlabeled data per round changes from $N_t = 5$ to $N_t = 1$, the one-step methods suffer severe performance degradation. In contrast, our Accous algorithm still performs relatively well, highlighting the need for selective reuse of historical data. 

\begin{figure}[!t]
\centering
\begin{minipage}[t]{0.31\textwidth}
\centering
\includegraphics[clip, trim=0.12cm 0.02cm 1.12cm 0.47cm, height=0.74\textwidth]{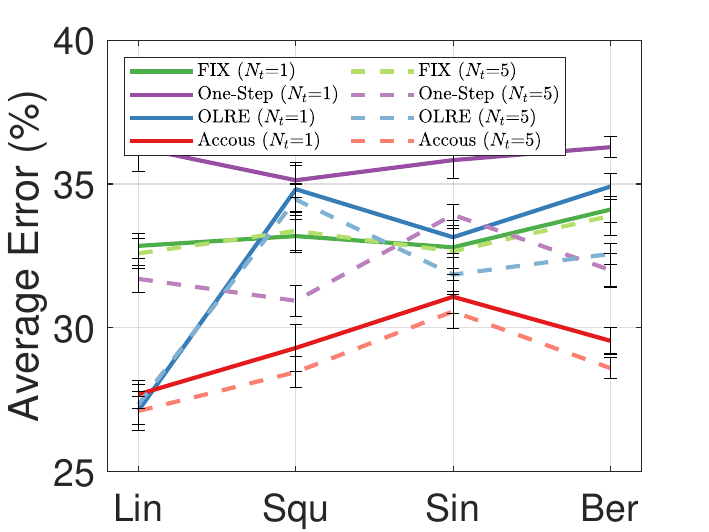}
\caption{Performance comparison on the four kinds of shifts.}
\label{fig:synthetic}
\end{minipage}\hfil
\begin{minipage}[t]{0.31\textwidth}
\centering
\includegraphics[clip, trim=0.176cm 0.186cm 0.118cm 0.146cm, height=0.74\textwidth]{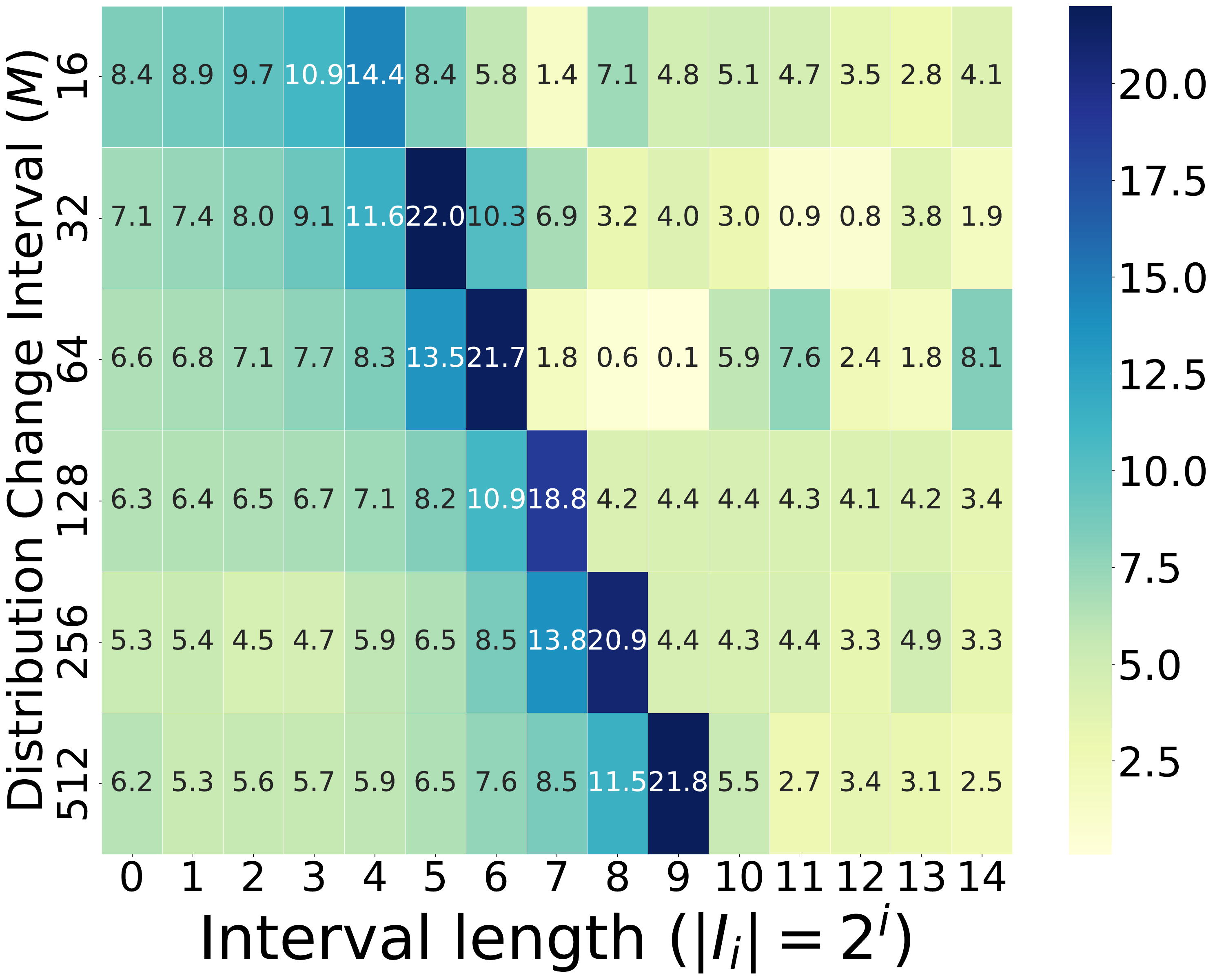}
\caption{Weight(\%) heatmap of base-learners in squared shift.}
\label{fig:heatmap}
\end{minipage}\hfil
\begin{minipage}[t]{0.31\linewidth}
\centering
\includegraphics[clip, trim=0.149cm 0.093cm 0.042cm 0.506cm, height=0.74\textwidth]{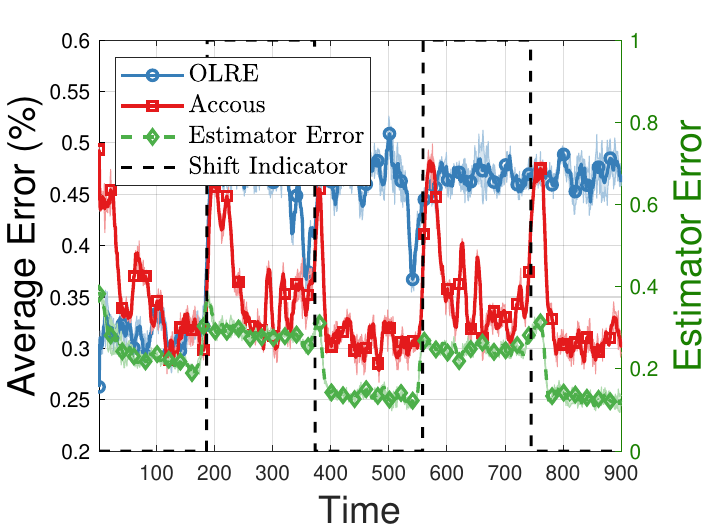}
\caption{Average error and estimator loss in squared shift.}
\label{fig:weights}
\end{minipage}
\end{figure}

\textbf{Effectiveness of Online Density Ratio Estimation.} We then take a closer look at the core component of our algorithm, the online density ratio estimator, in the \texttt{Squ} shift where the distribution of the online data shifts every $M$ rounds. Figure~\ref{fig:heatmap} shows the weight assignment for base-learners with different interval lengths, averaged over their active period. The result shows that our meta-learner successfully assigns the largest weight to the base-learner whose interval length matches the switching period $M$, which ensures that the right amount of historical data is reused. Figure~\ref{fig:weights} also shows the average error of Accous and OLRE over $1,000$ iterations. The results show that Accous (red line) can quickly adapt to new distributions after the covariate shift occurs. In contrast, OLRE (blue line) struggles because it uses all the historical data generated from different distributions. Furthermore, we denote the loss of the density ratio estimator $\Lh_t(\thetah_t)$ by the green line, which is quickly minimized after the covariate shift occurs. The similar tendency between the average error (red line) and the loss of the estimator (green line) confirms our theoretical finding in Theorem~\ref{thm:BD-conversion}: we can minimize the excess risk of the predictor by minimizing the dynamic regret of the density ratio estimator.

\subsection{Comparison on Real-world Data}
\label{sec:experiments-real-world}


\begin{table*}[!t]
  \centering
  \caption{Average classification error (\%) of different algorithms on various real-world datasets with \texttt{Lin} and \texttt{Ber} shifts. We report the mean accuracy and standard deviation over five runs. The best algorithms are emphasized in bold. The number of data in each round is set as $N_t = 5$.}
  \label{tab:benchmark-1}
  \resizebox{\textwidth}{!}{%
  \begin{tabular}{crrrrrrrrrrrrrrrr}
    \toprule
    &
         \multicolumn{8}{c}{\texttt{Lin}} &
         \multicolumn{8}{c}{\texttt{Ber}}
           \\ \cmidrule(lr){2-9} \cmidrule(lr){10-17}
         & 
         \multicolumn{1}{c}{\textbf{FIX}} &
          \multicolumn{1}{c}{\textbf{DANN}} &
          \multicolumn{1}{c}{\textbf{KMM}} &
          \multicolumn{1}{c}{\textbf{KLIEP}} &
          \multicolumn{1}{c}{\textbf{uLSIF}} &
          \multicolumn{1}{c}{\textbf{LR}} &
          \multicolumn{1}{c}{\textbf{OLRE}} &
          \multicolumn{1}{c}{\textbf{Accous}} &
          \multicolumn{1}{c}{\textbf{FIX}} &
          \multicolumn{1}{c}{\textbf{DANN}} &
          \multicolumn{1}{c}{\textbf{KMM}} &
          \multicolumn{1}{c}{\textbf{KLIEP}} &
          \multicolumn{1}{c}{\textbf{uLSIF}} &
          \multicolumn{1}{c}{\textbf{LR}} &
          \multicolumn{1}{c}{\textbf{OLRE}} &
          \multicolumn{1}{c}{\textbf{Accous}}
           \\ \midrule
    \multirow{2}{*}{\textbf{Diabetes}} &
      39.55 &
      46.88 &
      43.67 &
      37.85 &
      38.32 &
      39.17 &
      37.40 &
      \textbf{35.92} &
      
      40.90 &
      49.03 &
      46.80 &
      42.98 &
      43.22 &
      40.28 &
      42.05 &
      \textbf{39.14} \\
     &
      $\pm$6.58 &
      $\pm$7.09 &
      $\pm$6.23 &
      $\pm$6.15 &
      $\pm$6.00 &
      $\pm$7.10 &
      $\pm$6.56 &
      \textbf{$\pm$6.44} &

      $\pm$6.90 &
      $\pm$6.54 &
      $\pm$6.67 &
      $\pm$6.52 &
      $\pm$6.78 &
      $\pm$6.25 &
      $\pm$6.22 &
      \textbf{$\pm$4.51} \\
    \multirow{2}{*}{\textbf{Breast}} &
      10.25 &
      16.03 &
      8.26 &
      6.90 &
      11.07 &
      7.33 &
      \textbf{3.59} &
      4.60 &

      11.68 &
      15.34 &
      10.12 &
      7.13 &
      9.01 &
      8.14 &
      5.27 &
      \textbf{3.89}\\
     &
      $\pm$ 3.00 &
      $\pm$ 4.79 &
      $\pm$ 0.31 &
      $\pm$ 0.77 &
      $\pm$ 2.34 &
      $\pm$ 1.59 &
      \textbf{$\pm$ 0.25} &
      $\pm$ 0.16 &

      $\pm$ 4.16 &
      $\pm$ 5.23 &
      $\pm$ 0.60 &
      $\pm$ 1.30 &
      $\pm$ 1.91 &
      $\pm$ 2.02 &
      $\pm$ 0.44 &
      \textbf{$\pm$ 0.37} \\
    \multirow{2}{*}{\textbf{MNIST-SVHN}} &
       8.19 &
       13.37 &
       6.24 &
       7.52 &
       6.38 &
       6.81 &
       6.65 &
       \textbf{5.93} &

       11.33  &
       16.54  &
       10.81  &
       11.50   &
       12.19  &
       11.18 &
       13.42  &
       \textbf{10.31}\\
     &
       $\pm$1.39 &
       $\pm$0.97 &
       $\pm$1.52 &
       $\pm$0.33 &
       $\pm$0.76 &
       $\pm$0.84 &
       $\pm$0.93 &
       \textbf{$\pm$0.72} &

       $\pm$0.83 &
       $\pm$1.10 &
       $\pm$0.63 &
       $\pm$0.97 &
       $\pm$1.20 &
       $\pm$0.50 &
       $\pm$0.49 &
       \textbf{$\pm$0.51} \\
    \multirow{2}{*}{\textbf{CIFAR10-CINIC10}} &
       29.24 &
       31.59 &
       29.60 &
       29.33 &
       29.18 &
       29.75 &
       28.17 &
       \textbf{27.80} &

       33.16   &
       42.50  &
       33.49   &
       31.44   &
       32.98  &
       33.03 &
       31.22  &
       \textbf{30.69} \\
     &
       $\pm$0.51 &
       $\pm$2.18 &
       $\pm$1.43 &
       $\pm$0.97 &
       $\pm$1.50 &
       $\pm$0.48 &
       $\pm$1.24 &
       \textbf{$\pm$0.83} &

       $\pm$0.94 &
       $\pm$1.29 &
       $\pm$0.83 &
       $\pm$1.57 &
       $\pm$0.67 &
       $\pm$1.26 &
       $\pm$1.01 &
       \textbf{$\pm$0.62} \\
    \bottomrule
    \end{tabular}
  }
\end{table*}


\begin{table*}[!t]
  \centering
  \caption{Average classification error (\%) of different algorithms on various real-world datasets with \texttt{Squ} and \texttt{Sin} shifts. We report the mean accuracy and standard deviation over five runs. The best algorithms are emphasized in bold. The number of data in each round is set as $N_t = 5$.}
  \label{tab:benchmark-2}
  \resizebox{\textwidth}{!}{%
  \begin{tabular}{crrrrrrrrrrrrrrrr}
    \toprule
    &
         \multicolumn{8}{c}{\texttt{Squ}} &
         \multicolumn{8}{c}{\texttt{Sin}}
           \\ \cmidrule(lr){2-9} \cmidrule(lr){10-17}
         & 
         \multicolumn{1}{c}{\textbf{FIX}} &
          \multicolumn{1}{c}{\textbf{DANN}} &
          \multicolumn{1}{c}{\textbf{KMM}} &
          \multicolumn{1}{c}{\textbf{KLIEP}} &
          \multicolumn{1}{c}{\textbf{uLSIF}} &
          \multicolumn{1}{c}{\textbf{LR}} &
          \multicolumn{1}{c}{\textbf{OLRE}} &
          \multicolumn{1}{c}{\textbf{Accous}} &
          \multicolumn{1}{c}{\textbf{FIX}} &
          \multicolumn{1}{c}{\textbf{DANN}} &
          \multicolumn{1}{c}{\textbf{KMM}} &
          \multicolumn{1}{c}{\textbf{KLIEP}} &
          \multicolumn{1}{c}{\textbf{uLSIF}} &
          \multicolumn{1}{c}{\textbf{LR}} &
          \multicolumn{1}{c}{\textbf{OLRE}} &
          \multicolumn{1}{c}{\textbf{Accous}}
           \\ \midrule
    \multirow{2}{*}{\textbf{Diabetes}} &
      37.85 &
      45.69 &
      43.57 &
      37.00 &
      38.03 &
      37.83 &
      39.17 &
      \textbf{36.45} &
      
      37.30 &
      44.96 &
      41.38 &
      36.40 &
      37.23 &
      36.99 &
      38.58 &
      \textbf{35.17} \\
     &
      $\pm$6.67 &
      $\pm$6.51 &
      $\pm$6.97 &
      $\pm$6.80 &
      $\pm$6.45 &
      $\pm$5.87 &
      $\pm$6.67 &
      \textbf{$\pm$5.36} &

      $\pm$7.02 &
      $\pm$6.79 &
      $\pm$6.46 &
      $\pm$6.93 &
      $\pm$6.88 &
      $\pm$6.21 &
      $\pm$7.45 &
      \textbf{$\pm$6.44} \\
    \multirow{2}{*}{\textbf{Breast}} &
      9.92 &
      12.54 &
      8.53 &
      6.52 &
      7.92 &
      7.39 &
      4.67 &
      \textbf{3.68} &

      8.68 &
      14.34 &
      7.18 &
      6.11 &
      9.59 &
      7.50 &
      4.16 &
      \textbf{3.29}\\
     &
      $\pm$0.23 &
      $\pm$1.55 &
      $\pm$0.30 &
      $\pm$0.91 &
      $\pm$1.39 &
      $\pm$0.83 &
      $\pm$0.23 &
      \textbf{$\pm$0.58} &

      $\pm$0.56 &
      $\pm$1.78 &
      $\pm$0.39 &
      $\pm$0.92 &
      $\pm$1.02 &
      $\pm$0.51 &
      $\pm$0.42 &
      \textbf{$\pm$0.43} \\
    \multirow{2}{*}{\textbf{MNIST-SVHN}} &
       11.28  &
       13.98  &
       10.21  &
       11.01  &
       10.49  &
       11.72 &
       13.68  &
       \textbf{9.54}  &

       9.98  &
       16.73  &
       8.91  &
       9.41  &
       9.30  &
       9.00 &
       15.38  &
       \textbf{7.74}  \\
     &
       $\pm$1.22 &
       $\pm$0.97 &
       $\pm$0.53 &
       $\pm$1.41 &
       $\pm$0.67 &
       $\pm$1.89 &
       $\pm$0.59 &
       \textbf{$\pm$0.40} &

       $\pm$1.29 &
       $\pm$1.03 &
       $\pm$0.68 &
       $\pm$1.01 &
       $\pm$0.53 &
       $\pm$1.77 &
       $\pm$0.88 &
       \textbf{$\pm$0.74} \\
    \multirow{2}{*}{\textbf{CIFAR10-CINIC10}} &
       34.75 &
       38.13 &
       34.08 &
       34.27 &
       33.80 &
       33.56 &
       32.61 &
       \textbf{31.08}  &

       36.44  &
       42.53 &
       37.00  &
       36.58   &
       36.53  &
       36.21 &
       38.24   &
       \textbf{35.42}\\
     &
       $\pm$0.29 &
       $\pm$1.58 &
       $\pm$0.85 &
       $\pm$1.24 &
       $\pm$1.73 &
       $\pm$1.59 &
       $\pm$0.69 &
       \textbf{$\pm$0.60} &

       $\pm$0.47 &
       $\pm$1.09 &
       $\pm$0.67 &
       $\pm$1.52 &
       $\pm$1.02 &
       $\pm$1.31 &
       $\pm$1.86 &
       \textbf{$\pm$1.22}  \\
    \bottomrule
    \end{tabular}
  }
\end{table*}

\textbf{Performance Comparison on Benchmarks.} In Table~\ref{tab:benchmark-1} and Table~\ref{tab:benchmark-2}, we present the results of the average classification error comparison with the contenders on four benchmark datasets. Our algorithm Accous outperforms other contenders in the four types of shifts, whether in a slowly evolving environment (\texttt{Lin}) or in relatively non-stationary environments (\texttt{Squ},\texttt{Sin},\texttt{Ber}). The OLRE algorithm uses all historical data, but does not necessarily outperform contenders that use only current data, suggesting that we should selectively reuse the historical data.

\begin{wrapfigure}{r}{0.3\textwidth}
\centering
\vspace{-3mm}
\includegraphics[clip, trim=0.095cm 0.095cm 0.85cm 0.49cm, width=0.32\textwidth]{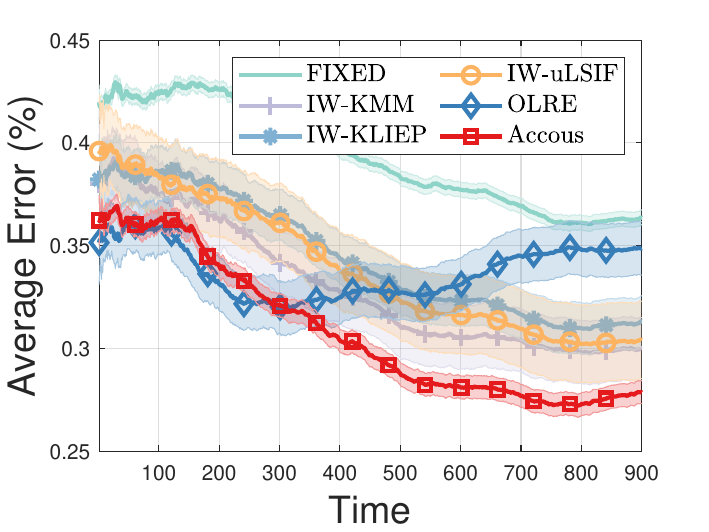}
\caption{Average error on Yearbook dataset with real-life covariate shift.}
\label{fig:yearbook}
\end{wrapfigure}
\textbf{Case Study on a Real-life Application.} We conducted empirical studies on a real-world application using the yearbook dataset~\citep{yao2022wild}, which contains high school front-facing yearbook photos from 1930 to 2013. The photo distributions shift along years due to the changing social norms, fashion styles, and population demographics. Consequently, it is essential to adapt to these changes. We used photos before 1945 as the offline labeled dataset and generated the online unlabeled data stream using photos after 1945, arranged chronologically with $10$ photos per round. Figure~\ref{fig:yearbook} shows the average error curves. The OLER algorithm performs well when the distributions do not change too much, but the method starts to suffer from large prediction errors after $400$ iterations. In contrast, our method outperforms all competitors by selectively reusing historical data.


\section{Conclusion}
\label{sec:conclusion}
This paper initiated the study of the continuous covariate shift with the goal of minimizing the cumulative excess risk of the predictors. We showed that the importance-weighted ERM method works effectively given high-quality density ratio estimators (Proposition~\ref{lem:IWERM-continuous}), whose estimation can be cast as a dynamic regret minimization problem (Theorem~\ref{thm:BD-conversion}). Instantiating with the logistic regression density ratio estimation model, we proposed a novel online that can adaptively reuse the historical data and enjoy a tight dynamic regret bound (Theorem~\ref{lem:overall-theta}). The regret bound finally implies an $\tilde{\O}(T^{-1/3}V_T^{1/3})$ averaged excess risk guarantee for the predictor. Experiments validate the effectiveness of the proposed method and demonstrate the need for selective reuse of historical data.

\section*{Acknowledgments}
YJZ and MS were supported by the Institute for AI and Beyond, UTokyo. YJZ was also supported by Todai Fellowship.  Peng Zhao was supported by NSFC (62206125) and JiangsuSF~(BK20220776). The authors thank Yong Bai and Yu-Yang Qian for their assistance in the experiments.

\bibliography{myBib}
\bibliographystyle{unsrt}
\newpage

\appendix
\section{Omitted Details for Experiments}
\label{sec:appendix-experiments}
In this section, we first provide additional details that were omitted in section~\ref{sec:experiment}. We then provide the descriptions of our experimental setups, which include details of the datasets used, simulations of continuous covariate shifts, descriptions of the competing algorithms, and the parameter configuration of our proposed Accous algorithm. After that, we supplement the detailed numerical results obtained from the synthetic experiments.

\subsection{Experiment Setup}
\label{sec:appendix-setup}

\textbf{Continuous Covariate Shift Simulation.} To simulate the continuous covariate shift, we generate offline and online samples from different marginal distributions and annotate them with the same labeling function. Given two different marginal distributions $\D'(\x)$ and $\D''(\x)$, the offline dataset $S_0$ is sampled from the distribution $\D_0(\x) = (1-\alpha_0)\D'(\x) + \alpha_0\D''(\x)$, where $\alpha_0\in[0,1]$ is a mixture proportion coefficient. The online dataset $S_t$ received at each iteration $t$ is sampled from the distribution $\D_t(\x) = (1-\alpha_t)\D'(\x) + \alpha_t\D''(\x)$, where $\alpha_t\in[0,1]$ is the time-varying coefficient. By changing the coefficient $\alpha_t$ for $t \geq 1$, we can simulate different types of continuous covariate shifts with different shift patterns.
\begin{itemize}
    \item \texttt{Linear Shift (Lin)}: We set $\alpha_t=t/T$ to simulate a gradually changing environment.
    \item \texttt{Square Shift (Squ)}: In this case, the parameter $\alpha_t$ switches between $1$ and $0$ every $M$ rounds, where $2M$ is the periodic length. We set $M = \Theta(\sqrt{T})$ to simulate a fast-changing environment with periodic patterns.
    \item \texttt{Sine Shift (Sin)}: We set $\alpha_t = \sin \frac{i\pi}{M}$, where $i=t\bmod M$ and $M$ is the periodic length. We set $M = \Theta(\sqrt{T})$ to simulate a fast-changing environment with periodic patterns.
    \item \texttt{Bernoulli Shift (Ber)}: In this case, we keep the $\alpha_t = \alpha_{t-1}$ with probability $p\in[0,1]$ and otherwise set $\alpha_t = 1 - \alpha_{t-1}$. We set $p = \Theta(1/\sqrt{T})$ to simulate a fast-changing environment but without periodic patterns.
\end{itemize}

\textbf{Datasets.} For the synthetic dataset, we generate $\D'(\x)$ and $\D''(\x)$ using the multinomial Gaussian distribution, wherein each class is generated from a Gaussian distribution. For $\D'(\x)$, the positive class data is generated from $\mathcal{N}(\x;\bm{\mu}^p_1, \Sigma)$, and the negative class data is generated from $\mathcal{N}(\x;\bm{\mu}^n_1, \Sigma)$. Here, $\bm{\mu}^p_1$ is set to $-\mathbf{1.2}$ in $\R^{12}$, and $\bm{\mu}^n_1$ is set to $-\mathbf{0.8}$ in $\R^{12}$. As for $\D''(\x)$, its positive class data is generated from $\mathcal{N}(\x;\bm{\mu}^p_2, \Sigma)$, and the negative class data is generated from $\mathcal{N}(\x;\bm{\mu}^n_2, \Sigma)$. Here, $\bm{\mu}^p_2$ is specified as $\mathbf{1.2}$ in $\R^{12}$, and $\bm{\mu}^n_2$ as $\mathbf{0.8}$ in $\R^{12}$. The covariance matrix is set as $\Sigma = I_d$ in $\R^{12\times 12}$, and the prior probability for the positive class data is fixed at $0.5$. Besides, we generated offline data with $\alpha_0=0.9$ and introduced a time-varying coefficient, $\alpha_t$, to simulate continuous covariate shifts.

The benchmark datasets are generated from real-world datasets.
\begin{itemize}
    \item \emph{Diabetes (Dia)} is a tabular UCI dataset. Each sample is an $8$-dimensional vector with a binary label. We split the dataset into two subsets and choose the split such that the classifier trained on one dataset generalizes poorly to the other. We choose one of them as the offline dataset and combine them with a time-varying coefficient to generate the online data.
    \item \emph{Breast (Bre)} is a tabular UCI dataset. Each sample is a $9$-dimensional vector with a binary label. This dataset is generated similarly to the Diabetes dataset.
    \item \emph{MNIST-SVHN (M-S)} contains the digital images collected from handwriting or natural scene images. A feature extractor pre-trained on this dataset produces a $512$-dimensional vector with a $10$-class label for each sample. We choose 10\% MNIST data and 90\% SVHN data as the offline dataset and generate the online data by mixing the MNIST and SVHN datasets with a time-varying proportion coefficient $\alpha_t$. 
    \item \emph{CIFAR10-CINIC10 (C-C)} contains real-life subjective images collected from different sources. This dataset is generated similarly to the MNIST-SVHN dataset.
\end{itemize}
We also conduct experiments on a real-world dataset where the underlying data distributions exhibit a continuous shift. 
\begin{itemize}
    \item \emph{Yearbook}~\citep{yao2022wild}: This dataset contains the 37,921 frontal-facing American high school yearbook photos from 1930 to 2013~\citep{ginosar2017century}. Each photo is a $32\times32\times1$ grayscale image associated with a binary label representing the student's gender. This dataset captures the real-life changing of social norms, fashion styles, and population demographics. The offline data is generated with photos before 1945, the online data is generated from 1946, and the data size is $10$ per round.
\end{itemize}

\textbf{Contenders.} We compare our proposed approach with six contenders, including:
\begin{itemize}
    \item \emph{FIX} is the baseline method that predicts with the classifier that are pre-trained on offline data without any updates in the online adaptation stage.
    \item is a general domain adaptation algorithm that mitigates the discrepancy between the training and test distributions by learning an invariant feature representation. We adapt this method to the online setting by running DANN at each round $t\in[T]$ based on the offline data $S_0$ and the online data $S_t$ available at that round.
    \item \emph{IW-KMM} is an importance weighting learning method that first estimates the density ratios by the Kernel Mean Matching (KMM) method~\citep{DataSift'09:KMM} and then performs the IWERM to train the model. At each iteration $t\in[T]$, the density ratios are estimated with the offline dataset $S_0$ and the online dataset $S_t$ by the KMM. Then the model is trained by running the IWERM method on the offline data $S_0$.
    \item \emph{IW-KLIEP} is also an importance weighting learning method, where the density ratios are estimated by the KLIEP method~\citep{AISM'08:KLIEP}. At each iteration $t\in[T]$, after the density ratios have been estimated by KLIEP, the learner performs the IWERM over the offline data $S_0$.
    \item \emph{IW-uLSIF} is an importance weighting learning method where the density ratio estimated by the uLSIF method is used~\citep{JMLR'09:LSIF}. At each iteration $t\in[T]$, the learner first estimates the density ratios by the uLSIF and then performs the IWERM on the offline data $S_0$.
    \item \emph{LR} serves as an ablation study for the proposed Accous algorithm. At each iteration $t\in[T]$, the learner first estimates the density ratios by an ONS with a logistic regression model and then performs the IWERM on the offline data $S_0$.
    \item \emph{OLRE} also serves as an ablation study for the proposed Accous algorithm. It estimates the density ratios in an on-line manner by running an ONS with a logistic regression model with all the historical data and then performing the IWERM each round.
\end{itemize}

\textbf{Accous Setup.} We implement the Accous algorithm using a logistic regression model. Before running the Accous algorithm, two parameters must be defined. They are the maximum norm of the feature $R$ and the maximum norm of the density functions $S$. We set $R$ by directly calculating the data norm. The covariate shift level determines the parameter $S$, which is difficult to predict in advance. We found that setting $S=d/2$ is a good choice in the experiments. For stable performance, we set a threshold of $100$ and truncate the large weights after the density ratio estimation in each round. We also ignore the base models with interval lengths $1$, $2$ in the density ratio estimation.

For the synthetic, diabetes, and breast tabular datasets, the linear model is used. For the MNIST-SVHN, CIFAR10-CINIC10, and Yearbook datasets, the deep model is used for density ratio estimation and predictive model training. The deep models consist of a backbone and a linear layer. In all experiments, only the linear layer of the neural network is updated, while the backbone parameters trained with offline data remain fixed. Backbone settings for different datasets are presented below.
\begin{itemize}
\item \emph{MNIST-SVHN} and \emph{CIFAR10-CINIC10}: The backbone is a pre-trained ResNet34 backbone from torchvision with its weights initialized by training on ImageNet. Both the weight estimator and the classifier share the same backbone and its corresponding parameters.
\item \emph{Yearbook}: Our network adopts a similar structure and updating process as in the \emph{MNIST-SVHN} and \emph{CIFAR10-CINIC10} experiments, following the configuration described in \citep{yao2022wild}. The backbone of our network consists of a CNN model with the default settings.
\end{itemize}

\textbf{Computational Resources.} We run experiments with two Xeon Gold 6248R processors (24 cores, 3.0GHz base, 4.0GHz boost), eight Tesla V100S GPUs (32GB video memory each), 768GB RAM, all managed by the Ubuntu 20.04 operating system.

\subsection{Supplementary Numerical Results for Section~\ref{sec:experiments-synthetic}}
\label{sec:appendix-additional}


\begin{table*}[!t]
\centering
\caption{Average error (\%) over the entire time horizons for different algorithms under various simulated shifts. The best algorithms are emphasized in bold.}
\vspace{-0.05 in}
\resizebox{\textwidth}{!}{%
\begin{tabular}{crrrrrrrrrrrrrrrr}
\toprule
&
\multicolumn{8}{c}{$N_t = 1$} &
\multicolumn{8}{c}{$N_t = 5$} \\
\cmidrule(lr){2-9} \cmidrule(lr){9-17}
& 
\multicolumn{1}{c}{\textbf{FIX}} &
\multicolumn{1}{c}{\textbf{DANN}} &
\multicolumn{1}{c}{\textbf{KMM}} &
\multicolumn{1}{c}{\textbf{KLIEP}} &
\multicolumn{1}{c}{\textbf{uLSIF}} &
\multicolumn{1}{c}{\textbf{LR}} &
\multicolumn{1}{c}{\textbf{OLRE}} &
\multicolumn{1}{c}{\textbf{Accous}} &
\multicolumn{1}{c}{\textbf{FIX}} &
\multicolumn{1}{c}{\textbf{DANN}} &
\multicolumn{1}{c}{\textbf{KMM}} &
\multicolumn{1}{c}{\textbf{KLIEP}} &
\multicolumn{1}{c}{\textbf{uLSIF}} &
\multicolumn{1}{c}{\textbf{LR}} &
\multicolumn{1}{c}{\textbf{OLRE}} &
\multicolumn{1}{c}{\textbf{Accous}} \\
\midrule
\multirow{2}{*}{\texttt{Lin}} &
 32.85 &
 46.93 &
 37.97 &
 36.26 &
 38.34 &
 36.51 &
 \textbf{27.13} &
 27.69 &

 32.59 &
 40.33 &
 32.68 &
 34.13 &
 31.70 &
 32.21 &
 27.34 &
 \textbf{27.11} \\
 &
 $\pm$0.43 &
 $\pm$2.40 &
 $\pm$0.56 &
 $\pm$0.82 &
 $\pm$0.40 &
 $\pm$1.12 &
 \textbf{$\pm$0.74} &
 $\pm$0.49 &

 $\pm$0.53 &
 $\pm$2.99 &
 $\pm$0.86 &
 $\pm$0.30 &
 $\pm$0.46 &
 $\pm$0.72 &
 $\pm$0.81 &
 \textbf{$\pm$0.69} \\
\multirow{2}{*}{\texttt{Squ}} &
 33.19 &
 40.59 &
 35.50 &
 35.13 &
 36.44 &
 36.21 &
 34.82 &
 \textbf{29.30} &

 33.37 &
 41.24 &
 31.52 &
 30.94 &
 32.07 &
 31.63 &
 34.46 &
 \textbf{28.46} \\
 &
 $\pm$0.56 &
 $\pm$3.28 &
 $\pm$0.33 &
 $\pm$0.61 &
 $\pm$0.79 &
 $\pm$0.46 &
 $\pm$0.30 &
 \textbf{$\pm$0.82} &

 $\pm$0.67 &
 $\pm$4.01 &
 $\pm$0.46 &
 $\pm$0.53 &
 $\pm$0.36 &
 $\pm$1.08 &
 $\pm$0.86 &
 \textbf{$\pm$0.54} \\
\multirow{2}{*}{\texttt{Sin}} &
 32.80 &
 42.65 &
 36.25 &
 35.83 &
 37.01 &
 37.42 &
 33.15 &
 \textbf{31.08} &

 32.66 &
 39.94 &
 34.15 &
 34.47 &
 33.93 &
 34.62 &
 31.85 &
 \textbf{30.57} \\
 &
 $\pm$0.75 &
 $\pm$4.49 &
 $\pm$0.37 &
 $\pm$0.65 &
 $\pm$0.38 &
 $\pm$0.73 &
 $\pm$0.50 &
 \textbf{$\pm$0.58} &

 $\pm$0.80 &
 $\pm$2.94 &
 $\pm$0.45 &
 $\pm$0.72 &
 $\pm$0.37 &
 $\pm$0.56 &
 $\pm$0.28 &
 \textbf{$\pm$0.59} \\
\multirow{2}{*}{\texttt{Ber}} &
 34.11 &
 46.20 &
 36.49 &
 37.07 &
 36.28 &
 37.42 &
 34.91 &
 \textbf{29.55} &

 33.89 &
 45.33 &
 32.89 &
 33.52 &
 32.01 &
 34.13 &
 32.57 &
 \textbf{28.60} \\
 &
 $\pm$0.45 &
 $\pm$4.12 &
 $\pm$0.42 &
 $\pm$0.67 &
 $\pm$0.36 &
 $\pm$0.85 &
 $\pm$0.47 &
 \textbf{$\pm$0.46} &
 
 $\pm$0.67 &
 $\pm$2.88 &
 $\pm$0.36 &
 $\pm$0.78 &
 $\pm$0.59 &
 $\pm$0.72 &
 $\pm$0.40 &
 \textbf{$\pm$0.36} \\
\bottomrule
\end{tabular}
}
\label{tab:synthetic}
\end{table*}

In Section~\ref{sec:experiments-synthetic} we summarize the results of the synthetic experiments, and here in Table~\ref{tab:synthetic}, we supplement their detailed numerical results. Overall, Accous outperforms almost all other methods in the four shift patterns. Compared to the competitors that use only
one round of online data, our proposed method achieves a lower average error. The general domain adaptation method DANN is also inferior in the online setting due to the limited amount of data per round, which makes it difficult to learn a good representation. The proposed Accous algorithm also outperforms the OLRE algorithm, which reuses all historical data. The above evidence demonstrates the need for selective reuse of historical data and validates the effectiveness of our algorithm.

\section{More Related Work}
\label{appendix:related-work-OCO}
\subsection{Related Work on Non-stationary Online Learning}
Online convex optimization (OCO)~\citep{book'16:Hazan-OCO} is a powerful paradigm to handle sequential prediction problems. To adapt to the continuously shifting environments, we exploit the online ensemble technique~\citep{NIPS'18:Zhang-Ader,NIPS'20:sword} developed in non-stationary online learning literature to hedge the uncertainty. In this part, we will introduce the related works and discuss the difference between our algorithm and the previous work. 

Specifically, OCO paradigm considers a $T$-round game between a learner and the environment. At every round $t$, the learner makes a prediction $\theta_t\in\Theta$, and, at the same time, the environments reveal the convex loss function $L_t:\Theta\rightarrow \R$. Then, the learner suffers from the loss $L_t(\theta_t)$ and observes certain information about $L_t$ to update the model for the next iteration. A classical performance measure for OCO framework in non-stationary environments is the dynamic regret,
\begin{align}
\mathbf{Reg}_T^{\mathbf{d}}(\{\Ltld_t, \theta_t^*\}_{t=1}^T) = \sum_{t=1}^T L_t(\theta_t) - \sum_{t=1}^T L_t(\theta_t^*),~\label{eq:dynamic-regret}
\end{align}
which compares the learner's prediction with the function minimizer $\theta_t^*$ at every iteration. Over the decades, a variety of online learning algorithms have been proposed to optimize this dynamic regret measure~\citep{ICML'03:zinkvich,OR'15:dynamic-function-VT,ICML'16:GyorgyS-shiftregret,ICML'13:dynamic-model,AISTATS'15:dynamic-optimistic,CDC'16:dynamic-sc,ICML'16:Yang-smooth,ICML'18:zhang-dynamic-adaptive,UAI'20:simple,NIPS'20:babyW,L4DC'21:sc_smooth} in the online convex optimization problem.

\textbf{Worst-case Dynamic Regret.} Many works focus on the full information setting~\citep{AISTATS'15:dynamic-optimistic,CDC'16:dynamic-sc,ICML'16:Yang-smooth,ICML'18:zhang-dynamic-adaptive,L4DC'21:sc_smooth}, where the learner can observe the loss functions entirely. We call the dynamic regret in such a case as ``worse-case'' dynamic regret in the sense that the comparator $\theta_t^*$ is exactly the minimizer of the observed function $L_t$. According to different complexity measures for quantifying the  environmental non-stationarity, there are two lines of results. The first is the path length of comparators $V_T^\theta = \sum_{t=2}^T \norm{\theta_t^* - \theta_{t-1}^*}_2$ measuring the variation of the function minimizers; and the second one is the variation of the function values $V_T^{L} = \sum_{t=2}^T\max_{\theta\in\Theta} \abs{L_t(\theta)-L_{t-1}(\theta)}$. For the full information setting, the prior art~\citep{L4DC'21:sc_smooth} showed that a simple greedy strategy can already be already well-behaved and achieves an $\O(\min\{V_T^\theta,V_T^L\})$ dynamic regret bound that minimizes the two kinds of non-stationarity measure simultaneously.

\textbf{Universal Dynamic Regret.} Competing to the sequences of observed functions' minimizers can be sometimes too pessimistic for the online optimizing, which may lead to a severe overfitting. A more appropriate performance measure is the universal dynamic regret~\citep{ICML'03:zinkvich}, which has drawn more and more attention in recent years. Universal dynamic regret compares the learner's prediction with an \emph{arbitrary} comparator sequence $\{\nu_t\}_{t=1}^T$,
\begin{equation}
\label{eq:dynamic-regret-universal}
\mathbf{Reg}_T^{\mathbf{d}}(\{\Ltld_t, \nu_t\}_{t=1}^T) = \sum_{t=1}^T L_t(\theta_t) - \sum_{t=1}^T L_t(\nu_t).
\end{equation}
Let $V_T^\nu = \sum_{t=1}^T\norm{\nu_t-\nu_{t-1}}_2$ be the path length of the comparator sequence $\{\nu_t\}_{t=1}^T$. ~\citep{ICML'03:zinkvich} showed that the online gradient descent algorithm achieves an $\O((1+V_T^\nu)\sqrt{T})$ universal dynamic regret bound for convex function, whereas there is still a gap to the $\Omega(\sqrt{T(1+V_T^\nu)})$ lower bound established by~\citep{NIPS'18:Zhang-Ader}, who further proposed an algorithm attaining an $\O(\sqrt{T(1+V_T^\nu)})$ dynamic regret bound and thus closed the gap. Their key algorithmic ingredient is the online ensemble structure~\citep{book'12:ensemble-zhou}, which hedges the non-stationarity of the environments by employing a meta-learner to ensemble the predictions from a group of base-learners. When, the loss function is convex and smooth, the $\O(\sqrt{T(1+V_T^\nu)})$ can be improved to $\O(\sqrt{(1+\min\{F_T,G_T\}+V_T^\nu)(1+V_T^\nu)})$~\citep{NIPS'20:sword}, where $F_T = \sum_{t=1}^T L_t(\nu_t)$ and $G_T = \sum_{t=2}^T \max_{\theta\in\Theta}\norm{\nabla L_t(\theta)-\nabla L_{t-1}(\theta)}^2_2$ are two problem-dependent quantities capturing different kinds of easiness of the environments. For the exp-concave and smooth functions,~\citep{COLT'21:baby-strong-convex} showed an $\O(T^{1/3} (V_T^\nu)^{2/3})$ universal dynamic regret by an improper algorithm (i.e., the output decisions can be slightly out of the feasible domain, while comparators have to locate in the domain), which is proven to be minimax optimal for exp-concave functions. Note that the worst-case dynamic regret~\eqref{eq:dynamic-regret} is clearly a special case of the universal dynamic regret defined in~\eqref{eq:dynamic-regret-universal}, so a universal dynamic regret upper bound directly implies an upper bound for the worst-case dynamic regret, by substituting comparators $\{\nu_t\}_{t=1}^T$ as functions' minimizer $\{\theta_t^*\}_{t=1}^T$. This method yields a path-length type bound. Furthermore, we mention that it is also possible to obtain a function-variation type worst-case dynamic regret bound from the universal dynamic regret guarantee, which is revealed in~\citep[Appendix A.2]{UAI'20:simple}.

\textbf{An Intermediate Dynamic Regret.} There is also an intermediate case between the worst-case and universal dynamic regret. We refer to it the  ``intermediate'' dynamic regret in this paper. In this intermediate case, the online learner still aims to optimize the worst-case dynamic regret of the loss function $L_t(\theta)$, but she can only observe a \emph{noisy} feedback $\Lh_t(\theta)$ at each iteration. Since the comparator $\theta_t^* = \argmin_{\theta\in\Theta} L_t(\theta)$ is not the minimizer of the observed loss $\Lh_t(\theta)$, such a kind of problem is more challenging than the worst-case dynamic regret minimization problem given the uncertainty of the comparator sequence.~\citep{OR'15:dynamic-function-VT} showed a restarted online gradient descent algorithm can achieve an $\O(T^{1/3}(V_T^{L})^{2/3})$ dynamic regret bound for convex function and an $\O(\sqrt{T V_T^{L}})$ bound for strongly convex function under the noisy feedback. However, their algorithm requires the knowledge of the non-stationarity measure $V_T^L = \sum_{t=2}^T\max_{\theta\in\Theta}\abs{L_t(\theta)-L_{t-1}(\theta)}$ defined over $L_t$, which is generally unknown. ~\citep{NIPS'19:Wangyuxiang} showed that an $\O(T^{1/3}(V_T^\theta)^{2/3})$ bound is achievable with the noisy feedback when $L_t$ is a one-dimensional squared loss. The proposed method is based on the de-noising technique and are free from the knowledge of $V_T^\theta$. Evidently, this intermediate dynamic regret is still a feasible realization of the universal dynamic regret, so one can achieve the intermediate dynamic regret bound by the universal dynamic regret. However, this reduction will only provide an expected guarantee. More detailed discussions are provided below.

\textbf{Comparison of Our Results and Earlier Works.} Our $\mathbf{Reg}_T^{\mathbf{d}}(\{\Ltld_t, \theta_t^*\}_{t=1}^T) \leq \O(T^{1/3}(V_T^\theta)^{2/3})$ high-probability dynamic regret bound for logistic regression (Theorem~\ref{lem:overall-theta}) falls into the intermediate case, where we aim to minimize the worst-case dynamic regret bound in terms of $L_t$ with noisy feedback. The closest related work is by~\citep{COLT'21:baby-strong-convex}, who also exploited an online ensemble structure and achieved a universal dynamic regret in the form of $\mathbf{Reg}_T^{\mathbf{d}}(\{\Ltld_t, \nu_t\}_{t=1}^T) \leq \O(T^{1/3}(V_T^{\nu})^{2/3})$ universal dynamic regret by taking $\Lh_t$ as the input. Further choosing $\nu_t = \theta_t^*$ and taking the expectation over $S_t$ for each round, their result implies an expectation bound $\E[\mathbf{Reg}_T^{\mathbf{d}}] \leq \O(T^{1/3}(V_T^\theta)^{2/3})$. One main advantage of our result is that the bound holds with high probability. To achieve the high-probability bound, we twist the meta-learner from Hedge~\citep{JCSS'97:boosting} to Adapt-ML-Prod~\citep{COLT'14:AdaMLProd}, which helps control the generalization error between $\Lh_t$ and $\Ltld_t$ with the negative term introduced by the exp-concavity of the loss function. Besides, similar to~\citep{COLT'21:baby-strong-convex}, our analysis crucially relies on providing a squared formulation of the dynamic regret $\O(\max\{1,\abs{I}(V_T^\theta)^2\})$ on each interval $I$. However, to obtain such a result,~\citep{COLT'21:baby-strong-convex} crucially rely on a complicated analysis on the KKT condition to depict the structure of  comparators. We greatly simplify the analysis by exploiting the fact that in our case the comparator $\theta_t^*$ is actually the minimizer of expected functions $L_t$, hence avoiding analyzing KKT condition. Our analysis is applicable to the case when the minimizers lie in the interior of the feasible domain and require the smoothness of the online functions only (which does not rely on the particular properties of logistic loss). This result might be of independent interest in other online learning studies.

\section{Omitted Details for Section~\ref{sec:method_reduction}}
\label{appendix:proof-method_reduction}
This section provides the proofs for Section~\ref{sec:method_reduction} in the main paper.
\subsection{Proof of Proposition~\ref{lem:IWERM-continuous}}
\begin{proof}[Proof of Proposition~\ref{lem:IWERM-continuous}]

We first provide an excess risk bound for the model $\wh_t$ at each time $t$ and then combine them for $T$ rounds. For each time $t\in[T]$, let $\tilde{R}_t(\w)  = \E_{(\x,y)\sim S_0}\left[\rio^*_t(\x)\ell(\w^\top\x,y)\right]$ be the risk built upon the true density ratio function $\rio_t^*$ and $\Rh_t(\w)  = \E_{(\x,y)\sim S_0}\left[\rioh_t(\x)\ell(\w^\top\x,y)\right]$ be that built upon the empirical density ratio function $\rioh_t$. We can decompose the excess risk of the trained model $\wh_t$ as
\begin{align*}
&R_t(\wh_t) -  R_t(\w_t^*) = \underbrace{R_t(\wh_t) - \tilde{R}_t(\wh_t)}_{\term{a}} + \underbrace{\tilde{R}_t(\wh_t)  - \tilde{R}_t(\w_t^*)}_{\term{b}} + \underbrace{ \tilde{R}_t(\w_t^*)  - {R}_t(\w_t^*)}_{\term{c}}.
\end{align*}
For term~(a), by a standard generalization error analysis as shown in Lemma~\ref{lem:concentration-itermediate-risk}, with probability at least $1-\delta/(2T)$, the gap between $R_t(\wh_t)$ and $\tilde{R}_t(\wh_t)$ is bounded by 
\begin{align*}
\term{a} ={} R_t(\wh_t) - \tilde{R}_t(\wh_t)
 \leq{}\frac{2BGDR}{\sqrt{N_0}}+ 4BL \sqrt{\frac{2\ln((8T)/\delta)}{N_0}}.
\end{align*}

Then, we proceed to bound term~(b).
\begin{align*}
\term{b} ={}& \tilde{R}_t(\wh_t)  - \tilde{R}_t(\w_t^*)\\
={}&\tilde{R}_t(\wh_t)  - \Rh_t(\wh_t) +\Rh_t(\wh_t) - \Rh_t(\w_t^*) + \Rh_t(\w_t^*) - \tilde{R}_t(\w_t^*)\\
\leq{}& \tilde{R}_t(\wh_t)  - \Rh_t(\wh_t) + \Rh_t(\w_t^*) - \tilde{R}_t(\w_t^*)\\
={}& \E_{(\x,y)\sim S_0}\left[(\rio_t^*(\x)-\rioh_t(\x))\ell(\wh_t^\top\x,y)\right] + \E_{(\x,y)\sim S_0}\left[(\rioh_t(\x)-\rio^*_t(\x))\ell((\w_t^*)^\top\x,y)\right]\\
\leq {}& 2L\E_{\x\sim S_0}[\vert\rio_t^*(\x)-\rioh_t(\x)\vert],
\end{align*}
where the first inequality is due to the optimality of $\wh_t$ and the second inequality is by the condition $\max_{\x\in\W,y\in\Y,\w\in\W}\abs{\ell(\w^\top\x,y)}\leq L$.

Finally, as for term~(c), since $\w_t^*$ is independent of $S_0$, a direct application of the Hoeffding's inequality~\citep[Lemma B.6]{book:understanding} implies, 
\begin{align*}
\mathtt{term~(c)} = \tilde{R}_t(\w_t^*) - R_t(\w_t^*) \leq BL\sqrt{\frac{2\ln((8T)/\delta)}{N_0}}
\end{align*}
with probability at least $1-\delta/(2T)$. 
Combining the upper bounds for term~(a), term~(b) and term~(c) and taking the summation over $T$ rounds, we get 
\begin{align*}
\mathfrak{R}_T(\{\wh_t\}_{t=1}^T) \leq \frac{2BGDRT}{\sqrt{N_0}}+ 5BLT \sqrt{\frac{2\ln((8T)/\delta)}{N_0}} + 2L\sum_{t=1}^T\E_{\x\sim S_0}[\vert\rio_t^*(\x)-\rioh_t(\x)\vert]
\end{align*}
with probability at least $1-\delta$, which completes the proof.
\end{proof}

\subsection{Proof of Proposition~\ref{lemma:BD-conversion}}
\begin{proof}[Proof of Proposition~\ref{lemma:BD-conversion}]
We begin with converting the estimation error to the squared formulation,
\begin{align}
\sum_{t=1}^T \E_{\x\sim\D_0(\x)}[\vert \rio^*_t(\x) - \rioh_t(\x) \vert] \leq{}& \sum_{t=1}^T \sqrt{\E_{\x\sim\D_0(\x)}[\vert \rio^*_t(\x) - \rioh_t(\x) \vert^2]}\notag\\
\leq{}&\sqrt{T\left(\sum_{t=1}^T\E_{\x\sim\D_0(\x)}[\vert \rio^*_t(\x) - \rioh_t(\x) \vert^2]\right)},\label{eq:proof-BDconversion-1}
\end{align}
where the first inequality is by Jensen's inequality and the last inequality is due to the Cauchy-Schwarz inequality.
Then, we can bound the squared density ratio estimation error as
\begin{align}
\frac{\mu}{2}\E_{\x\sim\D_0(\x)}\left[\vert \rioh_t(\x) -\rio^*_t(\x) \vert^2\right]\leq{}& \E_{\x\sim\D_0(\x)}\left[\psi\left(\rio_t^*(\x)\right) - \psi\left(\rioh_t(\x)\right) - \partial\psi\left(\rioh_t(\x)\right)\left(\rio^*_t(\x) - \rioh_t(\x)\right)\right]\notag\\
={}& \DR_\psi(\rio_t^*\Vert\rioh_t)= L_t^\psi(\rioh_t) - L_t^\psi(\rio_t^*),\label{eq:proof-BDconversion-2}
\end{align}
where the first inequality is due to the strong convexity of the function $\psi$. The last equality is by the definition of the loss function $L_t^\psi$. Combining~\eqref{eq:proof-BDconversion-1} and~\eqref{eq:proof-BDconversion-2}, we have 
\begin{align*}
\sum_{t=1}^T \E_{\x\sim\D_0(\x)}[\vert\rioh_t(\x) - \rio^*_t(\x)  \vert]\leq \sqrt{\frac{2T}{\mu}\left(\sum_{t=1}^TL_t^\psi(\rioh_t) -\sum_{t=1}^T L_t^\psi(\rio_t^*)\right)}. 
\end{align*}
\end{proof}

\subsection{Proof of Theorem~\ref{thm:BD-conversion}}
\begin{proof}[Proof of Theorem~\ref{thm:BD-conversion}]
Following the same arguments in obtaining~\eqref{eq:proof-BDconversion-1}, the empirical cumulative estimation error of the density ratio estimators can be bounded as
\begin{align}
\sum_{t=1}^T \E_{\x\sim S_0(\x)}[\vert\rio^*_t(\x) - \rioh_t(\x) \vert] \leq \sqrt{T\left(\sum_{t=1}^T\E_{\x\sim S_0(\x)}\Big[\vert\rio^*_t(\x) - \rioh_t(\x) \vert^2\Big]\right)}.\label{eq:proof-BDconversion-addd-1}
\end{align}
Then, we proceed to bound the squared cumulative estimation error by the regret of the loss function $\tilde{L}_t^\psi(r)$ defined in~\eqref{eq:empirical-loss}. For notation simplicity, we define \[
f_1(z) = \partial\psi(z)z - \psi(z)~~~~~ \mbox{and}~~~~~ f_2(z) = \partial \psi(z).\]

In such a case, the loss function $\tilde{L}_t^\psi(r) = \E_{\x\sim S_0}\left[f_1(r(\x))\right] - \E_{\x\sim \D_t(\x)}\left[f_2(r(\x))\right]$ and the regret over $\Ltld_t^\psi$ can be written as 
\begin{align}
\Ltld_t^\psi(\rioh_t) - \Ltld_t^\psi(\rio_t^*) = \E_{\x\sim S_0}[f_1(\rioh_t(\x)) - f_1(\rio_t^*(\x))] - \E_{\x\sim \D_t}[f_2(\rioh_t(\x)) - f_2(\rio_t^*(\x))].\label{eq:proof-BDconversion-3}
\end{align}

For the first term of R.H.S. of~\eqref{eq:proof-BDconversion-3}, by the Taylor's Theorem, for each $\x\in\X$, we have 
\begin{align}
&\E_{\x\sim S_0}[f_1(\rioh_t(\x)) - f_1(\rio_t^*(\x))]\notag\\
={}& \E_{\x\sim S_0}\left[\partial f_1(\rio_t^*(\x)) (\rioh_t(\x)-\rio_t^*(\x)) + \frac{\partial^2 f_1(\xi_\x) }{2} (\rioh_t(\x)-\rio_t^*(\x))^2\right]\notag\\
\geq{}& \E_{\x\sim S_0}\left[\partial^2\psi\big(\rio_t^*(\x)\big)\rio_t^*(\x)\big( \rioh_t(\x)- \rio_t^*(\x)\big)\right] + \frac{\mu}{2}\E_{\x\sim S_0}\left[(\rioh_t(\x)-\rio_t^*(\x))^2\right].\label{eq:proof-BDconversion-4}
\end{align}
In above $\xi_\x\in\mathrm{dom}\ \psi$ is a certain point on the line collecting $\rioh_t(\x)$ and $\rio_t^*(\x)$. The second inequality is due to the definition of $f_1$ such that,
\begin{align*}
\partial^2 f_1(\xi_\x) = \partial^2\psi(\xi_\x) - \partial^3\psi(\xi_\x)\xi_\x\geq \mu,
\end{align*}
where the inequality holds due to the $\mu$-strong convexity of $\psi(z)$ and the condition $\xi_\x\partial^3\psi(\xi_\x)\leq 0$ for all $\xi_\x\in\mathrm{dom} \psi$.

Then, we proceed to bound the second term of R.H.S. of~\eqref{eq:proof-BDconversion-3},
\begin{align}
&\E_{\x\sim \D_t}[f_2(\rioh_t(\x)) - f_2(\rio_t^*(\x))] \notag\\
={}& \E_{\x\sim\D_t}\left[\partial f_2\big(\rio_t^*(\x)\big)\left(\rioh_t(\x) - \rio_t^*(\x)\right)\right] + \E_{\x\sim\D_t}\left[\frac{\partial^2f_2(\xi_\x)}{2}\left(\rioh_t(\x) - \rio_t^*(\x)\right)^2\right]\notag\\
\leq{} &\E_{\x\sim\D_t}\left[\partial^2 \psi\big(\rio_t^*(\x)\big)\left(\rioh_t(\x) - \rio_t^*(\x)\right)\right]
={}\E_{\x\sim\D_0}\left[\partial^2 \psi\big(\rio_t^*(\x)\big)\rio_t^*(\x)\left(\rioh_t(\x) - \rio_t^*(\x)\right)\right],\label{eq:proof-BDconversion-5}
\end{align}

In above $\xi_\x\in\mathrm{dom}\ \psi$ is a certain point on the line connecting $\rioh_t(\x)$ and $\rio_t^*(\x)$. The first inequality is due to the definition of $f_2$ and the condition on $\psi$, such that $\partial^2 f_2(\xi_\x) = \partial^3 \psi(\xi_\x) \leq 0$. Plugging~\eqref{eq:proof-BDconversion-4} and~\eqref{eq:proof-BDconversion-5} into~\eqref{eq:proof-BDconversion-3}, and rearranging the terms, we arrive
\begin{align}
 \frac{\mu}{2}\E_{\x\sim S_0}\left[(\rioh_t(\x)-\rio_t^*(\x))^2\right]\leq \Ltld_t^\psi(\rioh_t) - \Ltld_t^\psi(\rio_t^*)  + U_t,\label{eq:proof-BDconversion-addd-2}
\end{align}
where the term $U_t$ is defined as $$U_t = \E_{\x\sim\D_0}\left[\partial^2 \psi\big(\rio_t^*(\x)\big)\rio_t^*(\x)\left(\rioh_t(\x) - \rio_t^*(\x)\right)\right] - \E_{\x\sim S_0}\left[\partial^2\psi\big(\rio_t^*(\x)\big)\rio_t^*(\x)\big( \rioh_t(\x)- \rio_t^*(\x)\big)\right].$$

It remains to analyze the generalization gap $U_t$ for every iteration. The main challenge is that the learned model $\rioh_t$ depends on the initial dataset $S_0$ and the generalization error is related to the complexity of the hypothesis of the density ratio estimator. In the following lemma, we use the covering number~\citep[Chapter 27]{book:understanding} to measure the complexity of the hypothesis space. It is possible to provide an upper bound of $U_t$ with other advanced complexity measure.
\begin{myLemma}
\label{lem:concentration-U1}
Let $\H_r \triangleq\{\rio:\X\rightarrow \R\mid \norm{r}_{\infty}\leq B'_r \}$ be a hypothesis space of the density ratio function whose covering number is $N(\H_r,\epsilon,\norm{\cdot}_\infty)$ . Then, for all model $\rio\in\H_r$ and $\delta\in(0,1)$, we have the following upper bound
\begin{align*}
U_t = {}&\E_{\x\sim\D_0}\left[\partial^2 \psi\big(\rio_t^*(\x)\big)\rio_t^*(\x)\left(\rio(\x) - \rio_t^*(\x)\right)\right] - \E_{\x\sim S_0}\left[\partial^2\psi\big(\rio_t^*(\x)\big)\rio_t^*(\x)\big( \rio(\x)- \rio_t^*(\x)\big)\right]\\
\leq {}& \frac{C_r(\mu)\log \left({2N(\H_r,\epsilon,\norm{\cdot}_{\infty})}/{\delta}\right)}{N_0} + \frac{\mu}{4}\E_{S_0}\left[(\rio(\x)-\rio_t^*(\x))^2\right] + 2A_r\epsilon + \frac{\mu\epsilon^2}{4},
\end{align*}
with probability at least $1-\delta$ for any $\epsilon>0$, where $C_r(\mu) = {4A_rB_r}/{3} + {16 A_r^2}/{\mu} +3\mu B_r^2 = \O(\max\{\mu,{1}/{\mu}\})$ with constants $A_r = \abs{\max_{\x\in\X} \partial^2\psi(\rio_t^*(\x))\rio_t^*(\x)}$,
 and $B_r = \max\{B,B'_r\}$.
\end{myLemma}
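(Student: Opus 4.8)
The plan is a uniform-convergence argument that combines Bernstein's inequality, a \emph{localization} (offsetting) step, and a covering-number reduction. Write $\phi_t(\x) \triangleq \partial^2\psi(\rio_t^*(\x))\,\rio_t^*(\x)$, so that $\abs{\phi_t(\x)}\leq A_r$ for all $\x\in\X$, and observe that
\[
U_t = \E_{\x\sim\D_0}[\phi_t(\x)(\rio(\x)-\rio_t^*(\x))] - \E_{\x\sim S_0}[\phi_t(\x)(\rio(\x)-\rio_t^*(\x))]
\]
is precisely the gap between the population and empirical means of the i.i.d.\ random variables $g_\rio(\x_n) \triangleq \phi_t(\x_n)(\rio(\x_n)-\rio_t^*(\x_n))$, $n\in[N_0]$. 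Each $g_\rio$ is bounded by $2A_rB_r$ in absolute value (since $\abs{\rio(\x)-\rio_t^*(\x)}\leq B_r'+B\leq 2B_r$), and its variance obeys $\E_{\x\sim\D_0}[g_\rio(\x)^2]\leq A_r^2\,\E_{\x\sim\D_0}[(\rio(\x)-\rio_t^*(\x))^2]$.

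First I would treat a \emph{fixed} $\rio$. Bernstein's inequality applied to $\{g_\rio(\x_n)\}_{n=1}^{N_0}$ gives, with probability at least $1-\delta'$,
\[
U_t \leq \sqrt{\frac{2A_r^2\,\E_{\x\sim\D_0}[(\rio(\x)-\rio_t^*(\x))^2]\,\log(1/\delta')}{N_0}} + \O\!\left(\frac{A_rB_r\log(1/\delta')}{N_0}\right),
\]
and the crucial feature is that the variance proxy is exactly $\E_{\D_0}[(\rio-\rio_t^*)^2]$, the quantity we want to absorb. Young's inequality $\sqrt{ab}\leq \tfrac{\mu}{8}a + \tfrac{2}{\mu}b$ peels off $\tfrac{\mu}{8}\E_{\x\sim\D_0}[(\rio(\x)-\rio_t^*(\x))^2]$ at the cost of an $\O(A_r^2\log(1/\delta')/(\mu N_0))$ additive term. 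It then remains to replace the population second moment by its empirical counterpart: since $(\rio(\x)-\rio_t^*(\x))^2\in[0,4B_r^2]$, a second application of Bernstein (whose variance is again controlled by $4B_r^2\,\E_{\D_0}[(\rio-\rio_t^*)^2]$) followed by Young's inequality yields $\E_{\x\sim\D_0}[(\rio(\x)-\rio_t^*(\x))^2]\leq 2\,\E_{\x\sim S_0}[(\rio(\x)-\rio_t^*(\x))^2] + \O(B_r^2\log(1/\delta')/N_0)$. Chaining these estimates, for a fixed $\rio$ and with probability at least $1-\delta'$,
\[
U_t \;\leq\; \frac{\mu}{4}\,\E_{\x\sim S_0}[(\rio(\x)-\rio_t^*(\x))^2] \;+\; \frac{C_r(\mu)\log(1/\delta')}{N_0},
\]
where $C_r(\mu)$ aggregates the $\O(A_rB_r)$, $\O(A_r^2/\mu)$, and $\O(\mu B_r^2)$ contributions, hence $C_r(\mu)=\O(\max\{\mu,1/\mu\})$.

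Finally I would make the bound uniform over $\H_r$ by a covering argument: fix an $\epsilon$-net $\C_\epsilon\subseteq\H_r$ in $\norm{\cdot}_\infty$ with $\abs{\C_\epsilon}=N(\H_r,\epsilon,\norm{\cdot}_\infty)$, invoke the fixed-$\rio$ bound with $\delta'=\delta/(2\abs{\C_\epsilon})$ together with a union bound over $\C_\epsilon$, and for an arbitrary $\rio\in\H_r$ pass to a nearest net element $\rio'$. Since $\norm{\rio-\rio'}_\infty\leq\epsilon$ and $\abs{\phi_t}\leq A_r$, we get $U_t(\rio)\leq U_t(\rio')+2A_r\epsilon$, while $\E_{\x\sim S_0}[(\rio'(\x)-\rio_t^*(\x))^2]\leq \E_{\x\sim S_0}[(\rio(\x)-\rio_t^*(\x))^2]+\O(B_r\epsilon+\epsilon^2)$; after multiplying by $\mu/4$ this accounts for the $2A_r\epsilon$ and $\tfrac{\mu\epsilon^2}{4}$ terms in the statement (the remaining $\O(\epsilon)$-pieces being of lower order and absorbed into the constants). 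The main obstacle I anticipate is the localization step: one must use Bernstein rather than Hoeffding so that the deviation is driven by $\E_{\D_0}[(\rio-\rio_t^*)^2]$ itself, and then track the constants carefully through the two Young's-inequality splits and the $\D_0\!\rightarrow\!S_0$ conversion so that the coefficient of the empirical second moment stays at $\mu/4$; this is exactly what upgrades the naive $\O(1/\sqrt{N_0})$ deviation to the $\O(1/N_0)$ rate required by Theorem~\ref{thm:BD-conversion}.
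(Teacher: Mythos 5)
Your plan matches the paper's proof closely: same decomposition into i.i.d.\ random variables $g_\rho(\x_n)$, same Bernstein inequality to get variance driven by $\E_{\D_0}[(\rho-\rho_t^*)^2]$, same Young/AM--GM split to peel off a $\mu$-weighted second moment, a second Bernstein to pass from $\D_0$ to $S_0$, and finally a union bound over a $\norm{\cdot}_\infty$ $\epsilon$-net followed by a perturbation argument for an arbitrary $\rho$. So the approach is not genuinely different; this is the paper's route.

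There is, however, a small bookkeeping issue in your covering-number endgame that prevents you from landing exactly on the stated $\mu\epsilon^2/4$ residual, and the way you dismiss it is not quite right. You arrange the two Young splits so that the fixed-$\rho$ bound carries coefficient $\mu/4$ in front of $\E_{S_0}[(\rho-\rho_t^*)^2]$. Passing from a net element $\rho'$ to $\rho$, you then bound $\E_{S_0}[(\rho'-\rho_t^*)^2]\leq\E_{S_0}[(\rho-\rho_t^*)^2]+\O(B_r\epsilon+\epsilon^2)$, which after multiplying by $\mu/4$ produces a \emph{linear}-in-$\epsilon$ term $\O(\mu B_r\epsilon)$ in addition to the $2A_r\epsilon$ from the first moment. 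That piece cannot be ``absorbed into the constants'' of $C_r(\mu)\log(\cdot)/N_0$ (it is $\epsilon$-dependent and $N_0$-independent), and it is not lower order than $2A_r\epsilon$. The paper's version avoids this by stopping at coefficient $\mu/8$ in the fixed-$\rho$ bound and then applying $(a-b)^2\leq 2a^2+2b^2$ to the net element, which precisely doubles $\mu/8$ to $\mu/4$ while leaving only the quadratic residual $\mu\norm{\rho-\rho'}_\infty^2/4\leq\mu\epsilon^2/4$. In other words, the ``extra'' factor of $2$ you pay in the $\D_0\to S_0$ conversion should instead be reserved for the net-approximation step. Your version still yields a valid bound with a mildly different $\epsilon$-residual and remains sufficient for the downstream use in Theorem~\ref{thm:BD-conversion} (where $\epsilon=1/T$), but as written it does not reproduce the constants in the lemma, and the claim that the extra $\O(\epsilon)$-terms vanish into the constants should be corrected.
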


Combining~\eqref{eq:proof-BDconversion-addd-2} and Lemma~\ref{lem:concentration-U1} with the hypothesis space $\H_r = \H_\theta \triangleq \{\x\mapsto h(\x,\theta)\mid \theta\in\Theta\}$ and rearranging the terms yields,
\begin{align*}
\frac{\mu}{4} \E_{\x\sim S_0}\left[(\rioh_t(\x)-\rio_t^*(\x))^2\right] \leq \Ltld_t^\psi(\rioh_t) - \Ltld_t^\psi(\rio_t^*) + \frac{C_r(\mu)\log \left({2N(\H_\theta,\epsilon,\norm{\cdot}_{\infty})}/{\delta}\right)}{N_0}+ 2A_r\epsilon + \frac{\mu\epsilon^2}{4}.
\end{align*}
Further choosing $\epsilon = 1/T$ and taking the summation of the $T$ iterations, we have 
\begin{align}
{}&\frac{\mu}{4}\sum_{t=1}^T \E_{\x\sim S_0}\left[(\rioh_t(\x)-\rio_t^*(\x))^2\right] \notag\\
\leq{}& \sum_{t=1}^T\Ltld_t^\psi(\rioh_t) - \sum_{t=1}^T\Ltld_t^\psi(\rio_t^*) + \frac{C_r(\mu)T\log \left({2TN(\H_\theta,\nicefrac{1}{T},\norm{\cdot}_{\infty})}/{\delta}\right)}{N_0} +D_r\label{eq:proof-aaaa}
\end{align}
where $D_r = 2A_r +\mu/(4T)= \O(1)$ is a constant. Then, we proceed to bound the covering number of the hypothesis space $\H_{\theta}$. Let $\theta,\theta'\in\Theta$ be the corresponding parameters for the two density ratio function $\rio,\rio'\in\H_\theta$. Then, we can show that for any $\norm{\theta-\theta'}_2\leq \epsilon$, we have 
\[
\norm{\rio-\rio'}_{\infty} = \max_{\x\in\X}\vert h(\x,\theta) - h(\x,\theta')\vert\leq G_h\norm{\theta-\theta'}_2,
\]
where $G_h = \max_{\x\in\X,\theta\in\Theta} \norm{\nabla h(\x,\theta)}_2$ is the Lipschitz continuity constant. Then, we can check that the covering number of $\H_\theta$ in terms of $\norm{\cdot}_{\infty}$ can be bounded by that of $\Theta$ in terms of $\norm{\cdot}_2$ as 
\begin{align}
N(\H_\theta,\nicefrac{1}{T},\norm{\cdot}_{\infty})\leq N(\Theta,\nicefrac{1}{(G_h T)},\norm{\cdot}_2) \leq (3SG_hT)^d.\label{eq:proof-BDconversion-7} 
\end{align}
In the above, the last inequality holds because the parameter space $\Theta$ is essentially a $L_2$-ball with radius $S$, whose covering number is bounded by $(3S/\epsilon)^d$~\citep{pisier1999volume}. Then, plugging~\eqref{eq:proof-BDconversion-7} into~\eqref{eq:proof-aaaa}, we arrive
\begin{align*}
\sum_{t=1}^T\E_{\x\sim S_0}\left[(\rioh_t(\x)-\rio_t^*(\x))^2\right] \leq \frac{4}{\mu}\sum_{t=1}^T \left(\Ltld_t^\psi(\rioh_t) - \Ltld_t^\psi(\rio_t^*)\right)  + \frac{4dTC_r(\mu)\log \left({6SG_hT}/{\delta}\right)}{\mu N_0} + \frac{4D_r}{\mu}.
\end{align*}

Then, we complete the proof by plugging this inequality into~\eqref{eq:proof-BDconversion-addd-1},
\begin{align*}
{}& \sum_{t=1}^T \E_{\x\sim S_0(\x)}[\vert\rioh_t(\x) - \rio^*_t(\x)  \vert]\\
\leq{}& \sqrt{\frac{4T}{\mu}\left(\sum_{t=1}^TL_t^\psi(\rioh_t) - \sum_{t=1}^TL_t^\psi(\rio_t^*)\right) + \frac{4dT^2C_r(\mu)\log \left({6SG_hT}/{\delta}\right)}{\mu N_0}   + \frac{4TD_r}{\mu}}\\
={}& \sqrt{\frac{4T}{\mu}\max\left\{\sum_{t=1}^TL_t^\psi(\rioh_t) - \sum_{t=1}^TL_t^\psi(\rio_t^*),0\right\}} +\O\left(\max\left\{1,\frac{1}{\mu}\right\}\cdot\frac{T\sqrt{d}\log (T/\delta)}{\sqrt{N_0}}\right).
\end{align*}
\end{proof}

\subsection{Useful Lemmas}

\begin{myLemma}
\label{lem:concentration-itermediate-risk}
Let $\tilde{R}_t(\w)  = \E_{(\x,y)\sim S_0}\left[\rio^*_t(\x)\ell(\w^\top\x,y)\right]$. With probability at least $1-\delta$ over the drawn of $S_0$, all predictions $\w\in\W$ satisfy
\begin{align*}
\vert R_t(\w) - \tilde{R}_t(\w)\vert \leq \frac{2BGDR}{\sqrt{N_0}}+ 4BL \sqrt{\frac{2\ln(4/\delta)}{N_0}}.
\end{align*}
\end{myLemma}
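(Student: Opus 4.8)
The plan is to view $\tilde R_t(\w)$ as an empirical average over the i.i.d.\ sample $S_0\sim\D_0$ whose expectation is exactly $R_t(\w)$, and then run a standard uniform‑convergence argument that stays dimension‑free. Using the covariate‑shift part of Assumption~\ref{assump:covariate-shift}, i.e.\ $\D_t(y\given\x)=\D_0(y\given\x)$ together with $\rio_t^*(\x)=\D_t(\x)/\D_0(\x)$, one rewrites
\[
R_t(\w)=\E_{(\x,y)\sim\D_t}[\ell(\w^\top\x,y)]=\E_{(\x,y)\sim\D_0}\big[\rio_t^*(\x)\ell(\w^\top\x,y)\big].
\]
Writing $S_0=\{(\x_n,y_n)\}_{n=1}^{N_0}$, the quantity $\tilde R_t(\w)=\tfrac1{N_0}\sum_{n=1}^{N_0}\rio_t^*(\x_n)\ell(\w^\top\x_n,y_n)$ is then precisely the empirical mean of i.i.d.\ random variables lying in $[-BL,BL]$ (by $\rio_t^*\le B$ from Assumption~\ref{assump:covariate-shift} and $\abs{\ell}\le L$) whose common expectation is $R_t(\w)$.

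\textbf{Step 2 (bounded differences).} Set $\Phi(S_0):=\sup_{\w\in\W}\abs{R_t(\w)-\tilde R_t(\w)}$. Replacing one example in $S_0$ perturbs each per‑sample term by at most $2BL$, hence perturbs $\Phi$ by at most $2BL/N_0$; McDiarmid's inequality gives, with probability at least $1-\delta$,
\[
\Phi(S_0)\le\E[\Phi(S_0)]+BL\sqrt{\tfrac{2\ln(1/\delta)}{N_0}},
\]
and a marginally looser accounting of the constants yields the stated tail term $4BL\sqrt{2\ln(4/\delta)/N_0}$.

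\textbf{Step 3 (symmetrization and contraction).} By the standard symmetrization inequality, $\E[\Phi(S_0)]\le 2\,\Rcal_{N_0}(\F_t)$ where $\F_t=\{(\x,y)\mapsto\rio_t^*(\x)\ell(\w^\top\x,y):\w\in\W\}$ and $\Rcal_{N_0}$ denotes its (data‑ and sign‑averaged) Rademacher complexity. Since the per‑sample weights $\rio_t^*(\x_n)\in[0,B]$ do not depend on $\w$ and $u\mapsto\ell(u,y)$ is $G$‑Lipschitz (the content of $\norm{\nabla\ell(\w^\top\x,y)}_2\le G$), each coordinate map $u\mapsto\rio_t^*(\x_n)\ell(u,y_n)$ is $BG$‑Lipschitz; Talagrand's contraction lemma then peels off the loss, leaving the Rademacher complexity of the linear class $\{\x\mapsto\w^\top\x:\w\in\W\}$, which is at most $DR/\sqrt{N_0}$ because $\W$ has diameter $D$ and $\norm{\x}_2\le R$. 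Hence $\Rcal_{N_0}(\F_t)\le BGDR/\sqrt{N_0}$ and $\E[\Phi(S_0)]\le 2BGDR/\sqrt{N_0}$; combining with Step 2 gives the claimed bound, which is uniform over all $\w\in\W$.

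\textbf{Main obstacle.} The only delicate point is keeping Step 3 dimension‑free: one must argue that the multiplicative reweighting by $\rio_t^*$—data‑dependent but not $\w$‑dependent—can be absorbed into the Lipschitz constant (producing the factor $B$) via the per‑coordinate version of the contraction lemma, rather than through a covering‑number/Dudley bound on the $D$‑diameter parameter set, which would introduce a spurious $\sqrt d$. The remaining work—tracking numerical constants through the McDiarmid and symmetrization steps, and handling the two‑sided (absolute‑value) supremum—is routine.
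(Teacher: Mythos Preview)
Your proposal is correct and follows essentially the same approach as the paper: both invoke the standard Rademacher-complexity generalization bound (the paper cites it as \citep[Theorem~26.5]{book:understanding}, which packages your McDiarmid + symmetrization steps together), apply Talagrand's contraction lemma with the per-coordinate $BG$-Lipschitz observation on $\phi_n(z)=\rio_t^*(\x_n)\ell(z,y_n)$, and finish with the $DR/\sqrt{N_0}$ bound on the linear class. The paper works with the \emph{empirical} Rademacher complexity (which is where the factor $4$ and the $\ln(4/\delta)$ come from), but otherwise the arguments are identical.
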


\begin{proof}[Proof of Lemma~\ref{lem:concentration-itermediate-risk}]
Since the risk function $\tilde{R}_t(\w)$ is bounded by $BL$ for all $\w\in\W$, the standard analysis on generalization bound with Rademacher complexity (e.g.~\citep[Theorem 26.5]{book:understanding}) shows that 
\begin{align*}
\vert R_t(\w) - \tilde{R}_t(\w)\vert \leq 2\hat{\mathfrak{R}}_{S_0}(\L_t) + 4BL \sqrt{\frac{2\ln(4/\delta)}{N_0}}
\end{align*}
for any $\w\in\W$. In above, the function space $\L_t$ is defined as $\L_t = \{(\x,y)\mapsto\rio^*_t(\x)\ell(\w^\top\x,y)\mid  \w\in\W\}$ and $\hat{\mathfrak{R}}_{S_0}(\L_t) = \frac{1}{N_0} \E_{\bm{\sigma}\sim\{\pm 1\}^{N_0}}\left[\sup_{\w\in\W}\sum_{n=1}^{N_0}\sigma_i\rio_t(\x_n)\ell(\w^\top\x_n,y_n)\right]$is the empirical Rademacher complexity of $\L_t$. 

Note that the function $\phi_n:\R\rightarrow\R$ defined as $\phi_n(z) = \rio_t^*(\x_n)\ell(z,y_n)$ is $BG$-Lipschitz, such that $\abs{\phi_n(\w_1^\top\x_n)-\phi_n(\w_2^\top\x_n)}\leq BG\abs{(\w_1-\w_2)^\top\x_n}$ for any $\w_1,\w_2\in\W$. Then, according to the Talagrand’s lemma~\citep[Lemma 26.9]{book:understanding}, we have 
\begin{align*}
\hat{\mathfrak{R}}_{S_0}(\L_t) \leq BG \hat{\mathfrak{R}}_{S_0}(\tilde{\W}),
\end{align*}
where $\tilde{\W} = \{\x\mapsto\w^\top\x\mid \w\in\W\}$ is a hypothesis space for the linear functions. By~\citep[Lemma 26.10]{book:understanding}, the Rademacher complexity of the linear function class is bounded by 
\begin{align*}
\hat{\mathfrak{R}}_{S_0}(\tilde{\W}) \leq \frac{D\max_{\x_n\in S_0}\norm{\x_n}_2}{\sqrt{N_0}}\leq \frac{DR}{\sqrt{N_0}} .
\end{align*}
We complete the proof by combining the above displayed inequalities.
\end{proof}

\begin{proof}[Proof of Lemma~\ref{lem:concentration-U1}]
We begin with the analysis for the generalization gap for a fixed model $r\in\H_r$. For notation simplicity, we denote by 
\[f_r(\x) = \partial^2 \psi\big(\rio_t^*(\x)\big)\rio_t^*(\x)\left(\rio(\x) - \rio_t^*(\x)\right),\]
 and let the random variable $Z_i = \E_{\x\sim \D_0}[f_r(\x)] - f_r(\x_i)$ for any $\x_i\in S_0$. Since $\x_i$ is i.i.d. sampled from $\D_0$ and $f_r$ is independent of $S_0$, we have $\E[Z_i] = 0$ and $\vert Z_i\vert \leq 2A_rB_r$ with the constants $A_r = \max_{\x\in\X} \abs{\partial^2\psi(\rio_t^*(\x))\rio_t^*(\x)}$ and $B_r = \max\{B,\max_{\x\in\X,r\in\H_r} \vert\rio(\x)\vert\}$. Besides, the variance of $Z_i$ is bounded by 
\begin{align*}
\E[Z^2_i] = \E_{\x\sim\D_0}[(f_r(\x))^2] - (\E_{\x\sim \D_0}[f_r(\x)])^2 \leq \E_{\x\sim \D_0}\left[(f_r(\x))^2\right],
\end{align*}
Then, by the Bernstein's inequality~\citep[Lemma B.9]{book:understanding}, with probability at least $1-\delta$, we have 
\begin{align}
\E_{\x\sim D_0}[f_r(\x)] - \E_{\x\sim S_0}[f_r(\x)] \leq{}& \frac{4A_rB_r\log (1/\delta)}{3N_0} + \sqrt{\frac{2\log(1/\delta)\E_{\D_0}[(f_r(\x))^2])}{N_0}}\notag\\
\leq{}&  \left(\frac{4A_rB_r}{3} + \frac{16 A_r^2}{\mu}\right)\frac{\log (1/\delta)}{N_0} + \frac{\mu}{8A_r^2}\E_{\D_0}[\left(f_r(\x)\right)^2]\notag\\
\leq{}&  \left(\frac{4A_rB_r}{3} + \frac{16 A_r^2}{\mu}\right)\frac{\log (1/\delta)}{N_0}  + \frac{\mu}{8}\E_{\D_0}\left[(\rio(\x)-\rio_t^*(\x))^2\right]\label{eq:proof-BDconcentrationU1-1},
\end{align}
where the first inequality is due to the AM-GM inequality. The second inequality is due to the fact that $\vert f_r(\x)\vert \leq \abs{\partial^2\psi(\rio_t^*(\x))\rio_t^*(\x)}\cdot\abs{\rio(\x)-\rio_t^*(\x)}\leq A_r \vert \rio(\x)-\rio_t^*(\x)\vert$ for any $\x\in\X$.

Then, since $\rio$ is a fixed model in $\H_r$, we can further bound $\E_{\D_0}\left[(\rio(\x)-\rio_t^*(\x))^2\right]$ by $\E_{S_0}\left[(\rio(\x)-\rio_t^*(\x))^2\right]$ by applying the Bernstein's inequality to the random variable $Y_i = (\rio(\x_i)-\rio_t^*(\x_i))^2/(4B_r^2)$. Specifically, after checking $\E[Y_i] = \E_{\D_0}\left[(\rio(\x)-\rio_t^*(\x))^2\right]/(4B_r^2)$ and $Y_i\in[0, 1]$, a direct application of~\citep[Lemma B.10]{book:understanding} implies
\begin{align}
{}&\E_{\D_0}\left[(\rio(\x)-\rio_t^*(\x))^2\right]\notag \\
\leq{}& \E_{S_0}\left[(\rio(\x)-\rio_t^*(\x))^2\right] + \frac{16B_r^2\log(1/\delta)}{N_0} + \sqrt{\frac{8B_r^2\E_{S_0}\left[(\rio(\x)-\rio_t^*(\x))^2\right]\log(1/\delta) }{N_0}}\notag\\
\leq{}&\frac{24B_r^2\log(1/\delta)}{N_0} + \E_{S_0}\left[(\rio(\x)-\rio_t^*(\x))^2\right]\label{eq:proof-BDconcentrationU1-2},
\end{align}
where the last inequality is due to the AM-GM inequality. Plugging~\eqref{eq:proof-BDconcentrationU1-2} into~\eqref{eq:proof-BDconcentrationU1-1}, for a fixed $r\in\H_r$, we have 
\begin{align*}
{}& \E_{\x\sim D_0}[f_r(\x)] - \E_{\x\sim S_0}[f_r(\x)]\\
\leq {}&  \left(\frac{4A_rB_r}{3} + \frac{16 A_r^2}{\mu} +3\mu B_r^2\right)\frac{\log (2/\delta)}{N_0} +  \frac{\mu}{8}\E_{S_0}\left[(\rio(\x)-\rio_t^*(\x))^2\right],
\end{align*}
with probability at least $1-\delta$.

Then, based on the notion of covering number, we show a union bound of $U_t$ over all $r\in\H_r$. Let $\N(\H_r,\epsilon,\norm{\cdot}_{\infty})$ be the $\epsilon$-net of $\H_r$ such that for any $\rio\in\H_r$ one can find a $\rio'\in\N(\H_r,\epsilon,\norm{\cdot}_{\infty})$ satisfying $\norm{\rio-\rio'}_{\infty}\leq \epsilon$. The covering number $N(\H_r,\epsilon,\norm{\cdot}_{\infty})$ is defined as the minimal cardinality of an $\epsilon$-net of $\H_r$. By taking the union bound over all $r'\in\N(\H_r,\epsilon,\norm{\cdot}_{\infty})$, with probability at least $1-\delta$, the following holds for any $r'\in\N(\H_r,\epsilon,\norm{\cdot}_{\infty})$,
\begin{equation}
\label{eq:proof-BDconcentrationU1-3}
\begin{split}
 {}& \E_{\x\sim D_0}[f_{r'}(\x)] - \E_{\x\sim S_0}[f_{r'}(\x)]  \\
\leq {}& \frac{C_r(\mu)\log \left(\frac{2N(\H_r,\epsilon,\norm{\cdot}_{\infty})}{\delta}\right)}{N_0} + \frac{\mu}{8}\E_{S_0}\left[(\rio'(\x)-\rio_t^*(\x))^2\right],
\end{split}
\end{equation}
where $C_r(\mu) = {4A_rB_r}/{3} + {16 A_r^2}/{\mu} +3\mu B_r^2 = \O(\max\{\mu,{1}/{\mu}\})$. Then, for any model $r\in\H_r$, we can show 
\begin{align*}
&\E_{\x\sim D_0}[f_r(\x)] - \E_{\x\sim S_0}[f_r(\x)] \\
={}& \E_{\x\sim D_0}[f_{r'}(\x)] - \E_{\x\sim S_0}[f_{r'}(\x)] + \E_{\x\sim \D_0}[f_{r}(\x) -f_{r'}(\x)] - \E_{\x\sim S_0}[f_{r}(\x) - f_{r'}(\x)]\\
\leq{}& \E_{\x\sim D_0}[f_{r'}(\x)] - \E_{\x\sim S_0}[f_{r'}(\x)] + 2\epsilon A_r\\
\overset{\eqref{eq:proof-BDconcentrationU1-3}}{\leq}{}& \frac{C_r(\mu)\log \left({2N(\H_r,\epsilon,\norm{\cdot}_{\infty})}/{\delta}\right)}{N_0}  + \frac{\mu}{8}\E_{S_0}\left[(\rio'(\x)-\rio_t^*(\x))^2\right] + 2A_r\epsilon \\
\leq{}& \frac{C_r(\mu)\log \left({2N(\H_r,\epsilon,\norm{\cdot}_{\infty})}/{\delta}\right)}{N_0} + \frac{\mu}{4}\E_{S_0}\left[(\rio(\x)-\rio_t^*(\x))^2\right] + 2A_r\epsilon + \frac{\mu\epsilon^2}{4},
\end{align*}    
where the first inequality is due to the property of the $\epsilon$-net such that 
\[\E_{\x\sim\D_0}[f_r(\x)-f_{r'}(\x)] \leq \E_{\x\sim\D_0}[\abs{\partial^2\psi(\rio_t^*(\x))\rio'(\x)}\cdot\abs{r(\x)-\rio_t^*(\x)}]\leq A_r\norm{r-\rio'}_\infty\leq A_r\epsilon.\] 
The last inequality is by 
\begin{align*}
{}&\frac{\mu}{8}\E_{S_0}\left[(\rio'(\x)-\rio_t^*(\x))^2\right] \leq \frac{\mu}{4}\E_{S_0}\left[(\rio(\x)-\rio_t^*(\x))^2\right] + \frac{\mu}{4}\E_{S_0}\left[(\rio(\x)-\rio'(\x))^2\right] \\
\leq{}& \frac{\mu}{4}\E_{S_0}\left[(\rio(\x)-\rio_t^*(\x))^2\right] + \frac{\mu}{4}\norm{r- r'}^2_\infty \leq \frac{\mu}{4}\E_{S_0}\left[(\rio(\x)-\rio_t^*(\x))^2\right] + \frac{\mu\epsilon^2}{4}.
\end{align*}
Thus, we complete the proof.
\end{proof}

\section{Omitted Details for Section~\ref{sec:method_LR}}
This section presents the omitted details for Section~\ref{sec:method_LR}. We will first introduce the algorithm details for our online ensemble algorithm, characterized by its meta-base structure, in Appendix~\ref{appendix:ommited-algorithm-details}. Then, we present the theoretical guarantees alongside their proofs for the base-learner (Lemma~\ref{lem:ons-dynamic-theta}) and meta-learner (Lemma~\ref{lem:meta-DRE-interval}) in Appendix~\ref{appendix:proof-of-ons} and Appendix~\ref{appendix:proof-of-meta-learner} respectively. These elements collectively contribute to the dynamic regret guarantee of the overall algorithm. The dynamic regret of the online ensemble algorithm (Theorem~\ref{lem:overall-theta}) is presented in Appendix~\ref{appendix:thm-dynamic-regret}. We provide the proof for the average excess risk (Theorem~\ref{thm:DRE-dynamic-final}) in Appendix~\ref{appendix:proof-of-averaged-excess-risk}. Further discussions regarding the tightness of the bound and alternative choices for the Bregman divergence function are located in Appendices~\ref{appendix:tight} and~\ref{appendix:other-choice}.

\subsection{Omitted Algorithm Details}
\label{appendix:ommited-algorithm-details}
Here, we introduce the algorithmic details for the online ensemble method omitted in Section~\ref{sec:online-ensemble-method}.

\textbf{Interval Schedules.} We run multiple based learners on the geometric covering: 
\begin{align}
\label{eq:geometric-covering}
\mathcal{C} = \bigcup\nolimits_{k\in\mathbb{N}\cup\{0\}} \C_k ~~~\mbox{and}~~~\C_k = \left\{[{i\cdot2^k},{(i+1)\cdot 2^k}-1]\mid i\in\mathbb{N} \mbox{ and } i\cdot 2^k \leq T\right\}
\end{align}
and then employ a meta-learner to combine them by weights. For any interval $I_i = [s_i,e_i]\in \mathcal{C}$, which starts at time $t = s_i$ and end at $t = e_i$, we denote by $\Ecal_i$ the base-learner running over $I_i$. 

\begin{algorithm}[!t]
\caption{Base-learner $\Ecal_i$}
\label{alg:base-learner}
\begin{algorithmic}[1]
\REQUIRE Active interval $I_i = [s_i,e_i]$, regularization parameter $\lambda$ and step size $\eta$, feasible domain $\Theta$.
\STATE Initialize the model $\thetah_{s_i,i}$ as any point in the domain $\Theta$ and the matrix $A_{s_i-1} = \lambda I_d$.
\FOR{ $t \in I_i = [s_i,e_i]$}
\STATE Submit the model $\thetah_{t,i}$ to the meta-learner.
\STATE Update the matrix by $A_{t,i} = A_{t -1,i} + \nabla \Lh_t(\thetah_{t,i})\nabla \Lh_t(\thetah_{t,i})^\top$.
\STATE Update the model for next iteration by 
\begin{align*}
\thetah_{t+1,i} = \Pi^{A_{t,i}}_{\Theta}\big[\thetah_{t,i} - \gamma A_{t,i}^{-1} \nabla \Lh_t(\thetah_{t,i})\big],
\end{align*}
where $\Pi_{\Theta}^{A_{t,i}}[\nu] = \argmin_{\theta\in\Theta} \norm{\theta - \nu}_{A_{t,i}}$ is the projection operator.
\ENDFOR
\end{algorithmic}
\end{algorithm}

\textbf{Base-learner.} The base-learner $\Ecal_i$ updates his/her model with the ONS algorithm~\citep{journals/ml/HazanAK07} in the associated active period $I_i$. The algorithmic details of the base-learner are summarized in Algorithm~\ref{alg:base-learner}. At every iteration $t\in I_i$, the base-learner $\Ecal_i$ will first submit his/her model $\thetah_{t,i}$ to the meta-learner and then update the model for the next iteration by Line 4 and Line 5 as shown in Algorithm~\ref{alg:base-learner}.

\begin{algorithm}[!t]
\caption{Meta-learner}
\label{alg:meta-learner}
\begin{algorithmic}[1] 
\REQUIRE Interval set for base-learners $\C$.
\STATE Iitialize the set for active base-learner $\A_0 = \emptyset$ and let $K = \abs{\C}$.
\FOR{$t = 1,\dots,T$}
\FOR{$I_i = [s_i,e_i]\in\C$}
\IF{$s_i$ is equal to $t$}
\STATE Create a base-leaner $\Ecal_i$ by Algorithm~\ref{alg:base-learner} and initialize the potential $v_{s_i-1,i} = 1/K$ and step size $\epsilon_{s_i-1,i} = \min\{1/2, \sqrt{\ln K}\}$.
\STATE Add the index to the active set $\A_{t} = \A_{t-1}\cup \{i\}$.
\ELSIF{$e_i+1$ is equal to $t$}
\STATE Remove the index from the active set $\A_{t} = \A_{t-1}/\{i\}$.
\ENDIF
\ENDFOR
\STATE Receive the $\thetah_{i,t}$ from active base-learners $\Ecal_i\in\A_t$ and update their weights $p_{t,i}$ by~\eqref{eq:appendix-AdaMLprod-reweight-meta}.
\STATE Submit the model by $\thetah_{t} = \sum\nolimits_{i\in\Acal_{t}} p_{t,i}\thetah_{t,i}$.
\FOR{$i\in\A_t$}
\STATE Update the step size $\epsilon_{t+1,i}$ for all current active base-learner by~\eqref{eq:appendix-AdaMLprod-LR-meta}.
\STATE Update the potential $v_{t+1,i}$ for all current active base-learner by~\eqref{eq:appendix-AdaMLprod-update-meta}.
\ENDFOR
\ENDFOR
\end{algorithmic}
\end{algorithm}

\textbf{Meta-learner.} The meta-learner aggregates the predictions of each active base-learner by a weighted combination scheme. To obtain the weight $p_{t,i}$ for each active base-learner $\Ecal_i$, we maintain three terms $m_{t,i}$, $v_{t,i}$ and $\epsilon_{t,i}$ for each base-learner $\Ecal_i$ when he/she is active, i.e., $t\in I_i$. Let $g_t(\theta) = \inner{\nabla \hat{L}_t(\thetah_t)}{\theta - \thetah_t}$ be a linearized loss for the original function $\hat{L}_t(\theta)$. The first term is defined by
\begin{align*}
m_{t,i} \triangleq \frac{g_t(\thetah_t) - g_t(\thetah_{t,i})}{SR} = \frac{\inner{\nabla \Lh_t(\thetah_t)}{\thetah_{t}-\thetah_{t,i}}}{SR},
\end{align*}
which measures the performance gap between the meta-learner's prediction $\thetah_t$ and the $i$-th base-learner's prediction $\thetah_{t,i}$ over the linearized loss $g_t(\theta)$. The second term $v_{t,i}$ is the ``potential'' of the base-learner $\Ecal_i$, which measures the historical performance of $\Ecal_i$ over the interval. When invoking the base-learner $\Ecal_i$ at $t = s_i$, we initialize $v_{s_i-1,i} = 1/K$ and the term is updated by
\begin{align}
v_{t,i} = \Big(v_{t-1,i}\cdot\big(1+\epsilon_{t-1,i}m_{t,i}\big)\Big)^{\frac{\epsilon_{t,j}}{\epsilon_{t-1,j}}}, \forall t\in I_i.~\label{eq:appendix-AdaMLprod-update-meta}
\end{align}
The last term $\epsilon_{t,i}$ is the learning rate, which is set as
\begin{align}
\epsilon_{t,i} = \min\left\{\frac{1}{2},\sqrt{ \frac{\ln K}{{1+\sum\nolimits_{\tau=s_i}^t m^2_{\tau,i}}} }\right\}.~\label{eq:appendix-AdaMLprod-LR-meta}
\end{align}
After obtaining the $v_{t,i}$ and $\epsilon_{t,i}$ for each active base-learner, the weight for the next iteration $t+1$ is updated by
\begin{align}
p_{t+1,i} = \frac{\epsilon_{t,i}v_{t,i}}{\sum_{i\in \Acal_{t+1}}\epsilon_{t,i}v_{t,i}}, \mbox{ for all } i\in\Acal_{t+1}~\label{eq:appendix-AdaMLprod-reweight-meta}.
\end{align}

The algorithm is summarized in Algorithm~\ref{alg:meta-learner}. At the beginning of each iteration $t$, the algorithm will update the index set for the active base-learner (Line 3-Line 10). Then, the meta-learner's prediction is generated by the weighted combination rule (Line 11). After that, the meta-learner will update the potential $v_{t+1,i}$ and the step $\epsilon_{t+1,i}$ for the next iteration.

\subsection{Regret Guarantee for Base-Algorithm (Lemma~\ref{lem:ons-dynamic-theta})}
\label{appendix:proof-of-ons}
By suitable configurations of the parameters $\gamma,\lambda>0$, we show that the ONS algorithm enjoys the following dynamic regret guarantees,
\begin{myLemma}
\label{lem:ons-dynamic-theta}
Let $\theta_t^* = \argmin_{\theta\in\R^d} L_t(\theta)$ and all the minimizer $\theta_t^*$'s lie in the interior of $\Theta$. Set $\gamma = 6(1+\beta)$ and $\lambda =1$. Then, with probability at least $1-\delta$, the ONS running on the interval $I_i = [s_i,e_i]\subseteq[T]$ ensures 
\begin{align*}
\sum_{t\in I_i} \Ltld_t(\thetah_{t,i}) - \sum_{t\in I_i} \Ltld_t(\theta_t^*)\leq \O\left(d\beta\log \big({\abs{I_i}}/{{\delta}}\big) + \abs{I_i} \big(V^\theta_{I_i}\big)^2+ \frac{{\abs{I_i}\log({dT}/{\delta})}}{{N_0}}\right)
\end{align*}
where $V^\theta_{I_i} = \sum_{t = s_i+1}^{e_i} \norm{\theta_t^* - \theta_{t-1}^*}_2$ is the path length measuring the fluctuation of the minimizers.
\end{myLemma}

\subsubsection{Main Proof}
This section provides the proof of the ONS algorithm. Since the analysis of the single base-learner is independent from the online ensemble structure, for notation simplicity, we omit the subscribe of the base-learner in the analysis. Specifically, we abbreviate the prediction $\thetah_{t,i}$, matrix $A_{t,i}$ and the interval $I_i$ as $\thetah_{t}$, matrix $A_{t}$ and the interval $I$ respectively.

\begin{proof}[Proof of Lemma~\ref{lem:ons-dynamic-theta}]
We begin the proof of the dynamic regret with a lemma on the static regret of the ONS algorithm, whose proof is proved in Appendix~\ref{appendix:lemmas-ons-dynamic-theta}. For notation simplicity, we omit the subscribes of the base-learner, where the prediction $\theta_{t,i}$, matrix $A_{t,i}$ and interval $I_i$ is abbreviated as $\theta_t$, $A_t$ and $I$, respectively.
\begin{myLemma}
\label{lem:ons-static}
Set $\gamma = 6(1+\beta)$ and $\lambda =1$. With probability at least $1-2\delta$, for any $\delta\in(0,1]$, the ONS algorithm~\eqref{eq:ONS-update} running on the interval $I = [s,e]\subseteq[T]$ ensures
\begin{align*}
\sum_{t\in I} \Ltld_t(\thetah_{t}) - \sum_{t\in I} \Ltld_t(\theta) \leq C_I^{(1)}(\delta) = \O\left(d\beta\log (\abs{I}/{\delta})\right)
\end{align*}
for any $\theta\in\Theta$, where $C_I^{(1)}(\delta) = 12d\beta'\log\left(1+{\abs{I}R^2}\right) + \frac{S^2}{3\beta'} + (48\beta' + \frac{6S^2R^2}{\beta'})\log \frac{\sqrt{2\abs{I}+1}}{\delta} + (4SR+\frac{S^2R^2}{\beta'})\sqrt{\log\frac{2\abs{I}+1}{\delta^2}}$ and $\beta' = \beta +1$. 
\end{myLemma}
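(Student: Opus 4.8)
The plan is to prove Lemma~\ref{lem:ons-static} in two stages. Stage~1 runs the textbook Online Newton Step (ONS) analysis against the \emph{observed} losses $\{\Lh_t\}_{t\in I}$; Stage~2 transfers the resulting guarantee to the \emph{expected} losses $\{\Ltld_t\}_{t\in I}$ by bounding, along the ONS trajectory, the generalization gap between the two. For Stage~1 I would first verify that each $\Lh_t(\theta)=\tfrac12\big(\E_{\x\sim S_0}[\log(1+e^{-\x^\top\theta})]+\log(1+e^{\x_t^\top\theta})\big)$ is exp-concave on $\Theta$ with parameter of order $1/\beta$ and has gradients of norm at most $R$; this uses $\abs{\x^\top\theta}\le SR=\log\beta$ for all $\x\in\X,\theta\in\Theta$, since the scalar map $z\mapsto\log(1+e^z)$ has $f''/(f')^2=e^{-z}\ge e^{-\abs{z}}$ and averaging over $S_0$ preserves exp-concavity. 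Then the classical ONS potential/log-determinant argument with $\gamma=6(1+\beta)$ and $\lambda=1$ bounds $\sum_{t\in I}\nabla\Lh_t(\thetah_t)^\top A_t^{-1}\nabla\Lh_t(\thetah_t)$ and yields, for \emph{every} $\theta\in\Theta$, $\sum_{t\in I}\Lh_t(\thetah_t)-\sum_{t\in I}\Lh_t(\theta)\le C_I^{\Lh}:=12d(\beta+1)\log(1+\abs{I}R^2)+S^2/(3(\beta+1))=\O(d\beta\log\abs{I})$, which accounts for the first two terms of $C_I^{(1)}(\delta)$.

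For Stage~2, note that the constrained empirical minimizer $\theta_I^\star:=\argmin_{\theta\in\Theta}\sum_{t\in I}\Ltld_t(\theta)$ makes the left-hand side of the lemma largest, so it suffices to bound $R_I:=\sum_{t\in I}(\Ltld_t(\thetah_t)-\Ltld_t(\theta_I^\star))$. Inserting $\Lh_t$ gives $R_I=\big(\sum_t\Lh_t(\thetah_t)-\sum_t\Lh_t(\theta_I^\star)\big)-\sum_t Z_t$ with $Z_t:=(\Lh_t-\Ltld_t)(\thetah_t)-(\Lh_t-\Ltld_t)(\theta_I^\star)$, and the first bracket is $\le C_I^{\Lh}$ by Stage~1. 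Since $\thetah_t$ is measurable with respect to the filtration $\F_{t-1}$ generated by $S_0,\x_1,\dots,\x_{t-1}$, for any \emph{fixed} $\nu$ the sequence $(\Lh_t-\Ltld_t)(\thetah_t)-(\Lh_t-\Ltld_t)(\nu)$ is a martingale-difference sequence; I would dispose of the fact that $\theta_I^\star$ depends on the whole interval by passing to an $\epsilon$-net of $\Theta$ with $\epsilon=1/\abs{I}$, of cardinality at most $(3S\abs{I})^d$, union-bounding over it, and absorbing the $\O(1)$ Lipschitz discretization errors. The decisive estimate is the conditional variance: using $\abs{\log(1+e^a)-\log(1+e^b)}\le\abs{a-b}$ together with the uniform curvature bound $\nabla^2\big(\tfrac12\E_{\D_t}[\log(1+e^{\x^\top\cdot})]\big)\succeq\tfrac{c_0}{\beta}\E_{\D_t}[\x\x^\top]$ on $\Theta$, one obtains $\E[((\Lh_t-\Ltld_t)(\thetah_t)-(\Lh_t-\Ltld_t)(\nu))^2\mid\F_{t-1}]\le\O(\beta)\big(\Ltld_t(\thetah_t)-\Ltld_t(\nu)-\nabla\Ltld_t(\nu)^\top(\thetah_t-\nu)\big)$, i.e.\ the per-round variance is controlled by the Bregman divergence of $\Ltld_t$ at $(\thetah_t,\nu)$. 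Summing over $t$ and invoking the first-order optimality $\sum_t\nabla\Ltld_t(\theta_I^\star)^\top(\thetah_t-\theta_I^\star)\ge0$ of the constrained minimizer bounds the sum of these Bregman terms by $R_I+\O(1)$. Plugging into a Freedman-type inequality with a deterministic free parameter $\eta\asymp1/\beta$ (uniformly over the net) yields the self-referential inequality $R_I\le C_I^{\Lh}+c\eta\beta R_I+\eta^{-1}\log((3S\abs{I})^d/\delta)+\O(1)$; taking $\eta$ with $c\eta\beta\le\tfrac12$ and rearranging gives $R_I\le 2C_I^{\Lh}+\O(\beta\log(\abs{I}/\delta))=\O(d\beta\log(\abs{I}/\delta))$, which matches $C_I^{(1)}(\delta)$ once constants and the $\log(2\abs{I}+1)$ factors inherent to the net and the two concentration steps are tracked.

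The main obstacle is the whole of Stage~2: obtaining a \emph{logarithmic}, rather than $\O(\sqrt{\abs{I}})$, bound on the empirical-versus-expected gap $\sum_t Z_t$. A naive Azuma/Hoeffding bound would give only $\O(\sqrt{\abs{I}})$; the improvement hinges on the Bernstein-type self-bounding property, whereby each per-round conditional variance is dominated by a Bregman divergence of $\Ltld_t$ which — crucially, because $\theta_I^\star$ is the minimizer of a sum of \emph{expected}-type functions — re-sums to the regret $R_I$ itself, and on correctly closing the resulting implicit inequality while simultaneously handling the union bound over comparators (unavoidable since $\theta_I^\star$ is not $\F$-predictable). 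This is precisely where exploiting the structure ``the comparator is a minimizer of an expected function'' takes over the role played by the KKT-condition analysis in \citet{COLT'21:baby-strong-convex}.
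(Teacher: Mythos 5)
Stage~1 of your proposal is exactly the paper's route: verify exp-concavity and Lipschitzness of $\Lh_t$, then run the ONS log-determinant argument; the first two terms of $C_I^{(1)}(\delta)$ indeed come from $2d\gamma\log(1+\abs{I}R^2/\lambda)$ and $2\lambda S^2/\gamma$ with $\gamma=6(1+\beta)$, $\lambda=1$.

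Stage~2, however, contains a genuine gap. After bounding the per-round conditional variance by the Bregman divergence of $\Ltld_t$, you assert that ``invoking the first-order optimality $\sum_{t}\nabla\Ltld_t(\theta_I^\star)^\top(\thetah_t-\theta_I^\star)\ge 0$'' re-sums the Bregman terms to $R_I+\O(1)$. But first-order optimality of $\theta_I^\star=\argmin_{\theta\in\Theta}\sum_{t\in I}\Ltld_t(\theta)$ only says $\big\langle\sum_{t}\nabla\Ltld_t(\theta_I^\star),\,c-\theta_I^\star\big\rangle\ge 0$ for a \emph{single} $c\in\Theta$. In your sum the inner arguments $\thetah_t$ change with $t$, so the quantity $\sum_{t}\nabla\Ltld_t(\theta_I^\star)^\top(\thetah_t-\theta_I^\star)$ is not controlled by the optimality of $\theta_I^\star$; in the worst case it is only $\Omega(-SR\,\abs{I})$, which would wipe out the logarithmic rate. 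This step is where the proposal breaks. Separately, the $\epsilon$-net over $\Theta$ is unnecessary work: $\Ltld_t$ is a function of $S_0$ and the deterministic $\D_t$ only, so $\theta_I^\star$ is $\sigma(S_0)=\F_0$-measurable and hence $\F_{t-1}$-predictable, and the martingale-difference property holds directly for it (the paper's statement ``for any $\theta\in\Theta$'' is used downstream only with a fixed deterministic comparator $\bar\theta_I^*$).

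The paper sidesteps the re-summation entirely and does not need any optimality structure on $\theta$. Instead of bounding the variance by the Bregman divergence of $\Ltld_t$, it applies exp-concavity to $\Lh_t$ pointwise,
\[
\Lh_t(\thetah_t)-\Lh_t(\theta)\le\inner{\nabla\Lh_t(\thetah_t)}{\thetah_t-\theta}-\tfrac{1}{4(1+\beta)}\bigl(\nabla\Lh_t(\thetah_t)^\top(\thetah_t-\theta)\bigr)^2,
\]
and takes the conditional expectation over $S_t$ given $\F_{t-1}$, which both produces the generalization gap $Z_t=\inner{\nabla\Ltld_t(\thetah_t)-\nabla\Lh_t(\thetah_t)}{\thetah_t-\theta}$ and carries along a \emph{negative} term $-\tfrac{1}{4(1+\beta)}\E_t\bigl[(\nabla\Lh_t(\thetah_t)^\top(\thetah_t-\theta))^2\bigr]$. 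The gap $\sum_t Z_t$ is then handled by the self-normalized Bernstein inequality (Lemma~\ref{lem:bernstein-self-normalized}), whose variance proxy is $\nu\sum_t\bigl(Z_t^2+\E_t[Z_t^2]\bigr)\le 2\nu\sum_t\bigl((\nabla\Lh_t(\thetah_t)^\top(\thetah_t-\theta))^2+\E_t[\cdot]\bigr)$ — exactly what the negative exp-concavity term cancels, for \emph{any} fixed $\theta$. A second self-normalized step (Lemma~\ref{lem:ons-concentration-2}) reconciles the realized and conditional-expectation versions of this quadratic, yielding the third and fourth terms in $C_I^{(1)}(\delta)$. To repair your proof you would need either to replace the optimality step by working with the linearized gaps as above, or to supply a different argument controlling $\sum_t\nabla\Ltld_t(\theta_I^\star)^\top(\thetah_t-\theta_I^\star)$.
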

Then, we can decompose the dynamic regret as follows,
\begin{align}
\sum_{t\in I} \Ltld_t(\thetah_t) - \sum_{t\in I}\Ltld_t(\theta_t^*)
={}& \sum_{t\in I} \Ltld_t(\thetah_t) - \sum_{t\in I} \Ltld_t(\bar{\theta}_I^*) + \sum_{t\in I} \Ltld_t(\bar{\theta}_I^*) - \sum_{t\in I} \Ltld_t(\theta_t^*)\notag\\
\leq{}& C_I^{(1)}(\delta) + \sum_{t\in I}\Ltld_t(\bar{\theta}_I^*) - \sum_{t\in I}   \Ltld_t(\theta_t^*),\label{eq:proof-lem:ons-dynamic-1}
\end{align}
where $\bar{\theta}_I^* = \frac{1}{\abs{I}} \sum_{t\in I} \theta_t^*$ is an averaged prediction of the function minimizers over $T$ rounds. Under the condition all function minimizers lie in the interior of $\Theta$, we have $\bar{\theta}_I^*\in\Theta$. As a consequence, the last inequality holds due to Lemma~\ref{lem:ons-static}.

Then, we proceed to bound the last two terms of the R.H.S. of the above inequality. Since the first-order derivative of the sigmoid function $\sigma'(z) = \exp(-z)/(1+\exp(-z))^2\leq 1/4$ for any $z\in \R$. Let $I_d\in \R^{d\times d}$ be the identity matrix. we can check $\tilde{L}_t$ is a $R^2/4$-smooth function such that $\nabla^2 \Ltld_t(\theta) = \frac{1}{2} \E_{\x\sim S_0} [\sigma'(\x^\top\theta)\phi(\x)\phi(\x)^\top] + \frac{1}{2} \E_{\x\sim \D_t} [\sigma'(\x^\top\theta)\phi(\x)\phi(\x)^\top]\preccurlyeq \frac{R^2}{4} I_d$ for any $\theta\in\Theta$. Then, with probability at least $1-2\delta$, we have
\begin{align}
\Ltld_t(\bar{\theta}_I^*) - \Ltld_t(\theta_t^*) \leq {}&  \inner{\nabla \Ltld_t(\theta_t^*)}{\bar{\theta}_I^* - \theta_t^*} + \frac{R^2}{8} \norm{\bar{\theta}^*_I- \theta_t^*}^2_2\notag\\
\leq{}& \frac{8}{R^2}\norm{\nabla \Ltld_t(\theta_t^*)}_2^2 + \frac{R^2}{4} \norm{\bar{\theta}^*_I- \theta_t^*}^2_2\notag\\
\leq {}& \frac{16\ln ((d+1)/\delta)}{N_0} + \frac{R^2}{4} \norm{\bar{\theta}^*_I- \theta_t^*}^2_2 + o\left(\frac{\log(d/\delta)}{N_0}\right)\notag\\
\leq{}& \frac{16\ln ((d+1)/\delta)}{N_0} + \frac{R^2}{4} (V_I^\theta)^2 + o\left(\frac{\log(d/\delta)}{N_0}\right)\label{eq:proof-lem:ons-dynamic-theta}
\end{align} 
where $V_I^\theta = \sum_{t = s+1}^e\norm{\theta_{t-1}^* - \theta_{t}^*}_2$ with $s$ being the start time of the interval $I$ and $e$ being the ending time. In above, the first inequality is due to the smoothness of the loss function and the second inequality comes from the AM-GM inequality. The third inequality is due to Lemma~\ref{lem:ons-gradient-concentration}. The last inequality can be obtained by the definition of $\bar{\theta}_I^*$ such that
\begin{align*}
\norm{\bar{\theta}^*_I- \theta_t^*}^2_2 = \left(\left\Vert\frac{1}{\abs{I}}\sum_{\tau\in I}\theta_\tau^* - \theta_t^*\right\Vert_2\right)^2\leq \left(\frac{1}{\abs{I}}\sum_{\tau\in I}\norm{\theta^*_\tau - \theta^*_t}_2\right)^2 \leq (V_I^\theta)^2.
\end{align*}

Plugging~\eqref{eq:proof-lem:ons-dynamic-theta} into~\eqref{eq:proof-lem:ons-dynamic-1} and taking the summation over $T$ rounds, we obtain
\begin{align*}
{}&\sum_{t\in I} \Ltld_t(\thetah_{t}) - \sum_{t\in I} \Ltld_t(\theta)\\
 \leq{}& C_I^{(1)}(\delta) + \frac{16\abs{I}\ln ((d+1)\abs{I}/\delta)}{N_0} + \frac{R^2}{4}\abs{I} (V_I^\theta)^2 + o\left(\frac{\abs{I}\log(d\abs{I}/\delta)}{N_0}\right)\\
={}&\O\left(d\beta\log (\abs{I}/{\delta}) + \frac{\abs{I}\ln(d\abs{I}/\delta)}{N_0} + \abs{I} (V_I^\theta)^2\right)
\end{align*}
with probability at least $1-3\delta$. We have completed the proof.
\end{proof}

\subsubsection{Useful Lemmas}
\setcounter{myLemma}{3}
\begin{myLemma}
Set $\gamma = 6(1+\beta)$ and $\lambda =1$. With probability at least $1-2\delta$, for any $\delta\in(0,1]$, the ONS algorithm~\eqref{eq:ONS-update} running on the interval $I\subseteq[T]$ ensures,
\begin{align*}
\sum_{t\in I} \Ltld_t(\thetah_{t}) - \sum_{t\in I} \Ltld_t(\theta) \leq C_I^{(1)}(\delta) = \O\left(d\beta\log (\abs{I}/{\delta})\right)
\end{align*}
for any $\theta\in\Theta$, where $C_I^{(1)}(\delta) = 12d\beta'\log\left(1+{\abs{I}R^2}\right) + \frac{S^2}{3\beta'} + (48\beta' + \frac{6S^2R^2}{\beta'})\log \frac{\sqrt{2\abs{I}+1}}{\delta} + (4SR+\frac{S^2R^2}{\beta'})\sqrt{\log\frac{2\abs{I}+1}{\delta^2}}$ and $\beta' = \beta +1$. 
\end{myLemma}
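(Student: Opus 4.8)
The plan is to adapt the classical Online Newton Step (ONS) regret analysis to the noisy-feedback setting: the algorithm updates with the observed gradient $\nabla\hat L_t(\hat\theta_t)$, but the regret is measured by the semi-true losses $\tilde L_t$ defined with $S_0$ and $\D_t$. First I would establish an exp-concavity property of $\tilde L_t$ on $\Theta$. Since $\norm{\x}_2\le R$ and $\norm{\theta}_2\le S$ give $\abs{\x^\top\theta}\le SR$, each per-sample logistic term $\log(1+e^{\pm\x^\top\theta})$ satisfies $\nabla^2(\cdot)\succeq\tfrac1\beta\nabla(\cdot)\nabla(\cdot)^\top$ on $\Theta$ (using $\sigma(1-\sigma)\ge e^{-\abs{z}}\sigma^2$ together with $e^{-SR}=1/\beta$); averaging over $S_0$ and $\D_t$ and applying Jensen's inequality to the rank-one outer product transfers the inequality to $\tilde L_t$. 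Combined with $\norm{\nabla\tilde L_t}_2\le R$ and $\mathrm{diam}(\Theta)\le 2S$, the exp-concavity lemma of \citet{journals/ml/HazanAK07} yields, simultaneously for every $\theta\in\Theta$ and every $t$,
\[
\tilde L_t(\hat\theta_t)-\tilde L_t(\theta)\le \nabla\tilde L_t(\hat\theta_t)^\top(\hat\theta_t-\theta)-\tfrac c2\big(\nabla\tilde L_t(\hat\theta_t)^\top(\hat\theta_t-\theta)\big)^2
\]
for a constant $c$ of order $1/\beta$; I would also record that $\tilde L_t$ is $R^2/4$-smooth. Because this surrogate holds for all $\theta$ at once, the comparator can be arbitrary in $\Theta$ and no union bound over comparators is required.

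Next I would write $\nabla\tilde L_t(\hat\theta_t)=\nabla\hat L_t(\hat\theta_t)+\xi_t$, where $\xi_t$ is a bounded martingale-difference sequence for the filtration $\mathcal F_{t}=\sigma(S_0,S_1,\dots,S_t)$ (the iterate $\hat\theta_t$ is a function of $S_0,\dots,S_{t-1}$, and $\E[\nabla\hat L_t(\hat\theta_t)\mid\mathcal F_{t-1}]=\nabla\tilde L_t(\hat\theta_t)$ since $S_t\sim\D_t$). Abbreviating $v_t=\hat\theta_t-\theta$ (so $\norm{v_t}_2\le 2S$) and using $(a+b)^2\ge\tfrac12a^2-b^2$, summing over $I$ gives
\[
\sum_{t\in I}\big(\tilde L_t(\hat\theta_t)-\tilde L_t(\theta)\big)\le\sum_{t\in I}\big[\nabla\hat L_t(\hat\theta_t)^\top v_t-\tfrac c4\big(\nabla\hat L_t(\hat\theta_t)^\top v_t\big)^2\big]+\sum_{t\in I}\xi_t^\top v_t+\tfrac c2\sum_{t\in I}\big(\xi_t^\top v_t\big)^2.
\]
For the first sum I would invoke the standard ONS machinery on the \emph{observed} gradients: the generalized-projection inequality for the update $\hat\theta_{t+1}=\Pi^{A_t}_{\Theta}[\hat\theta_t-\gamma A_t^{-1}\nabla\hat L_t(\hat\theta_t)]$, telescoped using $A_t-A_{t-1}=\nabla\hat L_t(\hat\theta_t)\nabla\hat L_t(\hat\theta_t)^\top$ and combined with the log-determinant potential lemma, gives $\sum_{t\in I}\nabla\hat L_t(\hat\theta_t)^\top v_t\le \tfrac{1}{2\gamma}\sum_{t\in I}(\nabla\hat L_t(\hat\theta_t)^\top v_t)^2+2\lambda\gamma^{-1}S^2+\tfrac{\gamma}{2}d\log(1+\abs{I}R^2)$ (here $\gamma$ is the paper's \emph{inverse} step scale). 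Choosing $\lambda=1$ and $\gamma=6(1+\beta)$ so that $1/\gamma\le c/2$ leaves a \emph{spare negative term}: the first sum above is at most $-\big(\tfrac c4-\tfrac1{2\gamma}\big)\sum_{t\in I}(\nabla\hat L_t(\hat\theta_t)^\top v_t)^2+\O\big(d\beta\log(1+\abs{I}R^2)\big)+\tfrac{S^2}{3\beta'}$, which accounts for the $12d\beta'\log(1+\abs{I}R^2)$ and $\tfrac{S^2}{3\beta'}$ summands of $C_I^{(1)}(\delta)$ with $\beta'=\beta+1$.

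The crux is controlling the two martingale terms while keeping everything logarithmic in $\abs{I}$, and this is where the spare negative term must be spent. I would apply a time-uniform (peeling) Freedman inequality to the scalar MDS $\sum_{t\in I}\xi_t^\top v_t$: with probability $1-\delta$ it is at most $\sqrt{2W_I\log(\sqrt{2\abs{I}+1}/\delta)}+\tfrac23 b\log(\sqrt{2\abs{I}+1}/\delta)$ with $b=\O(SR)$ bounding $\abs{\xi_t^\top v_t}$ and $W_I=\sum_{t\in I}\E[(\xi_t^\top v_t)^2\mid\mathcal F_{t-1}]$. Since $\xi_t$ has conditional variance at most its conditional second moment, $W_I\le\sum_{t\in I}\E[(\nabla\hat L_t(\hat\theta_t)^\top v_t)^2\mid\mathcal F_{t-1}]$, and a further (Bernstein-type) concentration of this predictable sum against its realization replaces it by $\O\big(\sum_{t\in I}(\nabla\hat L_t(\hat\theta_t)^\top v_t)^2\big)+\O(\log(1/\delta))$; the same estimate bounds $\tfrac c2\sum_{t\in I}(\xi_t^\top v_t)^2$. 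An AM--GM split $\sqrt{W_I\log(\cdot)}\le\varepsilon W_I+\log(\cdot)/(4\varepsilon)$ with $\varepsilon$ small enough then makes the total positive multiple of $\sum_{t\in I}(\nabla\hat L_t(\hat\theta_t)^\top v_t)^2$ dominated by the $-\big(\tfrac c4-\tfrac1{2\gamma}\big)\sum_{t\in I}(\nabla\hat L_t(\hat\theta_t)^\top v_t)^2$ available above --- this is precisely why $\gamma$ carries the extra factor. What survives are the $\log(\sqrt{2\abs{I}+1}/\delta)$- and $\sqrt{\log((2\abs{I}+1)/\delta^2)}$-type residuals, which (with their $\O(\beta')$ and $\O(SR), \O(S^2R^2/\beta')$ coefficients) assemble into the remaining summands of $C_I^{(1)}(\delta)$; a union bound over the at most two concentration events gives the stated probability $1-2\delta$.

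The step I expect to be the main obstacle is exactly this self-bounding argument: one must track the constants across the two nested Freedman/Bernstein applications carefully enough that the accumulated gradient-noise variation --- a priori of order $\abs{I}$ --- is charged entirely to the exp-concavity negative term rather than appearing additively, so that the final bound is $\O(d\beta\log(\abs{I}/\delta))$ and not $\O(\sqrt{\abs{I}})$. A secondary nuisance is that the exp-concavity surrogate produces the Hessian at an intermediate point of the segment $[\hat\theta_t,\theta]$ rather than at $\hat\theta_t$; I would absorb this using the $R^2/4$-smoothness together with the uniform lower bound on $\sigma'$ (bounded below by a multiple of $1/\beta$ on $\Theta$), exactly as in the usual proof of the exp-concavity lemma.
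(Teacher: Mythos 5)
Your overall architecture --- ONS run on the observed gradients $\nabla\hat L_t(\hat\theta_t)$, a quadratic exp-concavity surrogate supplying a negative $\Theta(1/\beta)\cdot(\text{gradient projection})^2$ term, and Freedman/Bernstein concentration for the martingale noise, with the choice $\gamma=6(1+\beta)$ deliberately oversized to leave slack --- is the paper's strategy. But there is a concrete gap at the step where you pass from the exp-concavity surrogate for $\tilde L_t$ to one involving the observed gradient.

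You establish exp-concavity \emph{of $\tilde L_t$ directly}, i.e.\ $\tilde L_t(\hat\theta_t)-\tilde L_t(\theta)\le \nabla\tilde L_t(\hat\theta_t)^\top v_t - \tfrac{c}{2}\big(\nabla\tilde L_t(\hat\theta_t)^\top v_t\big)^2$ with $c=\Theta(1/\beta)$, and then use the crude split $(a+b)^2\ge\tfrac12 a^2 - b^2$ to replace $\nabla\tilde L_t^\top v_t$ by $\nabla\hat L_t^\top v_t + \xi_t^\top v_t$. This produces a \emph{fixed} additive term $+\tfrac{c}{2}\sum_{t\in I}(\xi_t^\top v_t)^2$, whose coefficient is set by the split and is not a free parameter you can shrink with AM--GM. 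Meanwhile, the spare negative coefficient your plan produces on $\sum_{t\in I}(\nabla\hat L_t^\top v_t)^2$ is $\tfrac{c}{4}-\tfrac{1}{2\gamma}$, which is strictly smaller than $\tfrac{c}{2}$ for every $\gamma>0$. Once you relate $\sum(\xi_t^\top v_t)^2$ to $\sum(\nabla\hat L_t^\top v_t)^2$ (via Bernstein and $\E_t[(\xi_t^\top v_t)^2]\le\E_t[(\nabla\hat L_t^\top v_t)^2]$), you are charging $\approx\tfrac{c}{2}\sum(\nabla\hat L_t^\top v_t)^2$ against a budget of only $\tfrac{c}{4}-\tfrac{1}{2\gamma}<\tfrac{c}{2}$. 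The argument cannot close: in the extreme case $\nabla\tilde L_t(\hat\theta_t)\approx 0$ (so $\nabla\hat L_t(\hat\theta_t)\approx -\xi_t$ and $(\xi_t^\top v_t)^2\approx(\nabla\hat L_t^\top v_t)^2$) your bound degrades to $\O(\abs{I})$, not $\O(\log\abs{I})$. Enlarging $\gamma$ does not help since $\tfrac{c}{4}-\tfrac{1}{2\gamma}\nearrow\tfrac{c}{4}<\tfrac{c}{2}$.

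The paper's proof avoids the noise-squared cross term entirely by applying the second-order Taylor bound to the \emph{observed} loss $\hat L_t$ pointwise (giving $\hat L_t(\hat\theta_t)-\hat L_t(\theta)\le\nabla\hat L_t(\hat\theta_t)^\top v_t-\tfrac{1}{4(1+\beta)}(\nabla\hat L_t(\hat\theta_t)^\top v_t)^2$, Eq.~\eqref{eq:exp-concave}) and \emph{then} taking the conditional expectation $\E_t[\cdot]$. Since $\hat\theta_t$ is $\mathcal F_{t-1}$-measurable, this yields $\tilde L_t(\hat\theta_t)-\tilde L_t(\theta)\le\langle\nabla\tilde L_t(\hat\theta_t),v_t\rangle-\tfrac{1}{4(1+\beta)}\E_t\big[(\nabla\hat L_t(\hat\theta_t)^\top v_t)^2\big]$, with no quadratic noise remainder. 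The resulting negative term is in $\E_t[(\nabla\hat L_t^\top v_t)^2]$, which by Jensen dominates $(\nabla\tilde L_t^\top v_t)^2$, so it is strictly stronger than what you obtain; it alone absorbs both the ONS $\tfrac{1}{2\gamma}$-term, the positive contributions from the Freedman bound on $\sum\langle\nabla\tilde L_t-\nabla\hat L_t,v_t\rangle$ (Lemma~\ref{lem:ONS-concentration}), and the empirical-vs-predictable concentration (Lemma~\ref{lem:ons-concentration-2}), leaving a net $\tfrac{1}{6(1+\beta)}(\tfrac34\sum(\cdot)^2-\tfrac54\sum\E_t[(\cdot)^2])$ that is controlled by a single further concentration step. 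If you want to keep your ``exp-concavity of $\tilde L_t$'' framing, the fix is to lower bound $\nabla^2\tilde L_t(\xi)$ by a multiple of $\E_t[\nabla\hat L_t(\hat\theta_t)\nabla\hat L_t(\hat\theta_t)^\top]$ rather than by $\nabla\tilde L_t\nabla\tilde L_t^\top$, which is exactly what taking $\E_t$ of the pointwise inequality achieves.
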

\label{appendix:lemmas-ons-dynamic-theta}
\begin{proof}[Proof of Lemma~\ref{lem:ons-static}] The main difference between the proof of Lemma~\ref{lem:ons-static} and the standard analysis for the ONS algorithm (e.g., proof of~\citep[Theorem 4.3]{book'16:Hazan-OCO}) is that the later one is for the full information online learning setting, in which the learner can exactly observe the loss function $\Ltld_t$. However, in the continuous covariate shift problem, we can only observe the empirical loss $\Lh_t$, which poses the challenge to bound a generalization gap between the expected loss $\Ltld_t$ and the empirical observation $\Lh_t$ (e.g. term (b) in~\eqref{eq:proof-ons-static-2}). We manage to show such a gap can be bounded by $\O(\log T)$ with the self-normalized concentration inequality for the martingale different sequence.

We start with the analysis of the instantaneous regret. By the Taylor's theorem, we have 
\begin{align}
\Lh_t(\thetah_t) - \Lh_t(\theta)  =  {\inner{\nabla \Lh_t(\thetah_t) }{\thetah_t - \theta}} -\frac{1}{2} {(\thetah_t-\theta)^\top \nabla^2 \Lh_t(\bm{\xi}_t)(\thetah_t-\theta)},\label{eq:proof-lemma-ons-hp-1}
\end{align}
where $\bm{\xi}_t\in\R^d$ is a certain point on the line connecting $\theta$ and $\thetah_t$. In above, the second order derivative $\nabla^2 \Ltld_t$ can be further lower bounded by
\begin{align}
\nabla^2 \Lh_t(\bm{\xi}_t) ={}& \frac{1}{2} \E_{\x\sim S_0} [\sigma'(\x^\top\bm{\xi}_t)\phi(\x)\phi(\x)^\top] + \frac{1}{2} \E_{\x\sim S_t} [\sigma'(\x^\top\bm{\xi}_t)\phi(\x)\phi(\x)^\top]\notag\\
\succcurlyeq{}& \frac{1}{4(1+e^{SR})} \left( \E_{\x\sim S_0} [\phi(\x)\phi(\x)^\top] + \E_{\x\sim S_t} [\phi(\x)\phi(\x)^\top]\right)\notag\\
\succcurlyeq{}&\frac{1}{2(1+\beta)} \nabla \Lh_t(\thetah_t) \nabla \Lh_t(\thetah_t)^\top,\label{eq:proof-lemma-ons-hp-2}
\end{align}
where $\sigma'(z) = \exp(-z)/(1+\exp(-z))^2$ is the first-order derivative of the sigmoid function $\sigma(z) = 1/(1+\exp(-z))$ and $A\succcurlyeq B$ indicates $A-B$ is a positive semi-definite matrix for any matrix $A,B\in \R^{d\times d}$. The first equality is due to the definition of $\Ltld_t$ and the second inequality is due to Lemma~\ref{lem:first-second-derivative-conversion}. 

Plugging~\eqref{eq:proof-lemma-ons-hp-2} into~\eqref{eq:proof-lemma-ons-hp-1}, we have
\begin{align}
\Lh_t(\thetah_t) - \Lh_t(\theta)  \leq  {\inner{\nabla \Lh_t(\thetah_t) }{\thetah_t - \theta}} -\frac{1}{4(1+\beta)} \left(\nabla\Lh_t(\thetah_t)^\top(\thetah_t-\theta)\right)^2.~\label{eq:exp-concave}
\end{align}
Let $\E_t[\cdot] = \E_{S_t\sim \D_t}[\cdot \given S_0, S_1,\dots,S_{t-1}]$ be the expectation taken over the draw of $S_t$ conditioned on the randomness until round $t-1$. Since $\thetah_t$ is independent of $S_t$, we have $\E_t[\Lh_t(\thetah_t)] = \Ltld_t(\thetah_t)$ and $\E_t[\nabla\Lh_t(\thetah_t)] = \nabla\Ltld_t(\thetah_t)$. Taking the expectation over both sides of~\eqref{eq:exp-concave} yields
\begin{align}
{}&\Ltld_t(\thetah_t) - \Ltld_t(\theta)\notag\\
 \leq{}& \inner{\nabla \Ltld_t(\thetah_t)}{\thetah_t-\theta} - \frac{1}{4(1+\beta)} \E_t\left[\big(\nabla\Lh_t(\thetah_t)^\top(\thetah_t-\theta)\big)^2\right]\notag\\
= {}& \inner{\nabla \Lh_t(\thetah_{t}) }{\thetah_t -\theta} + \inner{\nabla \Ltld_t(\thetah_t) - \nabla \Lh_t(\thetah_t) }{\thetah_t -\theta} - \frac{1}{4(1+\beta)} \E_t\left[\big(\nabla\Lh_t(\thetah_t)^\top(\thetah_t-\theta)\big)^2\right]\label{eq:proof-ons-static-1}
\end{align}
where the first term of the R.H.S. can be further bounded by
\begin{align}
{}&\inner{\nabla \Lh_t(\thetah_{t}) }{\thetah_t -\theta}\notag\\
 = {}& \inner{\nabla \Lh_t(\thetah_{t}) }{\thetah_{t+1}- \theta} + \inner{\nabla \Lh_t(\thetah_{t}) }{\thetah_t -\thetah_{t+1}}\notag \\
\leq{}& \frac{1}{2\gamma}\left(\norm{\thetah_t - \theta}_{A_t}^2 - \norm{\thetah_{t+1} - \theta}_{A_t}^2 - \norm{\thetah_t - \thetah_{t+1}}_{A_t}^2\right) + \inner{\nabla \Lh_t(\thetah_{t}) }{\thetah_t -\thetah_{t+1}} \notag\\
\leq{}&  \frac{1}{2\gamma}\left(\norm{\thetah_t - \theta}_{A_t}^2 - \norm{\thetah_{t+1} - \theta}_{A_t}^2 - \norm{\thetah_t - \thetah_{t+1}}_{A_t}^2\right) + 2\gamma\norm{\nabla \Lh_t(\thetah_t)}^2_{A^{-1}_t} + \frac{1}{2\gamma}\norm{\thetah_t -\thetah_{t+1}}^2_{A_t}\notag\\
= {}& 2\gamma\norm{\nabla \Lh_t(\thetah_t)}^2_{A^{-1}_t} + \frac{1}{2\gamma}\left(\norm{\thetah_t - \theta}_{A_t}^2 - \norm{\thetah_{t+1} - \theta}_{A_t}^2\right)\notag\\
={}& 2\gamma\norm{\nabla \Lh_t(\thetah_t)}^2_{A^{-1}_t} + \frac{1}{2\gamma}\left(\norm{\thetah_t - \theta}_{A_{t-1}}^2 - \norm{\thetah_{t+1} - \theta}_{A_t}^2\right) + \frac{1}{2\gamma}(\nabla \Lh_t(\thetah_t)^\top(\thetah_t-\theta))^2.\label{eq:proof-ons-static-terma-1}
\end{align}
In above, the first inequality is due to the update rule of the ONS algorithm and the Bregman proximal inequality~\citep[Lemma 5]{arXiV'21:sword-journal}. The second inequality is due to the AM-GM inequality. The last equality is due the definition of $A_t$ such that $\norm{\thetah_t - \theta}_{A_t}^2 - \norm{\thetah_{t} - \theta}_{A_{t-1}}^2 = (\nabla \Lh_t(\thetah_t)^\top(\thetah_t-\theta))^2$.

Plugging~\eqref{eq:proof-ons-static-terma-1} into~\eqref{eq:proof-ons-static-1} and taking the summation over the interval $I = [s,e]$, we have 
\begin{equation}
\label{eq:proof-ons-static-2}
\begin{split}
{}& \sum_{t\in I} \Ltld_t(\thetah_t) - \sum_{t\in I} \Ltld_t(\theta) \\
\leq{}&  \underbrace{2\gamma \sum_{t\in I} \norm{\nabla \Lh_t(\thetah_t)}^2_{A^{-1}_t}}_{\term{a}} + \frac{1}{2\gamma} \norm{\thetah_s-\theta}^2_{A_s} + \frac{1}{2\gamma}\sum_{t\in I} (\nabla \Lh_t(\thetah_t)^\top(\thetah_t-\theta))^2\\
 {}&+ \underbrace{\sum_{t\in I} \inner{\nabla \Ltld_t(\thetah_t) - \nabla \Lh_t(\thetah_t) }{\thetah_t -\theta}}_{\term{b}} - \frac{1}{4(1+\beta)}\sum_{t\in I} \E_t\left[\big(\nabla\Lh_t(\thetah_t)^\top(\thetah_t-\theta)\big)^2\right].
\end{split}
\end{equation}
Now, we proceed to bound term~(a) and term~(b), respectively. Firstly, according to Lemma~\ref{lem:ONS-EPL}, we can show term~(a) is bounded by
\begin{align*}
\term{a} \leq 2d\gamma \log\left(1+\frac{\abs{I}R^2}{\lambda}\right).
\end{align*}

Then, we handle term~(b) with the Bernstein-type self-normalized concentration inequality~\citep[Theorem 4]{COLT'15:ONS-high-prob} . Specifically, let $\E_t[\cdot] = \E_{S_t\sim \D_t}[\cdot \given S_0, S_1,\dots,S_{t-1}]$ be the expectation taken over the draw of $S_t$ conditioned on the randomness until round $t-1$. Denoting by $Z_t = \inner{\nabla \Ltld_t(\thetah_t) - \nabla \Lh_t(\thetah_t) }{\thetah_t -\theta}$, it is easy to check $\{Z_t\}_{t=s}^e$ is a martingale difference sequence such that $\E_t[Z_t] = 0$ and $\abs{Z_t} \leq 4SR$. Lemma~\ref{lem:ONS-concentration} with the choice $\nu = 1/(48(1+\beta))$ indicates,
\begin{align}
\term{b} \leq{}& \frac{1}{24(1+\beta)} \sum_{t\in I} \left(\big(\nabla \Lh_t(\thetah_t)^\top(\thetah_t-\theta)\big)^2 +  \E_t\left[\left(\nabla \Lh_t(\thetah_t)^\top(\thetah_t-\theta)\right)^2\right]\right)\notag\\
{}&+ 48(1+\beta)\log \frac{\sqrt{2\abs{I}+1}}{\delta} + 4SR\sqrt{\log\frac{2\abs{I}+1}{\delta^2}}.~\label{eq:proof-ons-static-termbref}
\end{align}
Combining the upper bounds for term~(a) and term~(b) with~\eqref{eq:proof-ons-static-2}, we have 
\begin{equation}
\label{eq:proof-ons-static-3}
\begin{split}
{}& \sum_{t\in I} \Ltld_t(\thetah_t) - \sum_{t\in I}\Ltld_t(\theta) \\
\leq {}&2d\gamma \log\left(1+\frac{\abs{I}R^2}{\lambda}\right) + \frac{2\lambda S^2}{\gamma} + 48(1+\beta) \log \frac{\sqrt{2\abs{I}+1}}{\delta} + 4SR\sqrt{\log\frac{2\abs{I}+1}{\delta^2}}\\
& + \underbrace{\left(\frac{1}{2\gamma} + \frac{1}{24(1+\beta)}\right)\sum_{t\in I}(\nabla \Lh_t(\thetah_t)^\top(\thetah_t-\theta))^2 - \frac{5}{24(1+\beta)}\sum_{t\in I}  \E_t\left[\big(\nabla\Lh_t(\thetah_t)^\top(\thetah_t-\theta)\big)^2\right]}_{\term{c}} 
\end{split}
\end{equation}

Then, setting $\gamma = 6(1+\beta)$, with probability at least $1-\delta$, we can bound term~(c) by
\begin{align*}
\term{c} \leq {}&\frac{1}{6(1+\beta)} \left(\frac{3}{4}\sum_{t\in I}\nabla \Lh_t(\thetah_t)^\top(\thetah_t-\theta))^2 - \frac{5}{4}\sum_{t\in I}\E_t\left[\big(\nabla\Lh_t(\thetah_t)^\top(\thetah_t-\theta)\big)^2\right]\right) \\
\leq{}& \frac{6S^2R^2}{1+\beta} \log \frac{\sqrt{2\abs{I}+1}}{\delta} + \frac{S^2R^2}{1+\beta}\sqrt{\log\frac{2\abs{I}+1}{\delta^2}} ,
\end{align*}
where the last inequality is due to Lemma~\ref{lem:ons-concentration-2}. Combining the upper bound for term~(c) and~\eqref{eq:proof-ons-static-3} and setting $\lambda = 1$, we have 
\begin{align*}
{}&\sum_{t\in I} \Ltld_t(\thetah_t) - \sum_{t\in I}\Ltld_t(\theta)\\
 \leq {}&12d(1+\beta)\log\left(1+{\abs{I}R^2}\right) + \frac{S^2}{3\beta+3} \\
\quad{}&+ \bigg(48(\beta+1) + \frac{6S^2R^2}{1+\beta}\bigg)\log \frac{\sqrt{2\abs{I}+1}}{\delta} + \bigg(4SR+\frac{S^2R^2}{\beta+1}\bigg)\sqrt{\log\frac{2\abs{I}+1}{\delta^2}}\\
{}=&\O(d\beta\log (\abs{I}/\delta)),
\end{align*}
with probability at least $1-2\delta$, which completes the proof.
\end{proof}

\begin{myLemma}
\label{lem:ons-gradient-concentration}
When all the minimizers $\theta_t^* = \argmin_{\theta\in\R^d} L_t(\theta)$ lie in the interior of $\Theta$, then we have 
\begin{align*}
\norm{\nabla \Ltld_t(\theta_t^*)}^2_2 = \frac{2R^2\ln ((d+1)/\delta)}{N_0} + o\left(\frac{\log(d/\delta)}{N_0}\right).
\end{align*}
\end{myLemma}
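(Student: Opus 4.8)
The plan is to use the first-order optimality of $\theta_t^*$ to express $\nabla\Ltld_t(\theta_t^*)$ as the empirical-process fluctuation of an average of i.i.d.\ mean-zero vectors, and then control it by a Bernstein-type vector concentration inequality. \textbf{Step 1 (reduction via optimality).} Since $\theta_t^* = \argmin_{\theta\in\R^d}L_t(\theta)$ and this minimizer lies in the interior of $\Theta$, it is an unconstrained stationary point, so $\nabla L_t(\theta_t^*)=0$ and hence $\nabla\Ltld_t(\theta_t^*) = \nabla\Ltld_t(\theta_t^*)-\nabla L_t(\theta_t^*)$. Writing $\sigma(z)=1/(1+e^{-z})$, a direct computation for the logistic instantiation gives $\nabla\Ltld_t(\theta)=\tfrac12\big(-\E_{\x\sim S_0}[\sigma(-\x^\top\theta)\x]+\E_{\x\sim\D_t}[\sigma(\x^\top\theta)\x]\big)$ and $\nabla L_t(\theta)=\tfrac12\big(-\E_{\x\sim\D_0}[\sigma(-\x^\top\theta)\x]+\E_{\x\sim\D_t}[\sigma(\x^\top\theta)\x]\big)$. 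The $\D_t$-expectation terms are identical (they involve no randomness from $S_0$), so they cancel on subtracting, leaving
\[
\nabla\Ltld_t(\theta_t^*) = -\frac{1}{2N_0}\sum_{n=1}^{N_0} Z_n,\qquad Z_n := \sigma(-\x_n^\top\theta_t^*)\,\x_n - \E_{\x\sim\D_0}\big[\sigma(-\x^\top\theta_t^*)\,\x\big].
\]

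\textbf{Step 2 (concentration).} The crucial point is that $\theta_t^*$ depends only on $\D_0,\D_t$ and not on the realized sample $S_0=\{\x_n\}$; hence $Z_1,\dots,Z_{N_0}$ are i.i.d.\ mean-zero vectors in $\R^d$ with $\norm{Z_n}_2\le 2R$ (as $\sigma\in(0,1)$ and $\norm{\x_n}_2\le R$). I would then invoke a Bernstein inequality for the Euclidean norm of a sum of independent random vectors — concretely, apply the matrix Bernstein inequality to the Hermitian dilations $\tilde Z_n=\left(\begin{smallmatrix}0 & Z_n^\top\\ Z_n & 0\end{smallmatrix}\right)\in\R^{(d+1)\times(d+1)}$, whose operator norm equals $\norm{Z_n}_2$ and whose summed variance proxy has norm at most $4N_0R^2$ — which yields, with probability at least $1-\delta$,
\[
\Big\|\sum_{n=1}^{N_0}Z_n\Big\|_2 \le \sqrt{8N_0R^2\ln\tfrac{d+1}{\delta}} + O\!\Big(R\ln\tfrac{d+1}{\delta}\Big),
\]
equivalently $\norm{\nabla\Ltld_t(\theta_t^*)}_2 \le \sqrt{2R^2\ln((d+1)/\delta)/N_0} + O\!\big(R\ln((d+1)/\delta)/N_0\big)$. \textbf{Step 3 (square).} Squaring this, the cross term is $O(\ln^{3/2}(d/\delta)/N_0^{3/2})$ and the lower-order square is $O(\ln^2(d/\delta)/N_0^2)$, both $o(\log(d/\delta)/N_0)$, which gives exactly $\norm{\nabla\Ltld_t(\theta_t^*)}_2^2 \le \frac{2R^2\ln((d+1)/\delta)}{N_0}+o(\log(d/\delta)/N_0)$ with probability at least $1-\delta$.

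There is no deep obstacle here; the only points requiring care are (i) that the interior-point hypothesis is exactly what makes $\nabla L_t(\theta_t^*)$ vanish, after which the $\D_t$ contributions cancel and one genuinely is left with an i.i.d.\ sum (rather than, say, a triangular array coupled to the data through $\theta_t^*$), and (ii) that the dimension should enter only through $\ln(d+1)$ and not as a power of $d$ — which is precisely what the Hermitian-dilation/matrix-Bernstein route (or a dimension-free vector Bernstein inequality) delivers; landing on the constant $2R^2$ is then routine bookkeeping in the Bernstein tail.
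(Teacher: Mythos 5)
Your proposal is correct and follows essentially the same route as the paper's own proof: use interior-point optimality to get $\nabla L_t(\theta_t^*)=0$, observe the $\D_t$ terms cancel so that $\nabla\Ltld_t(\theta_t^*)$ is proportional to an i.i.d.\ mean-zero vector average with $\norm{Z_n}_2\le 2R$, and then apply a dimension-softened ($\ln(d+1)$) vector concentration inequality — the paper cites Tropp's Theorem 6.1.1, which is exactly the Hermitian-dilation/matrix-Bernstein bound you invoke. The only cosmetic difference is a sign convention in $Z_n$ (you use $\sigma(-z)=1-\sigma(z)$), and you correctly make explicit that the conclusion holds with probability at least $1-\delta$, which the lemma statement leaves implicit.
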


\begin{proof}[Proof of Lemma~\ref{lem:ons-gradient-concentration}]
Since all the minimizers $\theta_t^* = \argmin_{\theta\in\R^d} L_t(\theta)$ lie in the interior of $\Theta$, then we have $\nabla L_t(\theta_t^*) = 0$ and the term $\norm{\nabla \Ltld_t(\theta_t^*)}_{2}^2 $ can be rewritten as
\begin{align}
\norm{\nabla \Ltld_t(\theta_t^*)}_{2}^2 ={}& \norm{\nabla \Ltld_t(\theta_t^*) - \nabla L_t(\theta_t^*)}_{2}^2 \notag\\
={}& \frac{1}{4}\norm{\E_{\x\sim S_0} [(\sigma(\x^\top\theta_t^*)-1)\phi(\x)] - \E_{\x\sim\D_0} [(\sigma(\x^\top\theta_t^*)-1)\phi(\x)]}_2^2\notag\\
={}& \frac{1}{4} \left\Vert\frac{1}{N_0} \sum_{n=1}^{N_0}Z_n\right\Vert_2^2,\label{eq:proof-conversion-5}
\end{align}
where $\{Z_n\}_{n=1}^{N_0}$ with $Z_n = (\sigma(\phi(\x_n)^\top\theta_t^*)-1)\phi(\x_n) - \E_{\x\sim\D_0} [(\sigma(\phi(\x)^\top\theta_t^*)-1)\phi(\x)]$ are $N_0$ independent random vectors. We can check that $\E_{\x_n\sim \D_0}[Z_n] = 0$ and $\norm{Z_n}_2 \leq 2R$ for any $\x_n\in\X$. Then by the Hoeffding's inequality for the random vector~\citep[Theorem 6.1.1]{book'15:matrix-concentration}, with probability at least $1-\delta$, we have
\begin{align}
\left\Vert\frac{1}{N_0}\sum_{n=1}^{N_0}Z_n\right\Vert_2 \leq \frac{2R\ln((d+1)\delta)}{3N_0}+ \sqrt{\frac{8R^2\ln ((d+1)/{\delta})}{N_0}}.\label{eq:proof-conversion-6}
\end{align}
Plugging~\eqref{eq:proof-conversion-6} into~\eqref{eq:proof-conversion-5} obtains $\norm{\nabla \Ltld_t(\theta_t^*)}^2_2 = \frac{2R^2\ln ((d+1)/\delta)}{N_0} + o\left(\frac{\log(d/\delta)}{N_0}\right)$.
\end{proof}

\begin{myLemma}
\label{lem:first-second-derivative-conversion}
Let $\nabla \Lh_t(\thetah_t) = \frac{1}{2} \E_{\x\sim S_0} [(\sigma(\phi(\x)^\top\thetah_t)-1)\phi(\x)] + \frac{1}{2} \E_{\x\sim S_t} [\sigma(\phi(\x)^\top\thetah_t)\phi(\x)]$. We have $$2\nabla \Lh_t(\thetah_t) \nabla \Lh_t(\thetah_t) ^\top\preccurlyeq\E_{\x\sim S_0} [\x\x^\top] + \E_{\x\sim S_t} [\phi(\x)(\phi(\x)^\top].$$
\end{myLemma}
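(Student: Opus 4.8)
The plan is to reduce the claimed matrix inequality to a one-dimensional statement: since $A\preccurlyeq B$ for symmetric matrices is equivalent to $\u^\top A\u\le \u^\top B\u$ for all $\u\in\R^d$, it suffices to prove that
\[
2\big(\inner{\u}{\nabla \Lh_t(\thetah_t)}\big)^2 \le \E_{\x\sim S_0}[(\u^\top\x)^2] + \E_{\x\sim S_t}[(\u^\top\x)^2]
\]
for every $\u\in\R^d$, where I have used $\u^\top\E_{\x\sim S}[\x\x^\top]\u = \E_{\x\sim S}[(\u^\top\x)^2]$.

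First I would abbreviate $\sigma_\x \triangleq \sigma(\x^\top\thetah_t)$, which lies in $[0,1]$, and split the inner product along the two empirical averages, writing $\inner{\u}{\nabla \Lh_t(\thetah_t)} = \tfrac12(a+b)$ with $a\triangleq \E_{\x\sim S_0}[(\sigma_\x-1)\,\u^\top\x]$ and $b\triangleq \E_{\x\sim S_t}[\sigma_\x\,\u^\top\x]$. The elementary bound $(a+b)^2\le 2(a^2+b^2)$ then gives $2(\inner{\u}{\nabla \Lh_t(\thetah_t)})^2 = \tfrac12(a+b)^2 \le a^2 + b^2$.

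Next I would bound $a^2$ and $b^2$ separately by Jensen's inequality (equivalently, Cauchy--Schwarz applied to the empirical average): $a^2 \le \E_{\x\sim S_0}[(\sigma_\x-1)^2(\u^\top\x)^2] \le \E_{\x\sim S_0}[(\u^\top\x)^2]$, where the last inequality uses $(\sigma_\x-1)^2\le 1$ because $\sigma_\x\in[0,1]$; analogously $b^2\le \E_{\x\sim S_t}[\sigma_\x^2(\u^\top\x)^2]\le \E_{\x\sim S_t}[(\u^\top\x)^2]$ using $\sigma_\x^2\le 1$. Summing the two estimates yields the displayed one-dimensional inequality, and since $\u$ was arbitrary this is precisely $2\nabla \Lh_t(\thetah_t)\nabla \Lh_t(\thetah_t)^\top \preccurlyeq \E_{\x\sim S_0}[\x\x^\top] + \E_{\x\sim S_t}[\x\x^\top]$.

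I do not expect any genuine obstacle here: the argument is just a chain of standard inequalities. The only point worth flagging is that the scalar weights multiplying $\x$ in $\nabla \Lh_t(\thetah_t)$, namely $\sigma_\x-1$ and $\sigma_\x$, are bounded by one in absolute value, which is exactly what allows them to be discarded after applying Jensen. The same proof works verbatim with any distributions in place of the empirical measures over $S_0$ and $S_t$.
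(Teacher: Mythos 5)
Your proof is correct and is essentially the same as the paper's, just phrased through the quadratic form $\u^\top(\cdot)\u$ instead of directly as matrix inequalities: your $(a+b)^2\le 2(a^2+b^2)$ is the scalar version of the paper's Lemma~\ref{lem:conversion}, and your Jensen step $a^2\le\E[(\sigma_\x-1)^2(\u^\top\x)^2]$ is the scalar version of the paper's $\E[\a]\E[\a]^\top\preccurlyeq\E[\a\a^\top]$. Both then discard $(\sigma_\x-1)^2\le 1$ and $\sigma_\x^2\le 1$, so the two arguments are step-for-step identical.
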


\begin{proof}[Proof of Lemma~\ref{lem:first-second-derivative-conversion}] 
For notation simplicity, we denote by $\a_t = \E_{\x\sim S_0} [(\sigma(\x^\top\thetah_t)-1)\x]$ and $\b_t = \E_{\x\sim S_t} [\sigma(\x^\top\thetah_t)\x]$. Then, according to the definition of $\nabla \Lh_t(\thetah_t)$, we have
\begin{align*}
&\nabla \Lh_t(\thetah_t)\nabla \Lh_t(\thetah_t)^\top  = \frac{1}{4} (\a_t+\b_t)(\a_t+\b_t)^\top \preccurlyeq \frac{1}{2}(\a_t\a_t^\top + \b_t\b_t^\top),
\end{align*}
where the last inequality is due to Lemma~\ref{lem:conversion}. We can further bound the R.H.S. of the above inequality by 
\begin{align*}
\a_t\a_t^\top ={}& \left(\E_{\x\sim S_0} [(\sigma(\phi(\x)^\top\thetah_t)-1)\phi(\x)]\right)\left(\E_{\x\sim S_0} [(\sigma(\phi(\x)^\top\thetah_t)-1)\phi(\x)]\right)^\top\\
\preccurlyeq{}& \E_{\x\sim S_0} [(\sigma(\x^\top\thetah_t)-1)^2\phi(\x)\phi(\x)^\top]\preccurlyeq \E_{\x\sim S_0} [\phi(\x)\phi(\x)^\top].
\end{align*}
In above the first inequality is due to the fact that $\E[\a]\E[\a]^\top\preccurlyeq \E[\a\a^\top]$ for any random vector $\a\in\R^d$. The second inequality comes from $(\sigma(\x^\top\thetah_t)-1)^2\leq 1$ since the output value of the sigmoid function $\sigma$ is bounded in $(0,1)$. A similar arguments shows $\b_t\b_t^\top \leq \E_{\x\sim S_t}[\phi(\x)\phi(\x)^\top]$. 

Combining it with above displayed inequalities completes the proof.
\end{proof}

\begin{myLemma}
\label{lem:ONS-EPL}
The predictions returned by the ONS algorithm satisfies,
 \begin{align*}
 \sum_{t\in I} \norm{\nabla \Lh_t(\thetah_t)}^2_{A^{-1}_t} \leq d\log\left(1+\frac{\abs{I}R^2}{\lambda}\right).
 \end{align*}

\end{myLemma}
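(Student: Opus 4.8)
The plan is to recognize Lemma~\ref{lem:ONS-EPL} as the standard elliptical-potential (log-determinant) lemma, specialized to the matrix $A_t = \lambda I_d + \sum_{\tau=s}^{t}\g_\tau\g_\tau^\top$ that accumulates the \emph{empirical} gradients $\g_\tau \triangleq \nabla\Lh_\tau(\thetah_\tau)$ along the interval $I=[s,e]$. The first step is to record a uniform gradient bound: since $\nabla\Lh_t(\theta) = \tfrac12\E_{\x\sim S_0}[(\sigma(\x^\top\theta)-1)\x] + \tfrac12\E_{\x\sim S_t}[\sigma(\x^\top\theta)\x]$ with $\sigma(\cdot)\in(0,1)$ and $\norm{\x}_2\le R$, each of the two scaled conditional averages has Euclidean norm at most $R/2$, so $\norm{\g_t}_2\le R$ for every $t\in I$.

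Next I would exploit the rank-one recursion $A_t = A_{t-1} + \g_t\g_t^\top$ anchored at $A_{s-1} = \lambda I_d$. By the Sherman--Morrison identity, $\norm{\g_t}_{A_t^{-1}}^2 = \g_t^\top A_t^{-1}\g_t = z_t/(1+z_t)$ with $z_t \triangleq \g_t^\top A_{t-1}^{-1}\g_t \ge 0$; the elementary inequality $x/(1+x)\le\log(1+x)$ for $x\ge 0$ then yields $\norm{\g_t}_{A_t^{-1}}^2 \le \log(1+z_t)$. The matrix determinant lemma identifies $1+z_t = \det(A_t)/\det(A_{t-1})$, so summing over $t\in I$ telescopes to $\sum_{t\in I}\norm{\g_t}_{A_t^{-1}}^2 \le \log\!\big(\det(A_e)/\det(A_{s-1})\big) = \log\!\big(\det(A_e)/\lambda^d\big)$.

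Finally I would control $\det(A_e)$ by AM--GM on its eigenvalues, $\det(A_e)\le(\mathrm{tr}(A_e)/d)^d$, combined with $\mathrm{tr}(A_e) = \lambda d + \sum_{\tau=s}^{e}\norm{\g_\tau}_2^2 \le \lambda d + \abs{I}R^2$; this gives $\sum_{t\in I}\norm{\g_t}_{A_t^{-1}}^2 \le d\log\!\big(1 + \abs{I}R^2/(\lambda d)\big) \le d\log\!\big(1+\abs{I}R^2/\lambda\big)$, the last step since $d\ge 1$, which is the claimed bound. There is no genuine obstacle here; the only points needing care are bookkeeping ones --- anchoring the telescoping sum at the correct base matrix $A_{s-1}=\lambda I_d$ (the interval starts at $s$, not at $1$), and taking the gradient bound with respect to the \emph{observed} loss $\Lh_t$ rather than the expected loss $\Ltld_t$ --- and neither requires a new idea.
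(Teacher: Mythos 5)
Your proposal is correct and follows the same high-level structure as the paper's proof: both are the elliptical-potential (log-determinant) lemma, reducing the sum to $\log\big(\det(A_e)/\det(\lambda I_d)\big)$ by telescoping and then bounding $\det(A_e)$ using $\norm{\nabla\Lh_t(\thetah_t)}_2\le R$. The difference is in the micro-step that gets the per-round inequality $\norm{\g_t}_{A_t^{-1}}^2 \le \log\big(\det(A_t)/\det(A_{t-1})\big)$: the paper rewrites the left-hand side as $\mathrm{trace}\big(A_t^{-1}(A_t-A_{t-1})\big)$ and invokes the concavity fact $\mathrm{trace}(A^{-1}(A-B))\le\log(\det A/\det B)$ for $A\succcurlyeq B\succ 0$ (Hazan's Lemma~4.5), while you derive the same inequality more explicitly via Sherman--Morrison, $x/(1+x)\le\log(1+x)$, and the matrix-determinant lemma. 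Your route is slightly more self-contained (it avoids citing the trace--logdet inequality as a black box), and your final determinant bound via AM--GM on the eigenvalues, $\det(A_e)\le(\mathrm{tr}(A_e)/d)^d$, is marginally sharper than the paper's largest-eigenvalue bound $\det(A_e)\le(\lambda+\abs{I}R^2)^d$, though both collapse to the same stated constant after dropping the $1/d$. You also correctly flag the bookkeeping point the paper glosses over: the telescoping should be anchored at $A_{s-1}=\lambda I_d$ (the paper writes $A_s=\lambda I_d$, which is inconsistent with its own definition $A_t=\lambda I+\sum_{\tau=s}^{t}\g_\tau\g_\tau^\top$ but does not affect the result).
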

\begin{proof}[Proof of Lemma~\ref{lem:ONS-EPL}]
The proof follows the standard arguments in the analysis of the online Newton step algorithm~\citep[Theorem 4.3]{book'16:Hazan-OCO}. We present the proof here for self-containedness.
\begin{align*}
 \sum_{t\in I} \norm{\nabla \Lh_t(\thetah_t)}^2_{A^{-1}_t} ={}&\sum_{t\in I} \nabla \Lh_t(\thetah_t)^\top A_t^{-1} \nabla \Lh_t(\thetah_t)\\
  ={}& \sum_{t\in I}\mathrm{trace}\left(A_t^{-1}\nabla \Lh_t(\thetah_t)\nabla \Lh_t(\thetah_t)^\top\right)\\
  ={}& \sum_{t\in I}\mathrm{trace}\left(A_t^{-1}(A_t-A_{t-1})\right)\\
  \leq{}& \sum_{t\in I} \log \frac{\abs{A_t}}{\abs{A_{t-1}}} = \log \frac{\abs{A_e}}{\abs{A_s}},
\end{align*}
where the last inequality is due to the fact that $\mathrm{trace}(A^{-1}(A-B))\leq \log (\abs{A}/\abs{B})$ for any positive definite matrix satisfying $A\succcurlyeq B \succ 0$~\citep[Lemma 4.5]{book'16:Hazan-OCO}.

By definition, we have $A_s = \lambda I_d\in\R^{d\times d}$ and we have $A_e = \lambda I_d + \sum_{t\in I} \nabla \Lh_t(\thetah_t)\nabla \Lh_t(\thetah_t)^\top$. Furthermore, since $\norm{\nabla \Lh_t(\thetah_t)}_2\leq R$, we have $\abs{A_e}\leq (\abs{I}R^2+\lambda)^d$. In such a case, we can bound the above inequality by 
\begin{align*}
\sum_{t\in I} \norm{\nabla \Lh_t(\thetah_t)}^2_{A^{-1}_t}  \leq \log \frac{\abs{A_e}}{\abs{A_s}} \leq \log \frac{(\abs{I}R^2+\lambda)^d}{\lambda^d}\leq d\log\left(1+\frac{\abs{I}R^2}{\lambda}\right)
\end{align*}
and finish the proof.
\end{proof}

\begin{myLemma}
\label{lem:ONS-concentration}
Let $Z_t = \inner{\nabla \Ltld_t(\thetah_t) - \nabla \Lh_t(\thetah_t) }{\thetah_t -\theta}$. Then, with probability at least $1-\delta$, we have 
\begin{align*}
\sum_{t\in I} Z_t \leq {}&2\nu \sum_{t\in I} \left(\nabla \Lh_t(\thetah_t)^\top(\thetah_t-\theta)\right)^2 + 2\nu \sum_{t\in I} \E_t\left[\left(\nabla \Lh_t(\thetah_t)^\top(\thetah_t-\theta)\right)^2\right]\\
{}&+ \frac{1}{\nu} \log \frac{\sqrt{2\abs{I}+1}}{\delta} + 4SR\sqrt{\log\frac{2\abs{I}+1}{\delta^2}}
\end{align*}
for any $\delta\in(0,1)$ and $\nu>0$.
\end{myLemma}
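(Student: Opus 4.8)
The plan is to bound $\sum_{t\in I} Z_t$, where $Z_t = \inner{\nabla \Ltld_t(\thetah_t) - \nabla \Lh_t(\thetah_t)}{\thetah_t - \theta}$ forms a martingale difference sequence (since $\thetah_t$ is $\mathcal{F}_{t-1}$-measurable and $\E_t[\nabla\Lh_t(\thetah_t)] = \nabla\Ltld_t(\thetah_t)$), by invoking the Bernstein-type self-normalized concentration inequality of~\citet{COLT'15:ONS-high-prob}. The key observation is that we do not want a uniform variance bound; instead we want the final bound to feature the \emph{random} quadratic terms $\big(\nabla\Lh_t(\thetah_t)^\top(\thetah_t-\theta)\big)^2$ and their conditional expectations $\E_t[(\nabla\Lh_t(\thetah_t)^\top(\thetah_t-\theta))^2]$, so that in the proof of Lemma~\ref{lem:ons-static} they can be absorbed into the negative term coming from the exp-concavity (the $-\frac{1}{4(1+\beta)}\sum_t\E_t[\cdots]$ term in~\eqref{eq:proof-ons-static-2}) and the telescoping term $\frac{1}{2\gamma}\sum_t(\nabla\Lh_t(\thetah_t)^\top(\thetah_t-\theta))^2$ in~\eqref{eq:proof-ons-static-terma-1}.

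First I would verify the hypotheses for the self-normalized inequality: the sequence $\{Z_t\}_{t\in I}$ is a bounded martingale difference with $|Z_t| \le \norm{\nabla\Ltld_t(\thetah_t)-\nabla\Lh_t(\thetah_t)}_2\cdot\norm{\thetah_t-\theta}_2 \le (2R)(2S) = 4SR$ (using $\norm{\nabla\Lh_t}_2\le R$, $\norm{\nabla\Ltld_t}_2\le R$ by Jensen, and $\norm{\thetah_t-\theta}_2\le 2S$ since both lie in $\Theta=\{\norm{\theta}_2\le S\}$). Next I would relate the conditional variance of $Z_t$ to the desired quadratic quantities: since $Z_t = \E_t[W_t] - W_t$ with $W_t := \nabla\Lh_t(\thetah_t)^\top(\thetah_t-\theta)$, we have $\E_t[Z_t^2] = \mathrm{Var}_t(W_t) \le \E_t[W_t^2] = \E_t[(\nabla\Lh_t(\thetah_t)^\top(\thetah_t-\theta))^2]$. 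Applying~\citet[Theorem 4]{COLT'15:ONS-high-prob} with parameter $\nu$ gives, with probability at least $1-\delta$, a bound of the form $\sum_{t\in I} Z_t \le \nu\sum_{t\in I}\E_t[W_t^2] + \frac{1}{\nu}\log\frac{\sqrt{V+1}}{\delta}$ type expression where $V$ counts the effective range; tracking the $2\abs{I}+1$ factor (which arises because each bounded increment contributes range at most $8SR$, and the union over a peeling/time grid produces the $2\abs{I}+1$) and then using a standard symmetrization/AM-GM step to convert the $\sum_t\E_t[W_t^2]$ into the symmetric form $\nu\sum_t W_t^2 + \nu\sum_t\E_t[W_t^2]$ with a $+4SR\sqrt{\log\frac{2\abs{I}+1}{\delta^2}}$ slack term yields exactly the claimed inequality, after relabeling the leading constants as $2\nu$.

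\textbf{The main obstacle} is matching the precise constants and the $\sqrt{2\abs{I}+1}$ / $\frac{2\abs{I}+1}{\delta^2}$ logarithmic factors to whatever exact form~\citet{COLT'15:ONS-high-prob} states; depending on that reference's normalization, one may need an explicit peeling argument over the dyadic scales of the cumulative variance proxy $\sum_{t}\E_t[W_t^2]$ (which ranges in $[0, \abs{I}(2SR)^2\cdot\text{const}]$), producing a union bound over $O(\log\abs{I})$ scales, and then carefully splitting the resulting $\exp(\cdot)$-bound into a deterministic-looking quadratic part and an additive residual via $ab \le \nu a^2 + \frac{1}{4\nu}b^2$. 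A secondary subtlety is that the inequality in Lemma~\ref{lem:ONS-concentration} is \emph{stated} with the random terms on the right-hand side rather than their expectations, which is exactly what makes it composable with the exp-concavity negative term; one must be slightly careful that the self-normalized bound can indeed be written with the \emph{empirical} quadratic $\sum_t W_t^2$ appearing (this is standard for Freedman/Bernstein-type self-normalized bounds, but worth stating cleanly). Everything else — boundedness, the martingale property, and the final AM-GM rearrangement — is routine.
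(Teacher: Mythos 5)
Your plan uses the same tool as the paper -- namely the self-normalized Bernstein inequality (\citet[Theorem~4]{COLT'15:ONS-high-prob}, restated in the paper as Lemma~\ref{lem:bernstein-self-normalized}) -- together with the same checks (martingale property, $\abs{Z_t}\le 4SR$), so the overall route is right. However, two pieces are described imprecisely enough that the sketch would not compile into a proof as written. First, you misstate what the concentration lemma delivers: Theorem~4 already yields, with probability $1-\delta$, the bound $\sum_{t\in I} Z_t \le \nu\big(\sum_{t\in I} Z_t^2 + \sum_{t\in I}\E_t[Z_t^2]\big) + \tfrac{1}{\nu}\log\tfrac{\sqrt{2\abs{I}+1}}{\delta} + 4SR\sqrt{\log\tfrac{2\abs{I}+1}{\delta^2}}$, with both the \emph{empirical} squares $\sum Z_t^2$ and the conditional variances appearing directly -- so the ``peeling over dyadic scales'' and ``symmetrization to recover $\sum W_t^2$'' you worry about are not needed; the form and constants fall out of the cited statement verbatim.

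Second, and more substantively, the heart of the argument is converting the $Z_t$-quadratics in the Bernstein bound into the $W_t$-quadratics of the lemma statement, where $W_t = \nabla\Lh_t(\thetah_t)^\top(\thetah_t-\theta)$, and your sketch only addresses the conditional-variance half via $\E_t[Z_t^2]=\mathrm{Var}_t(W_t)\le\E_t[W_t^2]$. You also need to handle $\sum Z_t^2$. Writing $Z_t = \E_t[W_t]-W_t$ gives $Z_t^2\le 2(\E_t[W_t])^2 + 2W_t^2$, which introduces a stray $(\E_t[W_t])^2$ term; the paper eliminates it by observing $\E_t[Z_t^2]=\E_t[W_t^2]-(\E_t[W_t])^2\ge 0$, so one may cheaply double it ($\sum Z_t^2+\sum\E_t[Z_t^2]\le\sum Z_t^2+2\sum\E_t[Z_t^2]$) and the $(\E_t[W_t])^2$ terms then cancel exactly, leaving $2\sum W_t^2+2\sum\E_t[W_t^2]$. (An equivalent finish, closer to your Jensen instinct, is $(\E_t[W_t])^2\le\E_t[W_t^2]$.) As written, your ``AM-GM / relabeling as $2\nu$'' step does not perform this cancellation, so the $(\E_t[W_t])^2$ contribution is unaccounted for; spelling out the algebra above closes the gap and reproduces the stated constants.
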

\begin{proof}[Proof of Lemma~\ref{lem:ONS-concentration}]
We can check that $\{Z_t\}_{t=s}^e$ is a martingale difference sequence such that $\E_t[Z_t] = 0$ since $\thetah_t$ is independent of $S_t$. Further noting that $\abs{Z_t} \leq \norm{\nabla \Ltld_t(\thetah_t) - \nabla \Lh_t(\thetah_t)}_2\norm{\thetah_t-\theta}_2 \leq 4SR$, a direct application of Lemma~\ref{lem:bernstein-self-normalized} shows that
\begin{align}
\sum_{t\in I} Z_t \leq \nu \sum_{t\in I} Z_t^2 + \nu \sum_{t\in I} \E_t\left[Z_t^2\right]+ \frac{1}{\nu} \log \frac{\sqrt{2\abs{I}+1}}{\delta} + 4SR\sqrt{\log\frac{2\abs{I}+1}{\delta^2}},\label{eq:ONS-concentration-1}
\end{align}
with probability at least $1-\delta$ for any $\nu >0$. Then, for any $t\in[T]$, we can further bound the term $Z^2_t$ and $\E_t[Z^2_t]$ by 
\begin{align}
Z^2_t = \inner{\nabla \Ltld_t(\thetah_t) - \nabla \Lh_t(\thetah_t) }{\thetah_t -\theta}^2 \leq  2\inner{\nabla \Ltld_t(\thetah_t) }{\thetah_t -\theta}^2 + 2\inner{\nabla \Lh_t(\thetah_t) }{\thetah_t -\theta}^2
\end{align}
and 
\begin{align*}
\E_t[Z_t^2] = \E_t[\inner{ \nabla \Lh_t(\thetah_t) }{\thetah_t -\theta}^2] - \inner{ \nabla \Ltld_t(\thetah_t) }{\thetah_t -\theta}^2,
\end{align*}
where the first inequality is due to the fact $(a-b)^2\leq 2a^2+2b^2$ for any $a,b\in\R$. Then, combining the above two displayed inequalities and taking the summation over $T$ iterations, we have 
\begin{align}
\sum_{t\in I} Z_t^2 + \sum_{t\in I} \E_t[Z_t^2] \leq \sum_{t\in I}Z_t^2 + 2\sum_{t\in I} \E_t[Z_t^2] \leq 2\inner{\nabla \Lh_t(\thetah_t) }{\thetah_t -\theta}^2 + 2 \E_t[\inner{ \nabla \Lh_t(\thetah_t) }{\thetah_t -\theta}^2].\label{eq:ONS-concentration-2}
\end{align}
We complete the proof by plugging~\eqref{eq:ONS-concentration-2} into~\eqref{eq:ONS-concentration-1}.
\end{proof}

\begin{myLemma}
\label{lem:ons-concentration-2}
With probability at least $1-\delta$, we have 
\begin{align*}
&\frac{3}{4}\sum_{t\in I}(\nabla \Lh_t(\thetah_t)^\top(\thetah_t-\theta))^2 - \frac{5}{4}\sum_{t\in I} \E_t\left[(\nabla \Lh_t(\thetah_t)^\top(\thetah_t-\theta))^2 \right] \leq \Lambda_T(\delta)
\end{align*}
for any $\delta\in(0,1)$, where $\Lambda_I(\delta) = 32S^2R^2 \log \frac{\sqrt{2\abs{I}+1}}{\delta} + 4S^2R^2\sqrt{\log\frac{2\abs{I}+1}{\delta^2}}$
\end{myLemma}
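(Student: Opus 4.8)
\begin{myProofSketch}
The plan is to view $Y_t$ as a bounded martingale difference sequence, feed it into the Bernstein-type self-normalized concentration inequality for martingale differences (Lemma~\ref{lem:bernstein-self-normalized}, the same tool used in the proof of Lemma~\ref{lem:ONS-concentration}), and then convert the variance terms it produces back into the quantity being estimated so that they can be absorbed on the left using the slack between the coefficients $\tfrac34$ and $\tfrac54$.

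Write $\zeta_t := (\nabla \Lh_t(\thetah_t)^\top(\thetah_t-\theta))^2$, so $Y_t = \zeta_t - \E_t[\zeta_t]$. First I would record two structural facts. Since $\thetah_t$ is a function of $S_0,\dots,S_{t-1}$ while $\E_t[\cdot]$ conditions on exactly that history, $\E_t[Y_t]=0$, so $\{Y_t\}_{t\in I}$ is a martingale difference sequence; and because $\norm{\nabla \Lh_t(\thetah_t)}_2\le R$ and $\norm{\thetah_t-\theta}_2\le 2S$ (both points lie in $\Theta$), we have $0\le\zeta_t\le 4S^2R^2$ and hence $\abs{Y_t}\le 4S^2R^2$. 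I would then apply the self-normalized inequality with its free parameter specialized to $\nu = 1/(32S^2R^2)$, which gives, with probability at least $1-\delta$,
\begin{align*}
\sum_{t\in I} Y_t \le{}& \frac{1}{32S^2R^2}\Big(\sum_{t\in I}Y_t^2 + \sum_{t\in I}\E_t[Y_t^2]\Big) \\
{}&+ 32S^2R^2\log\frac{\sqrt{2\abs{I}+1}}{\delta} + 4S^2R^2\sqrt{\log\frac{2\abs{I}+1}{\delta^2}},
\end{align*}
whose last two terms are exactly $\Lambda_I(\delta)$. The next step is to dispose of the two variance terms using $0\le\zeta_t\le 4S^2R^2$: when $\zeta_t\ge\E_t[\zeta_t]$ one has $0\le Y_t\le\zeta_t$, and otherwise $\abs{Y_t}\le\E_t[\zeta_t]$, so in either case $Y_t^2\le 4S^2R^2(\zeta_t+\E_t[\zeta_t])$; likewise $\E_t[Y_t^2]\le\E_t[\zeta_t^2]\le 4S^2R^2\E_t[\zeta_t]$. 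Summing, $\sum_{t\in I}(Y_t^2+\E_t[Y_t^2])\le 4S^2R^2\sum_{t\in I}\zeta_t + 8S^2R^2\sum_{t\in I}\E_t[\zeta_t]$, so the bracketed term above is at most $\tfrac18\sum_{t\in I}\zeta_t + \tfrac14\sum_{t\in I}\E_t[\zeta_t]$. Substituting $\sum_{t\in I}Y_t = \sum_{t\in I}\zeta_t - \sum_{t\in I}\E_t[\zeta_t]$ and rearranging gives $\tfrac78\sum_{t\in I}\zeta_t - \tfrac54\sum_{t\in I}\E_t[\zeta_t]\le\Lambda_I(\delta)$, and since $\sum_{t\in I}\zeta_t\ge 0$ this implies the claimed $\tfrac34\sum_{t\in I}\zeta_t - \tfrac54\sum_{t\in I}\E_t[\zeta_t]\le\Lambda_I(\delta)$.

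The step that needs real care is the reliance on the self-normalized form rather than an off-the-shelf Freedman/Bernstein bound: an ordinary bound would leave a stray $\sqrt{\sum_{t\in I}\E_t[\zeta_t]}$ term scaling like $\sqrt{\abs{I}}$, which would break the intended $\O(\log T)$ conclusion. The self-normalized inequality instead returns the variance \emph{linearly} in $\sum_{t\in I}\E_t[\zeta_t]$, and the particular choice $\nu=1/(32S^2R^2)$ is tuned so that the residual $\tfrac14\sum_{t\in I}\E_t[\zeta_t]$ is dominated by the $\tfrac54$ coefficient on the left while $\tfrac18\sum_{t\in I}\zeta_t$ fits under $\tfrac34\sum_{t\in I}\zeta_t$; checking that these elementary inequalities close, together with verifying the martingale-difference and almost-sure-boundedness hypotheses, is essentially all the content.
\end{myProofSketch}
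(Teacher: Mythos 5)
Your proposal is correct and follows essentially the same route as the paper: apply the Bernstein-type self-normalized inequality (Lemma~\ref{lem:bernstein-self-normalized}) to the bounded martingale difference $Y_t$ with the same tuning $\nu=1/(32S^2R^2)$, bound the resulting variance terms linearly in $\sum\zeta_t$ and $\sum\E_t[\zeta_t]$, and rearrange. Your elementary bound $Y_t^2+\E_t[Y_t^2]\le 4S^2R^2\zeta_t+8S^2R^2\E_t[\zeta_t]$ is slightly sharper than the paper's intermediate $\sum(Y_t^2+\E_t[Y_t^2])\le 2\sum(\zeta_t^2+\E_t[\zeta_t^2])$, giving a coefficient $\tfrac{7}{8}$ in place of $\tfrac{3}{4}$ before the final weakening, but the conclusion is identical.
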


\begin{proof}[Proof of Lemma~\ref{lem:ons-concentration-2}]
We can check $\{Y_t\}_{t=s}^e$ is a martingale difference sequence such that $\abs{Y_t} \leq 4S^2R^2$. As a consequence, a direct application of Lemma~\ref{lem:bernstein-self-normalized} implies,
\begin{align*}
{}&\sum_{t\in I}Y_t = \sum_{t\in I}(\nabla \Lh_t(\thetah_t)^\top(\thetah_t-\theta))^2 - \sum_{t\in I} \E_t\left[(\nabla \Lh_t(\thetah_t)^\top(\thetah_t-\theta))^2 \right]\\
\leq{}& 2\nu\sum_{t\in I}\left((\nabla \Lh_t(\thetah_t)^\top(\thetah_t-\theta))^4 + \E_t[(\nabla \Lh_t(\thetah_t)^\top(\thetah_t-\theta))^4 ]\right)\\
{}&\quad+ \frac{1}{\nu} \log \frac{\sqrt{2\abs{I}+1}}{\delta} + 4S^2R^2\sqrt{\log\frac{2\abs{I}+1}{\delta^2}}\\
\leq {}& \frac{1}{4}\sum_{t\in I} \left((\nabla \Lh_t(\thetah_t)^\top(\thetah_t-\theta))^2 + \E_t[(\nabla \Lh_t(\thetah_t)^\top(\thetah_t-\theta))^2 ]\right)\\
{}&\quad + 32S^2R^2 \log \frac{\sqrt{2\abs{I}+1}}{\delta} + 4S^2R^2\sqrt{\log\frac{2\abs{I}+1}{\delta^2}},
\end{align*}
where the first inequality follows the proof of Lemma~\ref{lem:ONS-concentration} and the second inequaliuty is due to the parameter setting $\nu = 1/(32S^2R^2)$ and the condition that $(\nabla \Lh_t(\thetah_t)^\top(\thetah_t-\theta))^2\leq 4S^2R^2$. Rearranging the above inequlaity, we have 
\begin{align*}
{} & \frac{3}{4}\sum_{t\in I} (\nabla \Lh_t(\thetah_t)^\top(\thetah_t-\theta))^2 - \frac{5}{4}\sum_{t\in I} \E_t\left[(\nabla \Lh_t(\thetah_t)^\top(\thetah_t-\theta))^2 \right]\\
\leq {} & 32S^2R^2 \log \frac{\sqrt{2\abs{I}+1}}{\delta} + 4S^2R^2\sqrt{\log\frac{2\abs{I}+1}{\delta^2}},
\end{align*}
which completes the proof.
\end{proof}

\subsection{Regret Guarantee for Meta-Algorithm (Lemma~\ref{lem:meta-DRE-interval})}
\label{appendix:proof-of-meta-learner}
We have the following guarantee on the meta-algorithm, which ensures that for any interval $I_i\in\C$, the final prediction is comparable with the prediction of the base-learner $\Ecal_i$ with an $\O(\log T)$ cost.

\begin{myLemma}
\label{lem:meta-DRE-interval}
For any interval $I_i\in\C$ and $\delta\in(0,1)$, with probability at least $1-\delta$, the final prediction returned by our meta-learner satisfies $$\sum_{t\in I_i} \Ltld_t(\thetah_t) - \sum_{t\in I_i} \Ltld_t(\thetah_{t,i}) \leq \O\Big(\beta \log (T/\delta) + \beta\ln K\Big),$$
where $K = \abs{\C}$ is the number of the intervals contained in the set $\C$.
\end{myLemma}

\subsubsection{Main Proof}

This part presents the proof of Lemma~\ref{lem:meta-DRE-interval}. Our analysis for the meta-learner is based on that of the strongly adaptive algorithm~\citep{AISTATS'17:coin-betting-adaptive}, where the problem of tracking base-learners on the corresponding intervals is converted to a sleeping expert problem~\citep{STOC'97:sleeping-expert}. The main challenge of our problem is that the feedback function is noisy. Thus, we require to provide tight analysis on the generalization gap between $\Lh_t$ and $\Ltld_t$ without ruining the $\O(\log T)$ regret guarantee. By twisting the meta-algorithm from Hedge~\citep{JCSS'97:boosting} used in the previous work to Adapt-ML-Prod~\citep{COLT'14:AdaMLProd}, we show that the generalization gap can be cancelled by the negative term introduced by the exp-concavity of the loss functions (see how to bound term~(b) in~\eqref{eq:proof-meta-DRE-1}).

\begin{proof}[Proof of Lemma~\ref{lem:meta-DRE-interval}]
To prove Lemma~\ref{lem:meta-DRE-interval}, we show that the final prediction $\thetah_t$ generated by rule $\thetah_t = \sum_{\Ecal_i\in\Acal_t} p_{t,i}\thetah_{t,i}$ with the weight update procedure~\eqref{eq:appendix-AdaMLprod-update-meta},~\eqref{eq:appendix-AdaMLprod-LR-meta} and~\eqref{eq:appendix-AdaMLprod-reweight-meta} is identical to an algorithm learning in the standard prediction with expert advice (PEA) setting~\citep{book/Cambridge/cesa2006prediction}, where all base-learners submit their predictions at each time $t\in[T]$. 

\begin{myLemma}
\label{lem:sleeping-expert}
For any base-learner $\Ecal_i$, let $\Ecal'_i$ be a surrogate of $\Ecal_i$, whose prediction $\bar{\theta}_{t,i} = \thetah_{t,i}$ is the same as that of $\Ecal_i$ for any $t\in I_i$ and $\bar{\theta}_{t,i} = \thetah_t$ for other iterations. Then, for any time $t\in[T]$, the surrogate meta-learner predicts as
\begin{align}
\label{eq:surr-meta}
\bar{\theta}_t = \sum_{i\in[K]} \bar{p}_{t,i}\bar{\theta}_{t,i}\quad\mbox{and}\quad \bar{p}_{t,i} = \frac{\bar{\epsilon}_{t-1,i}\bar{v}_{t-1,i}}{\sum_{i\in[K]}\bar{\epsilon}_{t-1,i}\bar{v}_{t-1,i}},
\end{align}
where the potential $\bar{v}_{t,i}$ and the learning rate $\bar{\epsilon}_{t,i}$ is defined as
\begin{align}
\bar{v}_{t,i} = \Big(\bar{v}_{t-1,i}\cdot\big(1+\bar{\epsilon}_{t-1,i}\bar{m}_{t,i}\big)\Big)^{\frac{\bar{\epsilon}_{t,i}}{\bar{\epsilon}_{t-1,i}}} \mbox{ for all } t\in[T],~\label{eq:AdaMLprod-update-meta-surr}
\end{align}
and 
\begin{align}
\bar{\epsilon}_{t,i} = \min\left\{\frac{1}{2},\sqrt{\frac{\ln K}{1+\sum_{\tau=1}^t \bar{m}_{\tau,j}^2}}\right\} \mbox{ for all } t\in[T]~\label{eq:AdaMLprod-LR-meta-surr}
\end{align}
with $\bar{m}_{t,i} = \inner{\Lh_t(\bar{\theta}_t)}{\bar{\theta}_t - \bar{\theta}_{t,i}}$, $\bar{v}_{0,i} = 1/K$ and $\bar{\epsilon}_{0,i} = \min\{1/2,\ln K\}$ for all $i\in[K]$. Then, we have $\bar{\theta}_t = \thetah_t$ for any $t\in[T]$.
\end{myLemma}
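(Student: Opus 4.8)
The plan is to prove by induction on $t$ that the surrogate construction of Lemma~\ref{lem:sleeping-expert} reproduces exactly the predictions of the original meta-learner, i.e.\ $\bar\theta_t = \thetah_t$, and along the way that the surrogate weights, potentials, and learning rates restricted to active base-learners coincide with the originals: $\bar p_{t,i} = p_{t,i}$, $\bar v_{t,i} = v_{t,i}$, $\bar\epsilon_{t,i}=\epsilon_{t,i}$ for all $i\in\Acal_t$. The key structural observation driving the induction is that a surrogate $\Ecal_i'$ that is \emph{inactive} at round $t$ (i.e.\ $t\notin I_i$) plays $\bar\theta_{t,i}=\thetah_t=\bar\theta_t$ by construction, so its instantaneous surrogate ``regret'' $\bar m_{t,i} = \inner{\nabla\Lh_t(\bar\theta_t)}{\bar\theta_t-\bar\theta_{t,i}} = 0$. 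Feeding $\bar m_{t,i}=0$ into the potential update~\eqref{eq:AdaMLprod-update-meta-surr} gives $\bar v_{t,i} = \bar v_{t-1,i}^{\,\bar\epsilon_{t,i}/\bar\epsilon_{t-1,i}}$, and feeding it into the learning-rate rule~\eqref{eq:AdaMLprod-LR-meta-surr} leaves the running sum $\sum_\tau \bar m_{\tau,i}^2$ unchanged, hence $\bar\epsilon_{t,i}=\bar\epsilon_{t-1,i}$; combining these two facts yields $\bar v_{t,i}=\bar v_{t-1,i}$. So an inactive surrogate simply freezes both its potential and its learning rate.

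Next I would check the activation boundary. When base-learner $\Ecal_i$ first becomes active at $t=s_i$, the original algorithm initializes $v_{s_i,i}=1/K$; on the surrogate side, $\Ecal_i'$ has been inactive throughout $[1,s_i-1]$, so by the freezing property its potential has stayed at its initial value $\bar v_{0,i}=1/K$ and its learning rate at $\bar\epsilon_{0,i}=\min\{1/2,\ln K\}$. One should confirm that this matches the original initialization bookkeeping (in particular that the original $\epsilon_{s_i-1,i}$ used in the first reweighting~\eqref{eq:AdaMLprod-reweight-meta} is consistent with $\bar\epsilon_{0,i}$); this is a routine indexing check. For the \emph{inductive step} proper, assume the claim holds through round $t-1$. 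Then $\bar\theta_{t-1}=\thetah_{t-1}$, so the common argument of the gradients agrees, and for every active index $i\in\Acal_t$ we have $\bar\theta_{t,i}=\thetah_{t,i}$, hence $\bar m_{t,i}=m_{t,i}$; by the induction hypothesis $\bar v_{t-1,i}=v_{t-1,i}$ and $\bar\epsilon_{t-1,i}=\epsilon_{t-1,i}$ for these $i$. The reweighting formulas~\eqref{eq:AdaMLprod-reweight-meta} and~\eqref{eq:surr-meta} then produce proportional weights over $\Acal_t$; the only thing to verify is that the \emph{normalization} agrees, i.e.\ that $\sum_{i\notin\Acal_t}\bar\epsilon_{t-1,i}\bar v_{t-1,i}$ contributes nothing extra relative to $\sum_{i\in\Acal_t}$ — but it does contribute, so one must argue that the inactive surrogates' weights, although nonzero, are applied to predictions $\bar\theta_{t,i}=\bar\theta_t$ equal to the meta-prediction itself, so they act as a convex self-average and can be absorbed: $\bar\theta_t = \sum_{i}\bar p_{t,i}\bar\theta_{t,i} = (\sum_{i\in\Acal_t}\bar p_{t,i})\,\text{(active mixture)} + (\sum_{i\notin\Acal_t}\bar p_{t,i})\,\bar\theta_t$, which rearranges to $\bar\theta_t = $ the active mixture renormalized, matching $\thetah_t$.

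The main obstacle I anticipate is precisely this normalization/self-reference argument: one has to handle the fixed-point equation $\bar\theta_t = \sum_i \bar p_{t,i}\bar\theta_{t,i}$ where the right side depends on $\bar\theta_t$ through the inactive terms. The clean way is to observe that since $\bar\theta_{t,i}=\bar\theta_t$ for $i\notin\Acal_t$, the equation is equivalent to $(\sum_{i\in\Acal_t}\bar p_{t,i})\bar\theta_t = \sum_{i\in\Acal_t}\bar p_{t,i}\bar\theta_{t,i}$, i.e.\ $\bar\theta_t$ is the $\bar p_{t,i}$-weighted average of the \emph{active} predictions renormalized to sum to one — which is exactly the original rule $\thetah_t=\sum_{i\in\Acal_t}p_{t,i}\thetah_{t,i}$ once we know $\bar p_{t,i}/\sum_{j\in\Acal_t}\bar p_{t,j} = p_{t,i}$ for $i\in\Acal_t$, and that ratio is forced by $\bar v,\bar\epsilon$ agreeing on $\Acal_t$. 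A secondary subtlety is that $\bar m_{t,i}$ is defined with $\bar\theta_t$ inside the gradient, so the induction must be set up so that $\bar\theta_t$ is determined \emph{before} $\bar m_{t,i}$ is evaluated — this is fine because $\bar\theta_t$ depends only on quantities from round $t-1$. Once Lemma~\ref{lem:sleeping-expert} is established, the desired $\O(\beta\log(T/\delta)+\beta\ln K)$ bound of Lemma~\ref{lem:meta-DRE-interval} follows by invoking the standard Adapt-ML-Prod second-order regret guarantee on the surrogate (full-information) instance and then restricting the resulting inequality to the interval $I_i$, using that the surrogate and original coincide there and that the contributions from rounds outside $I_i$ telescope to zero because $\bar m_{t,i}=0$ there; the exp-concavity negative term is then used to absorb the $\Lh_t$-vs-$\Ltld_t$ generalization gap via a self-normalized concentration inequality, exactly as in the ONS analysis.
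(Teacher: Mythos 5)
Your proposal matches the paper's proof: an induction establishing simultaneously that $\bar\theta_\tau=\thetah_\tau$ and that the normalized surrogate weights over $\Acal_\tau$ equal the original $p_{\tau,i}$, driven by the observation that inactive surrogates satisfy $\bar m_{\tau,i}=0$ (freezing $\bar v$ and $\bar\epsilon$, so that the original initialization is reproduced exactly at the activation boundary), and then resolving the normalization by absorbing the inactive mass as a convex self-average. One small correction in your narration: there is no genuine fixed-point equation to solve, because the surrogate's asleep prediction is defined to be $\thetah_t$ (the original meta-learner's output), not $\bar\theta_t$; the right-hand side of $\bar\theta_t = \sum_i \bar p_{t,i}\bar\theta_{t,i}$ is therefore fully determined by round-$(t-1)$ quantities together with the original algorithm's outputs, and the identity $\bar\theta_t = \thetah_t$ follows by direct substitution and the induction hypothesis on the normalized weights — your arithmetic arrives at the right place anyway, but framing it as a self-referential equation is a misreading of the surrogate's definition.
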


We have the following guarantee for the final prediction generated by the surrogate algorithm defined by~\eqref{eq:surr-meta}.

\begin{myLemma}
\label{lem:meta-conterpart}
With probability at least $1-2\delta$, for any $\delta\in(0,1)$, the prediction returned by the surrogate meta-algorithm defined as~\eqref{eq:surr-meta} ensures,
\begin{align*}
\sum_{t=1}^T \Ltld_t(\thetab_t) - \sum_{t=1}^T \Ltld_t(\thetab_{t,j}) \leq  C^{(2)}_T(\delta)= \O\Big(\beta \log (T/\delta) + \beta\ln K\Big)
\end{align*}
for any base-algorithm $\Ecal'_i,\ i\in[K]$, where $C^{(2)}_T(\delta) = \frac{16(1+\beta) S^2R^2(C'_T)^2}{\ln K} + \frac{1}{6(1+\beta)} + 2C'_T + \bigg(24(1+\beta)+ \frac{11S^2R^2}{1+\beta}\bigg) \log \frac{\sqrt{2T+1}}{\delta} + \left(4SR + \frac{S^2R^2}{2(1+\beta)}\right)\sqrt{\log\frac{2T+1}{\delta^2}}$ and $C'_T = 3\ln K + \ln\left(1 +\frac{K}{2e}(1+\ln (T+1))\right) = \O(\ln K+\ln\ln T)$.
\end{myLemma}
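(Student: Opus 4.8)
### Proof Proposal for Lemma~\ref{lem:meta-DRE-interval}

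The plan is to follow the ``sleeping-expert-to-standard-expert'' reduction and then invoke the second-order regret guarantee of Adapt-ML-Prod, carefully tracking the generalization gap between the observed loss $\Lh_t$ and the expected loss $\Ltld_t$. First I would establish Lemma~\ref{lem:sleeping-expert}: the key observation is that on iterations $t \notin I_i$ the surrogate base-learner $\Ecal'_i$ simply copies the meta-prediction, so $\bar m_{t,i} = \inner{\nabla\Lh_t(\bar\theta_t)}{\bar\theta_t - \bar\theta_{t,i}} = 0$ for those rounds; consequently $\bar v_{t,i}$ is unchanged and (after checking the learning-rate recursion also freezes) the surrogate weights $\bar p_{t,i}$ restricted to the active set $\Acal_t$ coincide with the original $p_{t,i}$ up to the common normalization. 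An induction on $t$ then shows $\bar\theta_t = \thetah_t$ for all $t$, so it suffices to analyze the surrogate algorithm, which lives in the standard prediction-with-expert-advice setting with $K$ experts always awake.

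Next I would prove Lemma~\ref{lem:meta-conterpart}. The Adapt-ML-Prod second-order bound (from~\citet{COLT'14:AdaMLProd}) gives, for the \emph{observed} linearized losses $\bar m_{t,i}$, a regret of order $\sqrt{(\ln K)\sum_t \bar m_{t,i}^2} + (\ln K + \ln\ln T)$ against any fixed expert $j$; since $\abs{\bar m_{t,i}}\le 1$ after the $1/(SR)$ normalization actually present in the algorithm (I will carry the $SR$ factor explicitly, as the statement's constant $C'_T$ does), this is $\tilde\O(\sqrt{T} + \ln K)$ in the raw scaling. The crucial refinement is that $\inner{\nabla\Lh_t(\thetah_t)}{\thetah_t - \thetah_{t,j}} \ge \Lh_t(\thetah_t) - \Lh_t(\thetah_{t,j})$ by convexity, and moreover by the exp-concavity of the logistic loss (mirroring~\eqref{eq:exp-concave} in the ONS analysis) we gain a negative quadratic term $-\frac{1}{4(1+\beta)}\big(\nabla\Lh_t(\thetah_t)^\top(\thetah_t-\thetah_{t,j})\big)^2$, i.e. a negative multiple of $\bar m_{t,j}^2$ (up to the $S^2R^2$ scaling). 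This negative term lets us absorb the $\sqrt{(\ln K)\sum_t \bar m_{t,j}^2}$ penalty via AM-GM, collapsing the $\tilde\O(\sqrt T)$ term to an $\O(\beta \ln K)$ term. Then I pass from the empirical gap $\sum_t \Lh_t(\thetah_t) - \Lh_t(\thetah_{t,j})$ to the expected gap $\sum_t \Ltld_t(\thetah_t) - \Ltld_t(\thetah_{t,j})$ by writing the difference as a martingale difference sequence $Z_t = \inner{\nabla\Ltld_t(\thetah_t) - \nabla\Lh_t(\thetah_t)}{\thetah_t - \thetah_{t,j}}$ (using that $\thetah_t$ and $\thetah_{t,j}$ are $S_t$-independent) and applying the Bernstein-type self-normalized inequality (Lemma~\ref{lem:bernstein-self-normalized}), exactly as in the ONS proof; the variance proxy here is again controlled by $\E_t[\bar m_{t,j}^2]$-type quantities, which the same leftover negative term cancels. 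Collecting constants yields $C_T^{(2)}(\delta) = \O(\beta\log(T/\delta) + \beta\ln K)$.

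Finally, Lemma~\ref{lem:meta-DRE-interval} follows by restriction: for a fixed interval $I_i\in\C$ and the associated base-learner $\Ecal_j = \Ecal_i$, on rounds outside $I_i$ the surrogate $\Ecal'_i$ predicts $\thetah_t$, so $\Ltld_t(\thetab_t) - \Ltld_t(\thetab_{t,i}) = 0$ there, and the sum over $[T]$ in Lemma~\ref{lem:meta-conterpart} reduces to the sum over $t\in I_i$ of $\Ltld_t(\thetah_t) - \Ltld_t(\thetah_{t,i})$. Hence $\sum_{t\in I_i}\Ltld_t(\thetah_t) - \sum_{t\in I_i}\Ltld_t(\thetah_{t,i}) \le C_T^{(2)}(\delta) = \O(\beta\log(T/\delta) + \beta\ln K)$, which is the claim with $K = \abs{\C}$.

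I expect the main obstacle to be the second step: coupling the Adapt-ML-Prod second-order regret term with the exp-concavity negative term and simultaneously routing the $\Lh_t \to \Ltld_t$ concentration through the \emph{same} negative budget, without double-counting. One has to be careful that the $-\frac{c}{1+\beta}\sum_t \bar m_{t,j}^2$ slack is large enough to pay for \emph{both} the $\sqrt{(\ln K)\sum_t \bar m_{t,j}^2}$ meta penalty \emph{and} the $\nu\sum_t(Z_t^2 + \E_t[Z_t^2])$ term from the self-normalized bound; this forces a specific choice of the free parameter $\nu$ and explains the somewhat delicate constants in $C_T^{(2)}(\delta)$. The bookkeeping of the $SR$ normalization factors between $m_{t,i}$ as defined in~\eqref{eq:AdaMLprod-update-meta} (divided by $SR$) and the unnormalized inner products appearing in the convexity/exp-concavity steps is the other place where care is needed.
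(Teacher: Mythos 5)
Your plan matches the paper's own proof in both structure and ingredients: exp-concavity of $\Lh_t$ followed by a conditional expectation produces a negative quadratic budget, Adapt-ML-Prod's second-order regret bound handles the linearized term $\sum_t\inner{\nabla\Lh_t(\thetah_t)}{\thetah_t-\thetah_{t,j}}$, AM-GM converts the resulting square root into a quadratic, and the Bernstein self-normalized inequality (Lemma~\ref{lem:bernstein-self-normalized}) controls the martingale $Z_t=\inner{\nabla\Ltld_t(\thetah_t)-\nabla\Lh_t(\thetah_t)}{\thetah_t-\thetah_{t,j}}$.

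One place where your phrase ``the same leftover negative term cancels'' is a little too loose is the empirical-versus-conditional-expectation mismatch in the quadratic terms. The exp-concavity budget you actually have (after taking $\E_t$ so that $\Lh_t$ becomes $\Ltld_t$ on the left) is $-c\sum_t\E_t\big[(\nabla\Lh_t(\thetah_t)^\top(\thetah_t-\thetah_{t,j}))^2\big]$, whereas the AM-GM step on the Adapt-ML-Prod bound injects the \emph{empirical} quadratic $+c'\sum_t(\nabla\Lh_t(\thetah_t)^\top(\thetah_t-\thetah_{t,j}))^2$, and Lemma~\ref{lem:bernstein-self-normalized} injects both forms. The combined residual is of the shape $c_+\sum_t(\cdot)^2 - c_-\sum_t\E_t[(\cdot)^2]$, which is not pointwise nonpositive; closing it requires a second self-normalized martingale bound on the quadratic process itself, namely the paper's Lemma~\ref{lem:ons-concentration-2}, and this second application is precisely what accounts for the $1-2\delta$ failure probability. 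Since you defer to ``exactly as in the ONS proof,'' where this step appears, the ingredient is implicitly in your toolbox; but it is the step your sketch elides, and it is where the specific constants in $C_T^{(2)}(\delta)$ and the choice of the Bernstein parameter $\nu$ are pinned down.
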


Then, according to the relationship between $\Ecal_i$ and $\Ecal'_i$, we can bound the regret of the final prediction $\thetah_t$ with respect to that of any base-learner $\Ecal_i$ over the interval $I_i$ by 
\begin{align*}
\sum_{t\in I_i} \Ltld_t(\thetah_t) - \sum_{t\in I_i} \Ltld_t(\thetah_{t,j}) = \sum_{t=1}^T \Ltld_t(\thetab_t) - \sum_{t=1}^T \Ltld_t(\thetab_{t,j})\leq C^{(2)}_T(\delta)=  \O(\beta\log(T/\delta) + \beta\ln K).
\end{align*}
In above, the first equality is due to Lemma~\ref{lem:sleeping-expert} and the definition of the surrogate base-learner $\Ecal'_i$ such that $\thetab_{t,i} = \thetah_{t,i}$ for any $t\in I_i$ and $\thetab_{t,i} = \thetah_t$ for any $t\in [T]/I_i$. The first inequality is due to Lemma~\ref{lem:meta-conterpart}. In the last equality, we treat double logarithmic factors in $T$ as a constant, following previous studies~\citep{COLT'15:Luo-AdaNormalHedge,JMLR'16:closer-adaptive-regret}. We have completed the proof.
\end{proof}

\subsubsection{Useful Lemma}

\begin{proof}[Proof of Lemma~\ref{lem:sleeping-expert}]
We show that the predictions $\thetah_t$ and $\bar{\theta}_t$ by the two algorithm are exactly the same for any $t\in[T]$ by induction. At iteration $t\in[T]$, we have the following two induction hypotheses:
\begin{itemize}
    \item \emph{IH1:} the final predictions satisfy $\thetah_\tau = \thetab_\tau$ for all $\tau\leq t$.
    \item \emph{IH2:} any base-algorithms $\Ecal_i\in \Acal_\tau$ satisfy $p_{\tau,i} = \qb_{\tau,i} / (\sum_{i\in \Acal_\tau} \qb_{\tau,i})$ for all $\tau\leq t$.
\end{itemize}

\textbf{Base case:} For the base case of $t=1$, there is only one active base-algorithm. Denote by $i_1$ the index of the active base-algorithm, we have $\thetah_1 = \thetah_{1,i_1}$, $p_{1,i_1} = 1$ and $\Acal_1 = \{i_1\}$. Moreover, for the surrogate algorithm, we have $\qb_{1,i} = 1/ K$ for any $i\in[K]$ and $\bar{\theta}_1 = (\sum_{i\in[K]/\{i_1\}} \thetah_1 + \thetah_{1,i_1}) / \abs{K} = \thetah_{1}$. We can check that $\thetah_1 = \thetab_1$ and $p_{1,i} = \qb_{1,i}/\sum_{i\in \Acal_1} \qb_{1,i}$ for any $i\in \Acal_1$.

\textbf{Induction step:} For the induction step, we first show 
\begin{align}
\label{eq:induction-1}
p_{t+1,i} = \frac{\qb_{t+1,i}}{\sum_{i\in \Acal_{t+1}} \qb_{t+1,i}}
\end{align}
for all $i\in \Acal_{t+1}$. By the definition of $\qb_{t+1,i}$, it is sufficient to prove 
\begin{align}
\label{eq:induction-2}
p_{t+1,i} = \frac{\bar{\epsilon}_{t,i}\vb_{t,i}}{\sum_{i\in \Acal_{t+1}}\bar{\epsilon}_{t,i}\vb_{t,i}} \mbox{ for any } i\in \Acal_{t+1}.
\end{align}

For the base-algorithm $\Ecal_i$ active at iteration $t+1$ and his/her corresponding surrogate base-algorithm $\Ecal'_i$, we can decompose the time horizon until $t+1$ into two intervals: asleep interval $\mathcal{I}^{\mathsf{slp}}_i = [1,s_i-1]$ and active part $\mathcal{I}^{\mathsf{act}}_i = [s_i,t+1]$. For any time stamp belonging to the asleep interval $\tau\in\mathcal{I}^{\mathsf{slp}}_i$, we can check that the loss for the base-algorithm $\Ecal'_i$ satisfies
\begin{align*}
\bar{m}_{\tau,i} = \inner{\Lh_\tau(\bar{\theta}_\tau)}{\bar{\theta}_\tau - \bar{\theta}_{\tau,i}} = \inner{\Lh_\tau({\thetah}_\tau)}{\hat{\theta}_\tau - \hat{\theta}_{\tau}} = 0,
\end{align*}
where the first equality is by the definition of $\bar{m}_{\tau,i}$. The second equality is due to \emph{IH1} such that $\thetab_\tau = \thetah_\tau$ for any $\tau\leq t$ and the definition of the surrogate base-learner $\Ecal'_i$ such that the prediction $\thetab_{\tau,i} = \thetah_\tau$ on the asleep interval. For any time stamp belonging to the active interval $\tau\in\mathcal{I}^{\mathsf{act}}_i$, similar argument shows that
\begin{align*}
\bar{m}_{\tau,i} = \inner{\Lh_\tau(\bar{\theta}_\tau)}{\bar{\theta}_\tau - \bar{\theta}_{\tau,i}} = \inner{\Lh_\tau({\thetah}_\tau)}{\hat{\theta}_\tau - \hat{\theta}_{\tau,i}} = m_{\tau,i},
\end{align*}
Thus, by definition, we can draw the conclusion that $\vb_{t,i} = v_{t,i}$ and $\bar{\epsilon}_{t,i} = \epsilon_{t,i}$ for any active base-algorithm with index $i\in \Acal_{t+1}$, which would finally lead to~\eqref{eq:induction-2}.

Then, we show $\thetah_{t+1} = \thetab_{t+1}$ based on \emph{IH1} and \emph{IH2}. We have 
\begin{align*}
\thetab_{t+1} ={}& \sum_{i\in \Acal_{t+1}} \qb_{t+1,i} \thetah_{t+1,i} + \sum_{i\in [K]/\Acal_{t+1}} \qb_{t+1,i} \thetah_{t+1}\\
 ={}& \left(\sum_{i\in \Acal_{t+1}} \qb_{t+1,i}\right)\left(\sum_{i\in \Acal_{t+1}}p_{t+1,i}\thetah_{t+1,i} \right) + \left(1 - \sum_{i\in \Acal_{t+1}} \qb_{t+1,i} \right)\thetah_{t+1}\\
={}& \left(\sum_{i\in \Acal_{t+1}} \qb_{t+1,i}\right)\thetah_t + \left(1 - \sum_{i\in \Acal_{t+1}} \qb_{t+1,i} \right)\thetah_{t+1}=\thetah_t,
\end{align*}
where the first equality is due to~\eqref{eq:induction-1} and the second inequality is due to the definition of $\thetah_{t+1}$. We have finished the induction steps and completed the proof.
\end{proof}

\begin{proof}[Proof of Lemma~\ref{lem:meta-conterpart}]
Since $\thetah_t$ and $\thetah_{t,i}$ are both independent of $S_t$, using the same arguments for obtaining~\eqref{eq:proof-ons-static-1}, we can obtain that
\begin{align}
\sum_{t=1}^T\Ltld_t(\thetah_t) - \sum_{t=1}^T\Ltld_t(\thetah_{t,i})\leq{}&  \underbrace{\sum_{t=1}^T\inner{\nabla \Lh_t(\thetah_{t}) }{\thetah_t -\thetah_{t,i}}}_{\term{a}}  + \underbrace{\sum_{t=1}^T\inner{\nabla \Ltld_t(\thetah_t) - \nabla \Lh_t(\thetah_t) }{\thetah_t -\thetah_{t,i}}}_{\term{b}}\notag\\
{}&\quad - \sum_{t=1}^T\frac{1}{2(\beta+1)} \E_t\left[\big(\nabla\Lh_t(\thetah_t)^\top(\thetah_t-\thetah_{t,i})\big)^2\right]\label{eq:proof-meta-DRE-1}
\end{align}
For term~(a), we have 
\begin{align}
\term{a}\leq{}&  \frac{2SRC'_T}{\sqrt{\ln K}}\sqrt{1+\sum_{t=1}^T\Big({\nabla \Lh_t(\thetah_t)^\top}(\thetah_t - \thetah_{t,i})\Big)^2} + 2C'_T\notag\\
\leq{}&\frac{24(\beta+1) S^2R^2(C'_T)^2}{\ln K} + \frac{1}{6(\beta+1)} + \frac{1}{6(\beta+1)}\sum_{t=1}^T\Big({\nabla \Lh_t(\thetah_t)^\top}{(\thetah_t - \thetah_{t,i})}\Big)^2 +2 C'_T,\label{eq:proof-meta-DRE-2}
\end{align}
for any $i\in[K]$, where $C'_T = 3\ln K + \ln\left(1 +\frac{K}{2e}(1+\ln (T+1))\right) = \O(\ln K+\ln\ln T)$. In above, the first inequality is a direct application of~\citep[Corollary 4]{COLT'14:AdaMLProd} with the linearized loss $M_t(\theta) = \inner{\Lh_t(\thetah_t)}{\theta}/(2SR) + 1/2$ and the second inequality is due to the AM-GM inequality.

For term~(b), by Lemma~\ref{lem:ONS-concentration} with $\nu = 6(1+\beta)$ and a similar argument in~\eqref{eq:proof-ons-static-termbref}, we have
\begin{align}
\term{b} \leq{}& \frac{1}{12(1+\beta)} \sum_{t=1}^T \left(\big(\nabla \Lh_t(\thetah_t)^\top(\thetah_t-\thetah_{t,i})\big)^2 +  \E_t\left[\left(\nabla \Lh_t(\thetah_t)^\top(\thetah_t-\thetah_{t,i})\right)^2\right]\right)\notag\\
{}&+ 24(1+\beta) \log \frac{\sqrt{2T+1}}{\delta} + 4SR\sqrt{\log\frac{2T+1}{\delta^2}}\label{eq:proof-meta-DRE-3}
\end{align}

Combining the upper bound for term~(a) and term~(b) with~\eqref{eq:proof-meta-DRE-1} yields,
\begin{align*}
{}&\sum_{t=1}^T\Ltld_t(\thetah_t) - \sum_{t=1}^T\Ltld_t(\thetah_{t,i})\\
\leq{}& \underbrace{\frac{1}{4(1+\beta)}\sum_{t=1}^T \Big(\nabla \Lh_t(\thetah_t)^\top(\thetah_t-\thetah_{t,i})\Big)^2 - \frac{5}{12(1+\beta)} \E_t\left[\left(\nabla \Lh_t(\thetah_t)^\top(\thetah_t-\thetah_{t,i})\right)^2\right]}_{\mathtt{term~(c)}}\\
{}&+ \frac{24(1+\beta) S^2R^2(C'_T)^2}{\ln K} + \frac{1}{6(1+\beta)} + 2C'_T + 24(1+\beta) \log \frac{\sqrt{2T+1}}{\delta} + 4SR\sqrt{\log\frac{2T+1}{\delta^2}}.
\end{align*}
Then, according to Lemma~\ref{lem:ons-concentration-2}, we can further bound term (c) by
\begin{align*}
\mathtt{term~(c)} = {}&\frac{1}{4(1+\beta)}\sum_{t=1}^T \Big(\nabla \Lh_t(\thetah_t)^\top(\thetah_t-\thetah_{t,i})\Big)^2 - \frac{5}{12(1+\beta)} \E_t\left[\left(\nabla \Lh_t(\thetah_t)^\top(\thetah_t-\thetah_{t,i})\right)^2\right]\\
\leq{}& \frac{11S^2R^2}{1+\beta}\log\frac{\sqrt{2\abs{I}+1}}{\delta} + \frac{4S^2R^2}{3(1+\beta)}\sqrt{\log\frac{2T+1}{\delta^2}},
\end{align*}
which implies
\begin{align*}
{}&\sum_{t=1}^T\Ltld_t(\thetah_t) - \sum_{t=1}^T\Ltld_t(\thetah_{t,i})\\
\leq {}&\frac{24(1+\beta) S^2R^2(C'_T)^2}{\ln K} + \frac{1}{6(1+\beta)} + 2C'_T +  C'_1 \log \frac{\sqrt{2T+1}}{\delta} +  C'_2\sqrt{\log\frac{2T+1}{\delta^2}}\\
={}&\O(\beta \log (T/\delta) + \beta\ln K).
\end{align*}
In above, $C'_1 = 24(1+\beta)+ {11S^2R^2}/({1+\beta})$ and $C'_2 = 4SR + {4S^2R^2}/({3(1+\beta)})$. In the last equality, we treat double logarithmic factors in $T$ as a constant, following previous studies~\citep{COLT'15:Luo-AdaNormalHedge,JMLR'16:closer-adaptive-regret}. We have completed the proof.
\end{proof}

\subsection{Proof of Theorem~\ref{lem:overall-theta}}
\label{appendix:thm-dynamic-regret}
This part presents the proof for Theorem~\ref{lem:overall-theta}, which is based on the dynamic regret for the base-learner (Lemma~\ref{lem:ons-dynamic-theta}) and that for the meta-learner (Lemma~\ref{lem:meta-DRE-interval}) presented in Appendix~\ref{appendix:ommited-algorithm-details}.
\subsubsection{Main Proof}
\begin{proof}[Proof of Theorem~\ref{lem:overall-theta}] When choosing the Bregman divergence function as $\psi(t) = \psi_{\LR}(t)\triangleq t \log t - (t+1)\log (t+1)$ and the hypothesis space $\H_{\theta}^\LR = \{\x\mapsto\exp(-\theta^\top\phi(\x))\mid \norm{\theta}_2\leq S\}$. The expected loss function $L_t(\theta)$ as defined by~\eqref{eq:expected-loss} is equal to 
\begin{align*}
L_t(\theta) = \frac{1}{2}\Big(\E_{\D_0} [\log(1+e^{-\phi(\x)^\top\theta})] + \E_{\D_t}[\log(1+e^{\phi(\x)^\top\theta})]\Big).
\end{align*} 
Let $\theta_t^* \triangleq \argmin_{\theta\in\R^d} L_t(\theta)$. Since the ground-truth density ratio $r_t^*(\x) = \D_t(\x)/\D_0(\x)$ function is contained in the hypothesis space as $\D_t(\x)/\D_0(\x)\in\H_{\theta}^\LR$. One can show that  $\D_t(\x)/\D_0(\x) = \exp(-(\theta_t^*)^\top\phi(\x))$ and $\theta_t^*\in\Theta = \{\theta\in\R^d\mid \norm{\theta}_2\leq S\}$. 

Then, for the base-learner $\Ecal_i$ running on any interval $I_i$, Lemma~\ref{lem:ons-dynamic-theta} in Appendix~\ref{appendix:proof-of-ons} shows that its prediction $\thetah_{t,i}$ is comparable with the best model at every iteration $\theta_t^*$ with an $\O(\abs{I_i}(V_{I_i}^\theta)^2)$ cost:
\begin{align}
\sum_{t\in I_i} \Ltld_t(\thetah_{t,i}) - \sum_{t\in I_i} \Ltld_t(\theta_t^*)\leq \O\left(d\beta\log \big({\abs{I_i}}/{{\delta}}\big) + \abs{I_i} \big(V^\theta_{I_i}\big)^2+ \frac{{\abs{I_i}\log({dT}/{\delta})}}{{N_0}}\right),\label{eq:appendix-proof-theorem-base}
\end{align}
where $V^\theta_{I_i} = \sum_{t = s_i+1}^{e_i} \norm{\theta_t^* - \theta_{t-1}^*}_2$ is the path length measuring the fluctuation of the minimizers.

Besides, Lemma~\ref{lem:meta-DRE-interval} in Appendix~\ref{appendix:proof-of-meta-learner} ensures that the prediction $\thetah_t$ of the meta-learner is comparable with the prediction $\thetah_{t,i}$ of each base-learner $\Ecal_i$ on the associated interval $I_i$ with an $\O(\log T)$.
\begin{align}
\label{eq:appendix-proof-theorem-meta}
\sum_{t\in I_i} \Ltld_t(\thetah_t) - \sum_{t\in I_i} \Ltld_t(\thetah_{t,i}) \leq \O\Big(\beta \log (T/\delta) + \beta\ln K\Big).
\end{align}
A direct combination of~\eqref{eq:appendix-proof-theorem-base} and~\eqref{eq:appendix-proof-theorem-meta} shows that the meta-learner is able to track the best prediction $\theta_t^*$ on any interval $I_i\in\C$. Actually, by exploiting the structure of the geometric covering~\eqref{eq:geometric-covering}, it can be shown that such a property can be extended to arbitrary interval $I\in[T]$ as the following lemma, whose proof is deferred to Appendix~\ref{appendix:use-lemma-proof-theorem2}.

\begin{myLemma}
\label{lem:GC-dynamic}
With probability at least $1-5\delta$ for any $\delta\in(0,1)$, for any interval $[s,e]\in[T]$, the meta-learner's prediction $\thetah_t$ returned by~\eqref{eq:AdaMLprod-reweight-meta} ensures,
\begin{align*}
{}&\sum_{t\in I} \Ltld_t(\thetah_t) - \sum_{t\in I} \Ltld_t(\theta_t^*)\\
 \leq{}& \frac{R^2}{4}\abs{I} (V_I^\theta)^2 + 2\log \abs{I}\left(C^{(1)}_T(\delta') + C^{2}_T(\delta')\right)  + \frac{16\abs{I}\ln (\nicefrac{(d+1)\log\abs{I}T}{\delta})}{N_0} + o\left(\frac{\abs{I}\log(dT/\delta)}{N_0}\right)
\end{align*}
where $V^\theta_I = \sum_{t=s+1}^e\norm{\theta_{t-1}^* - \theta_{t}^*}_2$ and $\delta' = \nicefrac{\delta}{2\log\abs{I}}$. The coefficient $C^{(1)}_{T}(\delta) = \O\left(d\beta\log (T/{\delta})\right)$ and $C_T^{(2)}(\delta) = \O\Big(\beta \log (T/\delta)\Big)$ is defined in Lemma~\ref{lem:ons-static} and Lemma~\ref{lem:meta-conterpart}, respectively.
\end{myLemma}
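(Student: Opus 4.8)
The plan is to combine the per-interval base-learner guarantee (Lemma~\ref{lem:ons-dynamic-theta}), or more precisely its intermediate step~\eqref{eq:proof-lem:ons-dynamic-theta}, with the meta-learner tracking guarantee (Lemma~\ref{lem:meta-DRE-interval}), using the key combinatorial property of the geometric covering scheme: any interval $I=[s,e]\subseteq[T]$ can be partitioned into at most $O(\log|I|)$ consecutive blocks $I = I_1 \cup I_2 \cup \cdots \cup I_m$, each of which belongs to the covering $\mathcal{C}$ (this is the standard Hazan–Seshadhri decomposition). First I would invoke this decomposition and write $m \le 2\lceil\log|I|\rceil$ (or a similar explicit bound). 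On each block $I_j \in \mathcal{C}$ there is a base-learner $\mathcal{E}_{i_j}$ whose active interval is exactly $I_j$, so I can chain: the regret of the final prediction $\widehat\theta_t$ against $\theta_t^*$ on $I_j$ is at most (regret of $\widehat\theta_t$ vs.\ $\widehat\theta_{t,i_j}$ on $I_j$, bounded by $C_T^{(2)}$ via Lemma~\ref{lem:meta-DRE-interval}) plus (regret of $\widehat\theta_{t,i_j}$ vs.\ $\theta_t^*$ on $I_j$, which is the base-learner dynamic regret).

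Second, for the base-learner term on each block I would not use the final statement of Lemma~\ref{lem:ons-dynamic-theta} directly, but rather re-run its proof to the point of~\eqref{eq:proof-lem:ons-dynamic-1}–\eqref{eq:proof-lem:ons-dynamic-theta}: the static regret $C^{(1)}_{|I_j|}(\delta)$ against the block-average $\bar\theta_{I_j}^*$, plus $\sum_{t\in I_j}(\widetilde L_t(\bar\theta_{I_j}^*) - \widetilde L_t(\theta_t^*)) \le \frac{16|I_j|\ln((d+1)/\delta)}{N_0} + \frac{R^2}{4}|I_j|(V^\theta_{I_j})^2 + o(\cdot)$. Summing over the $m$ blocks, the static-regret pieces sum to at most $m \cdot C^{(1)}_T(\delta/\cdots) = O(\log|I|)\cdot C^{(1)}_T$ (using monotonicity of $C^{(1)}$ in the interval length and a union bound that splits $\delta$ across the $O(\log|I|)$ blocks, which only costs an extra $\log\log$ factor absorbed into the $\widetilde{O}$), the meta pieces similarly sum to $O(\log|I|)\cdot C^{(2)}_T$, the $N_0$ terms sum to $\frac{16|I|\ln(\cdots)}{N_0} + o(\cdot)$, and crucially the path-length pieces satisfy $\sum_{j=1}^m |I_j|(V^\theta_{I_j})^2 \le |I|\sum_j (V^\theta_{I_j})^2 \le |I|\big(\sum_j V^\theta_{I_j}\big)^2 = |I|(V^\theta_I)^2$ since the blocks are disjoint and consecutive so their path lengths add up to (at most) $V^\theta_I$. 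This last superadditivity step is what keeps the $V^\theta_I$ dependence from blowing up by the number of blocks.

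The main obstacle I anticipate is bookkeeping the probability budget and the interval-length arguments of $C^{(1)}$ and $C^{(2)}$ cleanly: Lemma~\ref{lem:ons-dynamic-theta} and Lemma~\ref{lem:meta-DRE-interval} are each stated to hold with probability $1-O(\delta)$ for a \emph{fixed} interval, so to get a bound that holds simultaneously on all $O(\log|I|)$ blocks of the \emph{particular} interval $I$ I must apply each lemma with a rescaled confidence parameter ($\delta/(2\log|I|)$ for the base-learners, $\delta/\log|I|$ for the meta-learner) and take a union bound; the resulting $\log\log$ overhead inside the logarithms is harmless. A secondary subtlety is that Lemma~\ref{lem:meta-DRE-interval} bounds $\sum_{t\in I_j}\widetilde L_t(\widehat\theta_t) - \sum_{t\in I_j}\widetilde L_t(\widehat\theta_{t,j})$ with $K=|\mathcal{C}|\le T$, so the $\ln K = O(\log T)$ factor is already baked into $C^{(2)}_T$ and need not be tracked separately. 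Everything else is the routine chaining and summation described above; collecting terms and writing $V_I^\theta$ for the path length over $I$ yields exactly the claimed bound, and the failure probability is $5\delta$ (three from the base-learner analysis per block as in Lemma~\ref{lem:ons-dynamic-theta}, two from the meta-learner as in Lemma~\ref{lem:meta-conterpart}, with the union bound over blocks folded into the rescaled $\delta$'s).
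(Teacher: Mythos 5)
Your proposal is correct and follows essentially the same route as the paper's own proof. Both start from the per-block bound obtained by adding the meta-regret of Lemma~\ref{lem:meta-DRE-interval} to the base-learner dynamic regret of Lemma~\ref{lem:ons-dynamic-theta} (with the explicit constants from the intermediate step~\eqref{eq:proof-GC-dynamic-1}), both decompose $I$ into $O(\log|I|)$ consecutive blocks of $\mathcal{C}$ (the paper cites Lemma~3 of \citet{AISTATS'20:Zhang} rather than Hazan–Seshadhri, but it is the same covering fact with $p+q\leq 2\log|I|$), both rescale $\delta$ by the number of blocks and union-bound, and both use the superadditivity $\sum_j |I_j|(V_{I_j}^\theta)^2 \leq |I|\big(\sum_j V_{I_j}^\theta\big)^2 \leq |I|(V_I^\theta)^2$ to keep the path-length term from picking up a $\log|I|$ factor; your failure-probability accounting of $3\delta+2\delta=5\delta$ also matches the paper.
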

Then, we can use the bin partition~\citep[Lemma 30]{COLT'21:baby-strong-convex} to convert the dynamic regret on each interval to a dynamic regret over $T$ iterations. For self-containedness, we restate the lemma here. 
\begin{myLemma}[Lemma 30 of~\citep{COLT'21:baby-strong-convex}]
\label{lem:partition}
There exists a partition $\P$ of the time horizon $[T]$ into $M = \O(\max\{dT^{1/3} \big(U_T^\theta)^{2/3},1\}\big)$ intervals ,i.e., $\{I_i = [s_i,e_i]\}_{i=1}^M$ such that for any interval $I_i\in\P$, $U_{I_i}^\theta\leq D_{\max}/\sqrt{\abs{I_i}}$, where $U_{I_i}^\theta = \sum_{t=s_i+1}^{e_i}\norm{\theta_t^* - \theta_{t-1}^*}_1$ and $D_{\max} = \max_{t\in[T]}\norm{\theta_t^*}_\infty\leq \max_{t\in[T]}\norm{\theta_t^*}_2 \leq S$.
\end{myLemma}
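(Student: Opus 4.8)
The plan is to construct the partition $\P$ greedily from left to right and then bound the number of pieces by a short convexity (Jensen / power–mean) argument; this is essentially how one proves the bin-partition lemma cited from \citet{COLT'21:baby-strong-convex}. Concretely, set $s_1 = 1$, and having fixed the left endpoint $s_i$ of the $i$-th bin, let $e_i$ be the \emph{largest} index $e \ge s_i$ for which $U^\theta_{[s_i,e]} \le D_{\max}/\sqrt{e-s_i+1}$ still holds (equivalently $(e-s_i+1)\,(U^\theta_{[s_i,e]})^2 \le D_{\max}^2$); then put $s_{i+1} = e_i+1$ and iterate until $T$ is covered. Write $n_i = |I_i|$, $v_i = U^\theta_{I_i}$, and $d_i := \norm{\theta^*_{e_i+1}-\theta^*_{e_i}}_1$ for the non-last bins. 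By construction every bin satisfies $v_i \le D_{\max}/\sqrt{n_i}$, which is exactly the property required of $\P$, so the only remaining task is to bound $M$. Note $M \le T$ trivially since the bins are nonempty and partition $[T]$, and if $M = 1$ there is nothing further to prove, so assume $M \ge 2$.

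For the count I exploit the maximality of every bin except the last: since $[s_i,e_i+1]$ violates the constraint, $(n_i+1)(v_i+d_i)^2 > D_{\max}^2$, i.e. $n_i+1 > D_{\max}^2/w_i^2$ where $w_i := v_i+d_i > 0$ (the violation forces $w_i \ne 0$, which is the precise form of the intuition that a bin only closes at a jump). Summing over $i = 1,\dots,M-1$ and using $\sum_{i=1}^M n_i = T$ gives $D_{\max}^2\sum_{i=1}^{M-1} w_i^{-2} \le T + (M-1) \le 2T$. On the other hand the $w_i$ are assembled from pairwise disjoint consecutive increments $\norm{\theta^*_t-\theta^*_{t-1}}_1$ (each $v_i$ covers the transitions strictly inside $I_i$ and $d_i$ the single transition to $I_{i+1}$), so $\sum_{i=1}^{M-1} w_i \le U^\theta_T$; by convexity of $x \mapsto x^{-2}$ on $(0,\infty)$, $\sum_{i=1}^{M-1} w_i^{-2} \ge (M-1)^3/\bigl(\sum_{i=1}^{M-1} w_i\bigr)^2 \ge (M-1)^3/(U^\theta_T)^2$.

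Combining the two estimates yields $D_{\max}^2(M-1)^3 \le 2T\,(U^\theta_T)^2$, hence $M \le 1 + (2T)^{1/3}(U^\theta_T)^{2/3}/D_{\max}^{2/3} = \O\bigl(\max\{T^{1/3}(U^\theta_T/D_{\max})^{2/3},1\}\bigr)$; bounding $D_{\max}\le S$ (a constant) and absorbing the $\ell_1$-versus-$\ell_2$ path-length conversion into the $\O(\cdot)$ constant delivers the stated form with the $d$-factor. The main obstacle I anticipate is not the computation — it is a couple of lines — but the bookkeeping in the second paragraph: one must check that the greedy bins are well defined (in particular that a singleton interval is always admissible, so the procedure never stalls and $s_{i+1}$ is well defined), that violating the constraint at $e_i+1$ is the only way a non-last bin can close, and that the pieces $v_i$ and $d_i$ across all $i$ really are disjoint segments of the global path so that $\sum_{i<M} w_i$ telescopes into a quantity $\le U^\theta_T$ with no double counting of boundary transitions.
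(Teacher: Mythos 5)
The paper itself does not prove this lemma --- it is invoked as a black box from \citet{COLT'21:baby-strong-convex} --- so there is no in-paper argument to compare against; you are reconstructing the proof from scratch. Your greedy construction plus the Jensen/power-mean count is exactly the standard route, and the computation is correct through $D_{\max}^2(M-1)^3 \le 2T(U^\theta_T)^2$, hence $M \le 1 + (2T)^{1/3}(U^\theta_T)^{2/3}D_{\max}^{-2/3}$. The bookkeeping you flag as the real obstacle is handled correctly: a singleton bin has $U^\theta_{[s_i,s_i]}=0$ so it is always admissible and the greedy never stalls; maximality at $e_i+1$ forces $w_i>0$ for every non-last bin; and the $v_i$'s and the $d_i$'s are pairwise disjoint segments of the global $\ell_1$ path, so $\sum_{i<M} w_i \le U^\theta_T$ with no double-counting.

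The gap is in the final sentence. To convert $D_{\max}^{-2/3}$ into $\O(d)$ you need a \emph{lower} bound on $D_{\max}$ (roughly $D_{\max}\gtrsim d^{-3/2}$); invoking the upper bound $D_{\max}\le S$ runs in the wrong direction and only gives $D_{\max}^{-2/3}\ge S^{-2/3}$, which is useless for an $\O(\cdot)$ claim on $M$. Nor does an $\ell_1$-versus-$\ell_2$ conversion produce a $d$ here: $U^\theta_T$ is already the $\ell_1$ path length inside this lemma, and the $\lVert\cdot\rVert_1 \le \sqrt d\,\lVert\cdot\rVert_2$ step is what the paper uses \emph{afterwards} to re-express the final answer in terms of $V^\theta_T$, not to justify a $d$ factor in $M$ given $U^\theta_T$. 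The clean and correct conclusion of your argument is $M=\O\bigl(\max\{1,\ T^{1/3}(U^\theta_T/D_{\max})^{2/3}\}\bigr)$; the stated $d$-dependent form would require, on top of what you prove, an explicit lower bound on $D_{\max}$ (or is a coarse over-count inherited from the citation), and that should be flagged rather than glossed over with ``absorbing constants.''
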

Then, we can decompose the overall dynamic regret of our method into those over the key partition $\P$. Specifically, with probability at least $1-5\delta$, we have 
\begin{align*}
{}&\sum_{t=1}^T \Ltld_t(\thetah_t) - \sum_{t=1}^T \Ltld_t(\theta_t^*) \\
={}& \sum_{I_i\in\P}\left(\sum_{t\in I_i} \Ltld_t(\thetah_t) - \sum_{I_i} \Ltld_t(\theta_t^*)\right)\\
    \leq{}& \sum_{I_i\in\P} \frac{R^2}{4}\abs{I_i} (V_{I_i}^\theta)^2 + \sum_{I_i\in\P} \frac{16\abs{I_i}\ln \Big(\nicefrac{2(d+1)T\log \abs{I_i}M}{\delta}\Big)}{N_0}\\
    {}&\quad + 2\sum_{I_i\in\P} \log \abs{I_i}\left(C^{(1)}_T\Big(\nicefrac{\delta}{2M\log \abs{I_i}}\Big) + C^{2}_T\Big(\nicefrac{\delta}{2M\log \abs{I_i}}\Big)\right)\\
    \leq {}& \underbrace{\frac{MS^2R^2}{4}}_{\mathtt{term~(a)}} + \underbrace{2M\log T \left(C^{(1)}_T\Big(\nicefrac{\delta}{2M\log T}\Big)+ C^{(2)}_T\Big(\nicefrac{\delta}{2M\log T}\Big) \right)}_{\mathtt{term~(b)}} + \underbrace{\frac{16 T\ln \Big(\nicefrac{2(d+1)T\log TM}{\delta}\Big)}{N_0}}_{\mathtt{term~(c)}},
\end{align*}
where the second last inequality is due to Lemma~\ref{lem:GC-dynamic} and a union bound taken over all intervals in the key partition. The last inequality is due to Lemma~\ref{lem:partition} such that $V_{I_i}^\theta\leq U_{I_i}^\theta\leq S/\sqrt{I_i}$ for any interval $I_i\in\P$. Lemma~\ref{lem:partition} shows that we can bound the partition number by $M = \O(\max\{dT^{1/3}(U_T^{\theta})^{2/3}),1\}) = \O(\max\{d^{4/3}T^{1/3}(V_T^{\theta})^{2/3},1\})$. Then, we can further show that
\begin{align*}
{}&\mathtt{term~(a)} \leq \O\left(d^{4/3}\max\{T^{1/3}(V_T^{\theta})^{2/3},1\}\right),\\
{}&\mathtt{term~(b)} \leq \O\left(d^{4/3}\beta\log T\log (T/\delta)\max\{ T^{1/3}(V_T^{\theta})^{2/3},1\}\right),\\
{}&\mathtt{term~(c)} \leq \O\left(\frac{T\ln (dT/\delta)}{N_0}\right),
\end{align*}
which implies that the overall dynamic regret is bounded by 
\begin{align*}
\sum_{t=1}^T \Ltld_t(\thetah_t) - \sum_{t=1}^T \Ltld_t(\theta_t^*) \leq \O\bigg(d^{\frac{4}{3}}\beta \log T\log \big({T}/{\delta}\big)\cdot\max\{T^{\frac{1}{3}}(V_T^{\theta})^{\frac{2}{3}},1\} + \frac{T\ln (dT/\delta)}{N_0}\bigg).
\end{align*}
Hence, we complete the proof by showing $\tilde{L}_t(\theta_t^*) = \tilde{L}_t(r_t^*)$ under the realizable assumption and converting the variation of the model parameter $V_T^\theta = \sum_{t=2}^T \norm{\theta_t^* - \theta_{t-1}^*}_2$ to the variation of the feature distribution $V_T = \sum_{t=2}^T \norm{\D_t(\x)-\D_{t-1}(\x)}_1$ by Lemma~\ref{lem:V_T-conversion}.
\end{proof}

\subsubsection{Useful Lemmas}
\setcounter{myLemma}{12}
\begin{myLemma}
With probability at least $1-5\delta$ for any $\delta\in(0,1)$, for any interval $[s,e]\in[T]$, the meta-learner's prediction $\thetah_t$ returned by~\eqref{eq:AdaMLprod-reweight-meta} ensures,
\begin{align*}
{}&\sum_{t\in I} \Ltld_t(\thetah_t) - \sum_{t\in I} \Ltld_t(\theta_t^*)\\
 \leq{}& \frac{R^2}{4}\abs{I} (V_I^\theta)^2 + 2\log \abs{I}\left(C^{(1)}_T(\delta') + C^{2}_T(\delta')\right)  + \frac{16\abs{I}\ln (\nicefrac{(d+1)\log\abs{I}T}{\delta})}{N_0} + o\left(\frac{\abs{I}\log(dT/\delta)}{N_0}\right),
\end{align*}
where $V^\theta_I = \sum_{t=s+1}^e\norm{\theta_{t-1}^* - \theta_{t}^*}_2$ and $\delta' = \nicefrac{\delta}{2\log\abs{I}}$. The coefficient $C^{(1)}_{T}(\delta) = \O\left(d\beta\log (T/{\delta})\right)$ and $C_T^{(2)}(\delta) = \O\Big(\beta \log (T/\delta)\Big)$ is defined in Lemma~\ref{lem:ons-static} and Lemma~\ref{lem:meta-conterpart}, respectively.
\end{myLemma}
\setcounter{myLemma}{14}

\label{appendix:use-lemma-proof-theorem2}
\begin{proof}[Proof of Lemma~\ref{lem:GC-dynamic}]
A direct combination of Lemma~\ref{lem:ons-dynamic-theta} and~\ref{lem:meta-DRE-interval} shows that, with probability at least $1-5\delta$, we have the following dynamic regret guarantee on any interval $I_i = [s_i,e_i]\in\C$.
\begin{align}
{}&\sum_{t\in I_i} \Ltld_t(\thetah_t) - \sum_{t\in I_i} \Ltld_t(\theta_t^*)\notag \\
\leq {}& C_{I_i}^{(1)}(\delta) + C^{(2)}_T(\delta)  + \frac{R^2}{4}\abs{I_i} (V_{I_i}^\theta)^2 + \frac{16\abs{I_i}\ln ((d+1)\abs{I_i}/\delta)}{N_0} + o\left(\frac{\abs{I_i}\log(d\abs{I_i}/\delta)}{N_0}\right) ,\label{eq:proof-GC-dynamic-1}
\end{align}
where $V^\theta_{I_i} = \sum_{t = s_i +1}^{e_i}\norm{\theta_{t-1}^* - \theta_{t}^*}_2$. The coefficient $C^{(1)}_{I_i} = \O\left(d\beta\log (\abs{I_i}/{\delta})\right)$ is defined in Lemma~\ref{lem:ons-static} and $C_T^{(2)}(\delta) = \O\Big(\beta \log (T/\delta)\Big)$ is defined in Lemma~\ref{lem:meta-conterpart}.

For any interval $I = [s,e]\subseteq[T]$,~\citep[Lemma 3]{AISTATS'20:Zhang} showed that it can be partitioned into two sequences of disjoint and consecutive intervals, denoted by $I_{-p},\dots,I_0\in\C$ and $I_{1},\dots,I_q\in\C$, such that
\begin{align}
\abs{I_{-i}}/\abs{I_{-i+1}} \leq 1/2, \forall i\geq 1~\quad\mbox{ and }~\quad \abs{I_i}/\abs{I_{i-1}} \leq 1/2, \forall i\geq 2.\label{eq:proof-GC-dynamic-2}
\end{align}

Then, with probability at least $1-5\delta$, we can decompose the regret over $I = [s,e]\subseteq[T]$ as
\begin{align*}
\sum_{t\in I} \Ltld_t(\thetah_t) - \sum_{t\in I} \Ltld_t(\theta_t^*) 
={}& \sum_{i=-p}^q \left(\sum_{t\in I_i} \Ltld_t(\thetah_t) - \sum_{t\in I_i} \Ltld_t(\theta_t^*)\right) \\ 
\leq{}& (p+q) (C^{(1)}_T(\nicefrac{\delta}{p+q}) + C^{(2)}_T(\nicefrac{\delta}{p+q}))  + \frac{R^2}{4}\sum_{i=-p}^q\abs{I_i}(V^\theta_{I_i})^2 \\
{}&+ \sum_{i=-p}^q\frac{16\abs{I_i}\ln (\nicefrac{(d+1)(p+q)T}{\delta})}{N_0} + \sum_{i=-p}^qo\left(\frac{\abs{I_i}\log(dT(p+q)/\delta)}{N_0}\right).
\end{align*}
where the last inequality is due to~\eqref{eq:proof-GC-dynamic-1} and a union bound taken over all intervals. Then, due to the partition of the intervals~\eqref{eq:proof-GC-dynamic-2}, we can check that $p+q\leq 2\log \abs{I}$. Then, we can further bound the above displayed inequality by
\begin{align*}
\sum_{t\in I} \Ltld_t(\thetah_t) - \sum_{t\in I} \Ltld_t(\theta_t^*) \leq{}& 2\log\abs{I}\left(C_T^{(1)}(\nicefrac{\delta}{(2\log {\abs{I}})})+C_T^{(2)}(\nicefrac{\delta}{(2\log {\abs{I}})})\right) +  \frac{R^2}{4}\abs{I} (V_I^\theta)^2\\
{}& \quad + \frac{16\abs{I}\ln (\nicefrac{2(d+1)\log \abs{I}T}{\delta})}{N_0} + o\left(\frac{\abs{I}\log(\nicefrac{dT\log\abs{I}}{\delta})}{N_0}\right)\\
={}&  \frac{R^2}{4}\abs{I} (V_I^\theta)^2  + \O\left(d\beta\log\abs{I} \log (\nicefrac{T\log\abs{I}}{\delta}) + \frac{\abs{I}\log (\nicefrac{dT\log{\abs{I}}}{\delta})}{N_0}\right),
\end{align*}
where the first inequality is due to the fact that $\sum_{i=-p}^q V_{I_i}^2 \leq V^2_{I}$. This completes the proof.
\end{proof}

\begin{myLemma}
\label{lem:V_T-conversion}
Under the condition that $\D_t(\x)/\D_0(\x) \in \H_\theta^\LR \triangleq \{\exp(-\theta^\top\phi(\x))\mid \norm{\theta}_2\leq S\}$ for any $t\in[T]$ and $\E_{\x\sim\D_0}[\abs{\phi(\x)^\top\a}]\geq \alpha\norm{\a}_2$ for any $\a\in\R^d$ with a certain $\alpha>0$. Then, we have $\norm{\theta_{t-1}^*-\theta_t^*}_2\leq \beta\norm{\D_{t-1}(\x)-\D_t(\x)}_1/\alpha.$
\end{myLemma}
\begin{proof}[Proof of Lemma~\ref{lem:V_T-conversion}]
Under the realizable assumption such that $\D_t(\x)/\D_0(\x) \in \H_\theta^\LR$. One can verify that $\exp(-\phi(\x)^\top\theta_t^*) = \D_t(\x)/\D_0(\x)$. Then, we can convert the variation of the model $\norm{\theta_{t-1}^* - \theta_t^*}_2$ to the variation of the distribution $\norm{\D_{t-1}(\x) - \D_t(\x)}_1$.
\begin{align*}
{}&\norm{\theta_{t-1}^* - \theta_t^*}_2 \leq \frac{1}{\alpha}\E_{\x\sim \D_0}\big[\vert\phi(\x)^\top(\theta_{t-1}^* - \theta_t^*)\vert\big] \notag\\
\leq {}&\frac{\beta}{\alpha} \E_{\x\sim\D_0}\big[\abs{e^{\phi(\x)^\top\theta_{t-1}^*} - e^{\phi(\x)^\top\theta_t^*}}\big] \leq\frac{\beta}{\alpha}\E_{\x\sim\D_0}\left[\bigg\vert\frac{\D_t(\x)}{\D_0(\x)} - \frac{\D_{t-1}(\x)}{\D_0(\x)}\bigg\vert\right] = \frac{\beta}{\alpha} \norm{\D_t(\x)-\D_{t-1}(\x)}_1 ,
\end{align*}
where the first inequality is by the condition $\E_{\x\sim \D_0}[\abs{\phi(\x)^\top \a}]\geq \alpha \norm{\a}_2 $ and the second is by the mean value theorem and the boundedness of $\abs{\phi(\x)^\top(\theta_{t-1}^*-\theta_t^*)}\leq 2SR$ for any $\x\in\X$. The last inequality is due to the realizable assumption.

\end{proof}

\subsection{Proof of Theorem~\ref{thm:DRE-dynamic-final}}
\label{appendix:proof-of-averaged-excess-risk}
\begin{proof}[Proof of Theorem~\ref{thm:DRE-dynamic-final}]
Then, we can bound the overall excess risk over the model trained by IWERM~\eqref{lem:IWERM-continuous} by a combination of Proposition~\ref{lem:IWERM-continuous} and Theorem~\ref{thm:BD-conversion} as
\begin{align}
   {}& \frac{1}{T}\left(\sum_{t=1}^T R_t(\wh_t) - \sum_{t=1}^T R_t(\w_t^*)\right) \leq \frac{B \cdot C_T(\delta)}{\sqrt{N_0}} + \frac{2L}{T}\sum_{t=1}^T\E_{\x\sim S_0}\big[\vert \rioh_t(\x) - \rio^*_t(\x)\vert\big]\notag\\
   \leq{}& \frac{B \cdot C_T(\delta)}{\sqrt{N_0}} + \frac{4L\beta}{T}\sqrt{T\bigg[\sum_{t=1}^T\Ltld_t(\thetah_t) - \sum_{t=1}^T\Ltld_t(\theta_t^*)\bigg]_+} + \O\left(\frac{\log (T/\delta)\sqrt{d}}{\sqrt{N_0}}\right),\label{eq:final-1}
\end{align}
where the first inequality is due to Proposition~\ref{lem:IWERM-continuous} and $C_T(\delta) =  {2GDR}+ 5L \sqrt{{2\log(8T/\delta)}} = \O(\log(T/\delta))$. The second inequality is due to Theorem~\ref{thm:BD-conversion} with the choice of the logistic regression model $\psi(t) =  t\log t - (1+t)\log(1+t)$ and $\rioh_t(\x) = \exp(-\phi(\x)^\top\thetah_t)$ and the realizability assumption $r_t^*(\x) = \exp(-\phi(\x)^\top\theta_t^*)$. Then, by Theorem~\ref{lem:overall-theta}, we have 
\begin{align*}
\sum_{t=1}^T \Ltld_t(\hat{r}_t) - \sum_{t=1}^T \Ltld_t(r^*_t) \leq \O\bigg(d^{\frac{4}{3}}\log T\log \big({T}/{\delta}\big)\cdot\max\{T^{\frac{1}{3}}V_T^{\frac{2}{3}},1\} + \frac{T\ln (dT/\delta)}{N_0}\bigg),
\end{align*}
which implies
\begin{align}
\sqrt{T\left[\sum_{t=1}^T \Ltld_t(\thetah_t) - \sum_{t=1}^T \Ltld_t(\theta_t^*)\right]_+} \leq{}& \O\bigg(d^{\frac{2}{3}} \log \big({T}/{\delta}\big)\cdot\max\{T^{\frac{2}{3}}V_T^{\frac{1}{3}},\sqrt{T}\}    + T\sqrt{\frac{\ln (dT/\delta)}{N_0}}\bigg).~\label{eq:appendix-proof-theorem-3}
\end{align}
We complete the proof by substituting~\eqref{eq:appendix-proof-theorem-3} it into~\eqref{eq:final-1}.
\end{proof}

\subsection{Discussion on Tightness of the Bound}
\label{appendix:tight}
This part illustrates the tightness of our bound with the case where the labels of testing samples are available after the prediction.

\textbf{Continuous Shift with Labeled Feedback.} We consider a $T$-round online learning process. At iteration $t$, the learner submits her prediction $\wh_t\in\W$. At the same time, the environments pick the data pair $(\x_t,y_t)\sim\D_t$. Then, the learner updates her model with the loss function $\Rh_t(\w) = \ell(\w^\top\x_t,y_t)$ and obtains the prediction $\wh_{t+1} \in \W$ for the next iteration. The goal of the learner is to minimize the cumulative excess risk against the sequence of optimal models $\w_t^* \in \argmin_{\w\in\W}R_t(\w)$, that is,
\begin{align}
\mathfrak{R}_T(\{\wh_t\}_{t=1}^T)  = \frac{1}{T}\sum_{t=1}^T R_t(\wh_t) - \sum_{t=1}^T R_t(\w_t^*).
\end{align}
The goal is the same as that of standard continuous covariate shift, see the problem setup in~\eqref{eq:cumulative-excess-risk}. The key difference is that now the label of testing data is \emph{available}. Therefore, the noisy feedback $\Rh_t(\w)$ observed by the learner is an unbiased estimator with respect to the true risk $R_t(\w)$.

\textbf{Tightness of Our Bound.} The continuous shift with labeled feedback is essentially a non-stationary stochastic optimization problem~\citep{OR'15:dynamic-function-VT}. For general convex functions,~\citep{OR'15:dynamic-function-VT} showed that any gradient-based algorithm will suffer 
\begin{align}
\E[\mathfrak{R}_T(\{\wh_t\}_{t=1}^T)] = \Omega\left(T^{-1/3}(V_T^L)^{1/3}\right)~\label{eq:lower-bound}
\end{align}
in the worst case. In above, $V_T^L = \sum_{t=2}^T\max_{\w\in\W}\abs{R_t(\w) - R_{t-1}(\w)}$ measures the fluctuation of the risk function. For the same performance measure, our algorithm achieves 
\begin{align}
\label{eq:our-bound}
\mathfrak{R}_T(\{\wh_t\}_{t=1}^T) = \tilde{\O}\left(\frac{1}{\sqrt{N_0}}+\max\{T^{-1/3}V_T^{1/3},T^{-\frac{1}{2}}\}\right).
\end{align}

In the non-stationary case, i.e., $V_T \geq\Theta(T^{-\frac{1}{2}})$, our bound becomes $\tilde{\O}\left(1/\sqrt{N_0}+T^{-1/3}V_T^{1/3}\right)$. As discussed below Proposition~\ref{lem:IWERM-continuous}, the first term $\tilde{\O}(1/\sqrt{N_0})$ is hard to be improved. We thus focus on the tightness of the second term $\tilde{\O}(T^{-1/3}V_T^{1/3})$.

Although the definition of $V_T$ in our upper bound is different from the that of $V_T^L$ in the lower bound~\eqref{eq:lower-bound}, the two bounds share the same dependence on the time horizon $T$, which provides evidence that our bound is hard to improve. Indeed, consider the 1-dimensional case where the underlying distribution only shifts once from $\D_1(x,y)$ to $\D_2(x,y)$ at a certain time $t\in[T]$. In such a case, $V_T = \O(1)$ and our bound implies an $\O(T^{-1/3})$ rate for the second term of~\eqref{eq:our-bound}. On the other hand, the rate in the lower bound~\eqref{eq:lower-bound} becomes $\Omega(T^{-1/3})$. The same dependence on the time horizon $T$ indicates our bound is hard to improve. We leave a precise lower bound argument as the future work.

\subsection{On Other Choices of Bregman Divergence Function}
\label{appendix:other-choice}
In this section, we discuss how to apply the other two choice of the divergence function $\psi_{\mathsf{LS}} = (t-1)^2/2$ and $\psi_{\mathsf{KL}} = t\log t-t$ in our framework. Our analysis crucially relies on two conditions: 
\begin{enumerate}
    \item the Bregman divergence function $\psi$ satisfies the conditions in Theorem~\ref{thm:BD-conversion}.
    \item the induced loss function $\hat{L}_t(\theta)$ is an exp-concave and smooth function with respect to $\theta$.
\end{enumerate}

\textbf{Choice of $\psi_{\mathsf{LS}}(t) = (t-1)^2/2$.} Considering the Bregman divergence function $\psi(t) = (t-1)^2/2$ and the hypothesis space $H_\theta = \{\x\mapsto \phi(\x)^\top\theta\mid \norm{\theta}_2\leq S\}$, where $\phi:\X\rightarrow\R^d$ represents a specific basis function with bounded norm $\norm{\phi(\x)}_2\leq R$. Then, the loss $\Lh_t^{\psi}(\theta)$ as per~\eqref{eq:empirical-surrogate} becomes
\begin{align*}
 \Lh_t(\theta) = \frac{1}{2} \E_{S_0}[(\phi(\x)^\top\theta)^2] - \E_{S_t}[\phi(\x)^\top\theta].
\end{align*}  

The above configuration recovers the unconstrained least-squares importance fitting (uLSIF) method~\citep{JMLR'09:LSIF}. We can show such a choice of divergence function satisfy the condition required by Theorem~\ref{thm:BD-conversion} and enjoys favorable function properties.

\begin{itemize}
    \item{Conditions in Theorem~\ref{thm:BD-conversion}:} when Bregman divergence function is chosen as $\psi_{\mathsf{LS}}$, we have $\partial^2\psi_{\mathsf{LS}}(t) = 1$ and  $\partial^3\psi_{\mathsf{LS}}(t) = 0$. Thus, $\psi_{\mathsf{LS}}$ is a $1$-strongly convex function and $t\partial^3\psi_{\mathsf{LS}}(t)= 0$ for $t\in\R$.
    \item{Expconcavity and smoothness:} We have $ \nabla^2 \Lh_t(\theta) = \E_{S_0}[\phi(\x)\phi(\x)^\top]$. Then, when the input is upper bounded by $\norm{\phi(\x)}_2\leq R$ for all $\phi(\x)\in\X$, we have $\nabla^2 \Lh_t(\theta) \preccurlyeq R^2I_d$, which implies the smoothness. Furthermore, if the initial data ensure $\E_{S_0}[\phi(\x)\phi(\x)^\top]\succcurlyeq \alpha I_d$, the $\Lh_t(\theta)$ is strongly convex and thus is exp-concave as shown by~\citep[Lemma 2]{ICML'18:zhang-dynamic-adaptive}. We note that the regularity condition on the offline data $\E_{S_0}[\phi(\x)\phi(\x)^\top]\succcurlyeq \alpha I_d$ is used in the analysis of LSIF algorithm~\citep[Lemma 1]{JMLR'09:LSIF}.
\end{itemize}

\textbf{Choice of $\psi_{\mathsf{KL}} = t\log t-t$.} Considering the Bregman divergence function $\psi(t) = t\log t-t$ and the hypothesis $H_\theta = \{\x\mapsto \phi(\x)^\top\theta\mid \norm{\theta}_2\leq S\}$, where $\phi:\X\rightarrow\R^d_+$ represents a specific basis function with bounded norm $\norm{\phi(\x)}_2\leq R$. Then, the loss $\Lh_t^{\psi}(\theta)$ as per~\eqref{eq:empirical-surrogate} becomes
\begin{align*}
\Lh_t(\theta) = \E_{S_0}[\phi(\x)^\top\theta)] - \E_{S_t}[\log (\phi(\x)^\top\theta)]. 
\end{align*}
To ensure $\hat{L}_t(\theta)$ is well-defined, we further require the density function $\hat{r}(\x) = \phi(\x)^\top\theta > 1/\beta$ for any $\x\in\X$ and $\theta\in\Theta\triangleq \{\theta\mid \norm{\theta}_2\leq S\}$ with a certain positive constant $\beta>0$. The above configuration recovers the UKL method~\citep{NIPS'07:UKL} with the generalized linear model. We can show such a choice of divergence function satisfy the condition required by Theorem~\ref{thm:BD-conversion} and enjoys favorable function properties.

\begin{itemize}
\item{Conditions in Theorem~\ref{thm:BD-conversion}:} When Bregman divergence function is chosen as $\psi_{\mathsf{KL}}$, we have $\partial^2\psi_{\mathsf{KL}}(t) = 1/t$ and $\partial^3\psi_{\mathsf{KL}}(t) = -1/t^2$. When the input of the loss function satisfies $t\geq \beta$,  $\psi_{\mathsf{KL}}(t)$ is a $1/\beta$-strongly convex function, and we can also check that $t\partial^3\psi_{\mathsf{KL}}(t)\leq 0$ for all $t\in\R_+$.
\item{Expconcavity and smoothness:} We have $\nabla^2 \hat{L}_t(\theta) = \E_{S_t}[\phi(\x)\phi(\x)^\top / (\phi(\x)^\top\theta)^2 ].$ Then, since the output of the density ratio function $\hat{r}(\x) = \phi(\x)^\top\theta \geq 1/\beta$ for any $\x\in\X$ and $\theta\in\Theta$ and the boundedness of the input feature $\norm{\phi(\x)}_2\leq R$, we have $\nabla^2 \hat{L}_t(\theta) \preccurlyeq \beta^2R^2 I_d$ for any $\theta\in\Theta$. Then, the loss function is a smooth function. Besides, under the regularity condition $\E_{S_0}[\x\x^\top]\succcurlyeq \alpha I_d$ of the initial data and again the boundedness of $\Theta$ and $\norm{\phi(\x)}_2\leq R$, the loss function is also a strongly convex function by $\nabla^2 \Lh_t(\theta)\succcurlyeq \alpha/(SR)^2 I_d$, which indicates the expconcavity of the loss function.
\end{itemize}

\section{Technical Lemmas}
\label{sec:lemmas}
This section presents several useful technical lemmas used in the proof.
\begin{myLemma}
\label{lem:conversion}
For any $\a,\b\in\R^d$, we have $(\a+\b)(\a+\b)^\top \preccurlyeq 2(\a\a^\top+\b\b^\top)$, where for any matrix $A,B\in\R^{d\times d}$, $A\preccurlyeq B $ indicates $B-A$ is a positive semi-definite matrix.
\end{myLemma}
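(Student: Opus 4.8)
The plan is to prove the claimed matrix inequality by exhibiting the difference $2(\a\a^\top+\b\b^\top) - (\a+\b)(\a+\b)^\top$ explicitly as a rank-one (hence positive semi-definite) matrix. Concretely, I would first expand $(\a+\b)(\a+\b)^\top = \a\a^\top + \a\b^\top + \b\a^\top + \b\b^\top$, and then compute
\[
2(\a\a^\top+\b\b^\top) - (\a+\b)(\a+\b)^\top = \a\a^\top - \a\b^\top - \b\a^\top + \b\b^\top = (\a-\b)(\a-\b)^\top.
\]
The second step is just recognizing the reverse factorization of the outer product.

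To conclude, I would recall the definition of $\preccurlyeq$: it suffices to check that $(\a-\b)(\a-\b)^\top$ is positive semi-definite, i.e. that $\x^\top (\a-\b)(\a-\b)^\top \x \geq 0$ for every $\x \in \R^d$. But $\x^\top (\a-\b)(\a-\b)^\top \x = \big(\inner{\a-\b}{\x}\big)^2 \geq 0$, so the matrix is PSD and the inequality follows. There is essentially no obstacle here: the only "step" is the algebraic identity above, and the rest is the standard one-line verification that a Gram/outer-product matrix is PSD. I would keep the write-up to three or four lines.

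\begin{proof}[Proof of Lemma~\ref{lem:conversion}]
For any $\a,\b\in\R^d$, a direct expansion gives
\[
2(\a\a^\top+\b\b^\top) - (\a+\b)(\a+\b)^\top
= \a\a^\top - \a\b^\top - \b\a^\top + \b\b^\top
= (\a-\b)(\a-\b)^\top.
\]
For any $\x\in\R^d$ we have $\x^\top (\a-\b)(\a-\b)^\top \x = \big(\inner{\a-\b}{\x}\big)^2 \geq 0$, so $(\a-\b)(\a-\b)^\top$ is positive semi-definite. Hence $2(\a\a^\top+\b\b^\top) - (\a+\b)(\a+\b)^\top \succcurlyeq 0$, which is exactly the claim $(\a+\b)(\a+\b)^\top \preccurlyeq 2(\a\a^\top+\b\b^\top)$.
\end{proof}
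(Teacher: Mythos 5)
Your proof is correct and uses the same argument as the paper: expanding the difference to recognize it as the outer product $(\a-\b)(\a-\b)^\top$ and then verifying positive semi-definiteness via the quadratic form. The only cosmetic difference is that you compute the matrix identity first and then apply the quadratic-form test, whereas the paper folds the expansion directly into $\x^\top(\cdot)\x$; the content is identical.
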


\begin{proof}
For any $\x\in\R^d$, $\x^\top\big(2(\a\a^\top+\b\b^\top) - (\a+\b)(\a+\b)^\top \big)\x =\x^\top\big(\a\a^\top+\b\b^\top - \a\b^\top - \b\a^\top \big)\x$ $=\x^\top\big((\a-\b)(\a-\b)^\top\big)\x \geq 0$, which completes the proof.
\end{proof}

\begin{myLemma}[Theorem 4 of~\citep{COLT'15:ONS-high-prob}]
\label{lem:bernstein-self-normalized}
Let $\{Z_i:i\geq 1\}$ be a martingale difference with the filtration $\mathfrak{F} = \{\mathcal{F}_n:n\geq 1\}$ and suppose $\abs{Z_i} \leq R$ for all $i\geq 1$. Then, for any $\delta\in(0,1)$, $\rio>0$, with probability at least $1-\delta$,
\begin{align*}
\left\vert\sum_{i=1}^t Z_i\right\vert \leq \rio\left(\sum_{i=1}^t Z_i^2 + \sum_{i=1}^t\E[Z_i^2\given \mathcal{F}_{i-1}]\right) + \frac{1}{\rio} \log \frac{\sqrt{2t+1}}{\delta} + R\sqrt{\log\frac{2t+1}{\delta^2}}.
\end{align*}

\end{myLemma}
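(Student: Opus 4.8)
The statement is the standard Bernstein-type self-normalized concentration inequality for bounded martingale differences, and the plan is to prove it by the \emph{method of mixtures} (pseudo-maximization). There are three ingredients: an exponential supermartingale that simultaneously produces both the empirical quadratic variation $\sum_i Z_i^2$ and its predictable counterpart $\sum_i\E[Z_i^2\mid\mathcal{F}_{i-1}]$; a Gaussian mixture over the free parameter so that the conclusion holds for an arbitrary (possibly data-dependent) $\rho>0$; and a final rearrangement by AM--GM.

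First I would build the supermartingale. The key elementary fact is that $e^{x-x^2}\le 1+x+x^2$ for \emph{every} $x\in\R$ (the $-x^2$ in the exponent keeps the left-hand side bounded, so this is global, not restricted to $|x|\le1$). Applying it with $x=\lambda Z_i$, taking the conditional expectation, and using that $\{Z_i\}$ is a martingale difference gives $\E\!\big[e^{\lambda Z_i-\lambda^2 Z_i^2}\mid\mathcal{F}_{i-1}\big]\le 1+\lambda^2\,\E[Z_i^2\mid\mathcal{F}_{i-1}]\le \exp\!\big(\lambda^2\,\E[Z_i^2\mid\mathcal{F}_{i-1}]\big)$ for all $\lambda\in\R$. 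Hence, for each fixed $\lambda\in\R$,
\[
M_t(\lambda)\;\triangleq\;\exp\!\Big(\lambda\sum_{i=1}^t Z_i-\lambda^2\sum_{i=1}^t Z_i^2-\lambda^2\sum_{i=1}^t\E[Z_i^2\mid\mathcal{F}_{i-1}]\Big)
\]
is a nonnegative supermartingale with $M_0(\lambda)=1$. The one genuinely non-obvious step is to \emph{not} replace $Z_i^2$ by its conditional expectation: retaining $-\lambda^2 Z_i^2$ inside the exponent is precisely what makes the empirical term $\sum_i Z_i^2$ survive in the final bound, while the predictable term $\sum_i\E[Z_i^2\mid\mathcal{F}_{i-1}]$ is furnished by the Taylor remainder of $e^{\lambda Z_i}$; both terms are needed and this split is the crux.

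Next I would integrate out $\lambda$. Let $\mathrm{d}\mu(\lambda)=\sqrt{a/(2\pi)}\,e^{-a\lambda^2/2}\,\mathrm{d}\lambda$ be a centered Gaussian density with precision $a>0$ to be chosen, and set $\overline M_t=\int_{\R}M_t(\lambda)\,\mathrm{d}\mu(\lambda)$. Since $\mu$ is deterministic, Tonelli's theorem shows $\overline M_t$ is again a nonnegative supermartingale with $\overline M_0\le1$, so Ville's maximal inequality gives $\Pr[\exists t:\overline M_t\ge 1/\delta]\le\delta$. Completing the square in the Gaussian integral yields the closed form
\[
\overline M_t\;=\;\sqrt{\frac{a}{a+2W_t}}\,\exp\!\Big(\frac{\big(\sum_{i=1}^t Z_i\big)^2}{2(a+2W_t)}\Big),\qquad W_t\triangleq\sum_{i=1}^t Z_i^2+\sum_{i=1}^t\E[Z_i^2\mid\mathcal{F}_{i-1}],
\]
whose quadratic dependence on $\sum_i Z_i$ automatically produces a two-sided bound. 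On the event $\overline M_t\le1/\delta$ this rearranges to $\big(\sum_i Z_i\big)^2\le 2(a+2W_t)\big(\log\tfrac1\delta+\tfrac12\log\tfrac{a+2W_t}{a}\big)$. Using the deterministic bound $W_t\le 2tR^2$ and choosing $a=2R^2$ makes $\tfrac{a+2W_t}{a}\le 2t+1$, so $\big|\sum_i Z_i\big|\le\sqrt{2(2W_t+2R^2)\log\tfrac{\sqrt{2t+1}}{\delta}}\le 2\sqrt{W_t\log\tfrac{\sqrt{2t+1}}{\delta}}+2R\sqrt{\log\tfrac{\sqrt{2t+1}}{\delta}}$.

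Finally I would invoke $2\sqrt{W_t L}\le \rho W_t+\tfrac1\rho L$, valid for every $\rho>0$, with $L=\log\tfrac{\sqrt{2t+1}}{\delta}$ for the first summand, and bound the second by $R\sqrt{\log\tfrac{2t+1}{\delta^2}}$ using $\log\tfrac{2t+1}{\delta^2}=2L$; since $\rho(\sum_i Z_i^2+\sum_i\E[Z_i^2\mid\mathcal{F}_{i-1}])=\rho W_t$, this is exactly the claimed inequality. The supermartingale construction, Ville's inequality, the Gaussian integral, and the AM--GM step are all routine; I expect the only delicate point to be bookkeeping in this last paragraph, namely choosing the mixture precision $a$ and distributing $\rho$ over $W_t+R^2$ carefully enough that the residual $R$-dependent term emerges exactly as $R\sqrt{\log((2t+1)/\delta^2)}$ rather than with a spurious absolute constant. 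Everything conceptual rests on the elementary inequality $e^{x-x^2}\le1+x+x^2$ and the decision to keep $\sum_i Z_i^2$ uncompensated in the exponent.
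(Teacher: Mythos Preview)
The paper does not prove this lemma: it is quoted verbatim as Theorem~4 of \citet{COLT'15:ONS-high-prob} and used as a black box in the appendix, so there is no in-paper proof to compare against.

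Your method-of-mixtures argument is the standard route to this self-normalized Bernstein inequality and is conceptually correct: the supermartingale built from the global inequality $e^{x-x^2}\le 1+x+x^2$, the Gaussian pseudo-maximization, Tonelli plus Ville, and the closed-form evaluation of $\overline M_t$ are all sound, and keeping $-\lambda^2 Z_i^2$ uncompensated in the exponent is indeed the point that makes both $\sum_i Z_i^2$ and $\sum_i\E[Z_i^2\mid\mathcal F_{i-1}]$ appear. The one place where the arithmetic does not close is exactly the step you flagged as delicate: with $a=2R^2$ the residual term is $\sqrt{2aL}=2R\sqrt{L}$, but the stated bound has $R\sqrt{\log((2t+1)/\delta^2)}=R\sqrt{2L}=\sqrt{2}\,R\sqrt{L}$, so your claim ``bound the second by $R\sqrt{\log((2t+1)/\delta^2)}$'' goes the wrong way by a factor of $\sqrt{2}$. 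This is a bookkeeping-level constant mismatch rather than a conceptual gap, and since the paper merely cites the lemma, it has no bearing on anything downstream.
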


\end{document}